\documentclass[11pt]{article}
\usepackage[T1]{fontenc}
\usepackage[utf8]{inputenc}
\usepackage[margin=1in]{geometry}
\usepackage{booktabs}
\usepackage{amsmath}
\usepackage{amssymb}
\usepackage{amsthm}
\usepackage{critical-state-preprint}
\theoremstyle{plain}
\newtheorem{theorem}{Theorem}
\newtheorem{lemma}{Lemma}
\newtheorem{proposition}{Proposition}
\newtheorem{corollary}{Corollary}
\theoremstyle{definition}
\newtheorem{definition}{Definition}

\usepackage{enumitem}
\setlist{itemsep=2pt,topsep=3pt,parsep=0pt}
\usepackage[table]{xcolor}
\definecolor{bestrow}{gray}{0.92}
\usepackage{array}
\usepackage{multirow}
\usepackage{siunitx}
\usepackage[font=small,labelfont=bf,labelsep=period]{caption}
\newcommand{\floatnote}[1]{%
  \par\vspace{2pt}%
  {\footnotesize\raggedright\textit{Note.} #1\par}}
\newcommand{\readingguide}[1]{%
  \par\vspace{2pt}%
  {\footnotesize\raggedright\textit{Reading guide.} #1\par}}
\usepackage[pdfusetitle,bookmarksnumbered=true,colorlinks=true,linkcolor=blue!45!black,citecolor=blue!45!black,urlcolor=blue!45!black]{hyperref}
\usepackage{graphicx}
\usepackage{flafter}
\usepackage{float}
\usepackage{placeins}
\usepackage{fvextra}
\fvset{breaklines=true,breakanywhere=true,fontsize=\footnotesize}
\usepackage{etoolbox}
\usepackage{longtable}
\newcounter{notationtablecounter}
\usepackage{algorithm}
\usepackage{tikz}
\usepackage{pgfplots}
\pgfplotsset{compat=1.18}
\usepgfplotslibrary{groupplots}
\usetikzlibrary{arrows.meta,calc,decorations.pathreplacing,positioning}
\graphicspath{{figs/}}

\definecolor{tolblue}{HTML}{35618E}   
\definecolor{tolcyan}{HTML}{7CA6CC}   
\definecolor{tolgreen}{HTML}{4E8C63}  
\definecolor{tolyellow}{HTML}{B08A3C} 
\definecolor{tolred}{HTML}{B0553F}    
\definecolor{tolpurple}{HTML}{6E5A86} 
\definecolor{tolgrey}{HTML}{8C8C8C}   
\definecolor{tolsand}{HTML}{CBB884}   
\definecolor{tolteal}{HTML}{4E8C7C}   
\pgfplotsset{
  cbstyle/.style={
    tick label style={font=\footnotesize},
    label style={font=\small},
    title style={font=\small, yshift=1pt},
    legend style={font=\footnotesize, draw=gray!40, fill=white, fill opacity=0.85,
                  text opacity=1, row sep=-1pt, inner sep=2pt},
    legend cell align=left,
    grid=major, grid style={gray!18, line width=0.3pt},
    axis line style={gray!55}, tick style={gray!55},
    every axis plot/.append style={line width=0.9pt, mark size=2pt},
  },
}

\newcommand{\missfunc}{\texttt{miss\_func}}
\newcommand{\missparam}{\texttt{miss\_param}}

\newcommand{\methodname}{Critical-State RL}
\newcommand{\gemmamodel}{Gemma-4-26B-A4B}
\newcommand{\gemmasuffix}{26B-A4B}

\usepackage{microtype}
\title{\texorpdfstring{\methodname:\\[0.25em]{\fontsize{14}{18}\selectfont Diagnosing Trainable States for Multi-Turn Tool Use}}{Critical-State RL: Diagnosing Trainable States for Multi-Turn Tool Use}}
\author{\texorpdfstring{%
Zixiang Chen, Wenting Zhao, Zhepeng Cen, Akshara Prabhakar, Jielin Qiu,\\[0.15em]
Jianguo Zhang, Zhiwei Liu, Tulika Manoj Awalgaonkar, Liangwei Yang,\\[0.15em]
Shelby Heinecke, Silvio Savarese, Huan Wang\\[0.55em]
{\small Salesforce AI Research}
}{Zixiang Chen, Wenting Zhao, Zhepeng Cen, Akshara Prabhakar, Jielin Qiu, Jianguo Zhang, Zhiwei Liu, Tulika Manoj Awalgaonkar, Liangwei Yang, Shelby Heinecke, Silvio Savarese, Huan Wang}}
\date{}

\begin{document}
\maketitle

\begin{abstract}
Multi-turn tool-use failures can hinge on a single model call, yet reward variation alone does not reveal which call would benefit from training. When rewards depend on later interactions, their variation can reflect downstream randomness rather than differences between the current actions. We introduce \methodname{} to identify trainable states in multi-turn interactions. Given task-defined candidate calls and local rewards, the method assesses whether each reward captures the action's effect on task success and whether improvement over a reference policy is possible. It then uses nested sampling to separate action-dependent reward variation from continuation noise and optimizes the policy at the selected states using contextual-bandit training.
Experiments on the Berkeley Function Calling Leaderboard (BFCL) v4 compare training at diagnostic-selected states with training at alternative states. For missing-function tasks, the diagnostic selects the response after the tool becomes available; for missing-argument tasks, it selects the response before the missing argument is supplied. Training the selected responses improves performance, including about $14$ percentage points on the missing-function task, while training the alternatives leaves performance flat or worse.
We further apply the recipe across models and tasks, including logged repeat-call avoidance and memory management.
\end{abstract}

\section{Introduction}\label{sec:intro}

Multi-turn tool use involves a sequence of model decisions, yet a trajectory-level reward does not identify which decisions would benefit from training. For example, when a required tool becomes available partway through an interaction, the responses before and after its availability are both candidate training targets. Which response benefits from training can vary by model and task.

Prior work studied turn-level credit assignment~\cite{feng2025gigpo} and state selection for local training~\cite{pivotrl2026,li2026cso}. We focus on whether a candidate call offers an action-dependent signal for training. Observing both successful and failed rollouts does not establish this: when rewards depend on downstream interactions, the same current action can be followed by success or failure. Reward variation then mixes differences between actions with randomness in later interactions. The relevant question is whether changing the current action changes its expected reward.

We introduce \methodname{} to diagnose this signal before training. Given task-defined candidate model calls and a local reward, we assess whether the reward captures the action's effect on task success and whether the reference policy leaves room for improvement. On the model's own rollouts, we sample alternative actions at a fixed context, then hold each action fixed while resampling its continuation. This nested sampling separates action-dependent reward variation from continuation noise. The diagnostic ranks qualifying candidates by this signal. We then optimize the policy at the selected call using contextual-bandit training. Surrounding calls provide context or reward without receiving gradient (Figure~\ref{fig:units}).

We test the diagnostic's turn selections in a four-cell Gemma study on BFCL~\cite{patil2025bfcl}, training selected and alternative turns separately in two task categories (\S\ref{sec:caseA}). The diagnostic selects the response after tool availability for missing-function tasks (\missfunc{}) and the response before missing arguments arrive for missing-argument tasks (\missparam{}). Training the selected turns improves performance, while their alternatives stay flat or decline (\S\ref{sec:caseA-predictor}). Under paired deterministic evaluation, the \missfunc{} arm records \mbox{$0.14\!\to\!0.283\!\pm\!0.015$} across four seeds; the \missparam{} arm records $0.435\!\to\!0.473\!\pm\!0.010$.

The same training recipe applies to logged repeat-call avoidance, Nemotron missing-function interactions, and memory management (\S\ref{sec:applications}). Nemotron trains the decision before tool availability rather than Gemma's recovery response (\S\ref{sec:caseA-results}), illustrating why the training location is chosen for each setting.

\paragraph{Contributions.}
\begin{itemize}[leftmargin=*,labelindent=0pt,labelsep=0.5em]
  \item \textbf{A training-free diagnostic for selecting model calls.} We separate action-dependent reward variation from continuation noise before training. The theory links this signal to unrestricted local-label improvement potential under a small KL budget and bounds finite-sample selection errors.
  \item \textbf{Local training guided by the diagnostic.} We test the selected and alternative turns in a controlled four-cell study, apply the recipe across settings (\S\ref{sec:applications}), and establish a sufficient condition for small local updates to improve full-rollout return (Proposition~\ref{prop:local-transfer}).
\end{itemize}

\begin{figure}[H]\centering
\begin{tikzpicture}[x=1.15cm,y=0.65cm,
  cell/.style={draw,minimum size=0.55cm,inner sep=0pt,font=\scriptsize},
  on/.style={cell,fill=gray!55},
  off/.style={cell,fill=gray!8},
  crit/.style={cell,fill=tolblue!30,draw=tolblue!60!black,font=\scriptsize},
  rlab/.style={anchor=east,font=\small}]
  \foreach \i in {0,...,6}{\node[font=\scriptsize] at (\i,1.75) {$t_{\i}$};}
  \node[rlab] at (-0.6,1) {Trajectory-level};
  \foreach \i in {0,...,6}{\node[on] at (\i,1){};}
  \node[rlab] at (-0.6,0) {Fixed-role local RL};
  \foreach \i in {0,...,6}{\node[off] at (\i,0){};}
  \node[on] at (2,0){D};
  \node[rlab] at (-0.6,-1) {\textbf{\methodname{}}};
  \foreach \i in {0,...,6}{\node[off] at (\i,-1){};}
  \node[off] at (2,-1){D};
  \node[crit] at (5,-1){R};
  \draw[->,>=stealth,semithick,tolblue!75!black] (2,-1.5) to[bend right=18] (5,-1.5);
  \node[font=\scriptsize,text=tolblue!75!black] at (3.5,-2.65) {Missing-function example: decision (D) vs. recovery (R)};
\end{tikzpicture}
\caption{Gradient placement in a missing-function example $t_0,\dots,t_6$, with decision (D) before tool availability and recovery (R) after. Trajectory-level training updates every turn; fixed-role local RL always chooses the same role; \methodname{} diagnoses candidate model calls and trains the selected one.}
\label{fig:units}
\readingguide{Shaded $=$ receives gradient. The displayed fixed-role rule chooses decision; the four-cell study also evaluates an always-recovery rule.}
\end{figure}

\long\gdef\reportingprovenancedetails{%
\noindent\textbf{Gemma evaluation protocols.} The four-cell study's selected cells report means and standard deviations across training seeds $\{42,123,7,99\}$. All cells use checkpoint step~30 and paired 200-item BFCL v4 \texttt{multi\_turn} evaluation with \texttt{nt=1} deterministic decoding (\S\ref{sec:caseA-predictor}). The category-specific training study reports recovery-run peaks averaged over five evaluation seeds ($\le\!0.015$ std); the selected no-think Gemma-4-26B-A4B checkpoint records \missfunc{} $0.110\!\to\!0.44$ ($+33$pt; supplementary results below).

\paragraph{Four-cell diagnostic sampling.} Each scenario is one BFCL test item, sampled before training with 8 actions and 4 continuations per action. Table~\ref{tab:diagnostic-verification} reports mean corrected action variance $\widehat{\bar V}_{\mathrm{act}}$ estimated from pre-training samples using the estimator in App.~\ref{app:theory}. The mean includes every clean-prefix scenario, with zero and negative estimates retained. The \missparam{} recovery cell has 54 rather than 64 scenarios because 10 had varying model-generated prefixes; the other three cells each retain 64.

\paragraph{Gemma training setup and supplementary results.}\label{sec:caseA-nothink}
We trained one no-think \gemmamodel{}~\cite{gemmateam2026gemma4} model with category-specific turn assignments: \missfunc{} rows train recovery, while \missparam{} rows train decision. The recovery input comprises the decision context $+$ a sampled-but-frozen refusal $+$ the canonical bridge. Only the sampled recovery action receives gradient; its score checks the held call under the same lenient name$+$argument comparator. The in-domain recovery consequence score rises $0.26\!\to\!0.49\!\to\!0.70$ at steps $0/5/10$. Table~\ref{tab:gemma-nothink} reports BFCL results for the category-routed checkpoint.

The decision-turn pilot retained Nemotron's task, data, agent scaffold, and multiplicative reward (Eq.~\eqref{eq:m2-mult}; App.~\ref{sec:caseA-details}), with thinking disabled rather than chain-of-thought. For \missfunc{}, in-domain home-buying validation remains near $0.21$ (start $0.215$), and BFCL accuracy moves $0.110\!\to\!0.135$. Under $z_{\mathrm{tool}}=\mathrm{no\_write}(a_{\mathrm{dec}})\times\mathrm{consequence}(y_{\mathrm{rec}})$, the decision already passes the no-write gate at a high rate ($\mathrm{no\_write}$ near $0.97$), leaving little headroom in this gate, while $\mathrm{consequence}(y_{\mathrm{rec}})$ varies with recovery success. The sampled-group example in \S\ref{sec:noise-example} shows this variation among clean refusals. Missing-argument examples instead expose premature state-changing calls at the decision turn.

\begin{table}[!ht]
\centering
\small
\caption{Supplementary Gemma results: BFCL v4 target-cell accuracy for the no-think Gemma-4-\gemmasuffix{} study.}
\label{tab:gemma-nothink}
\setlength{\tabcolsep}{10pt}
\renewcommand{\arraystretch}{1.15}
\begin{tabular}{l S[table-format=1.3] S[table-format=1.3]}
\toprule
\textbf{Training configuration} & {Missing function} & {Missing argument} \\
\midrule
Starting checkpoint & 0.110 & 0.446 \\
Decision-turn trained & 0.135 & 0.485 \\
Category-routed, selected step~30 & \textbf{0.439} & 0.478 \\
\bottomrule
\end{tabular}
\floatnote{Starting and category-routed rows are means over 5 evaluation seeds; the decision-turn-trained row is single-eval. The category-routed checkpoint is selected by official BFCL v4 weighted full-suite Overall. Its target-cell mean$\pm$std values are $0.439\!\pm\!0.012$ and $0.478\!\pm\!0.006$.}
\end{table}

Across three independent recovery-turn training runs, peak BFCL missing-function accuracies are $0.359\!\pm\!0.006$, $0.351\!\pm\!0.011$, and $0.439\!\pm\!0.012$, compared with $0.110\!\pm\!0.013$ at the start (mean$\pm$std over 5 evaluation seeds).

\FloatBarrier

\par\smallskip\noindent\textbf{Nemotron missing-function application.} The paired gain is $+4.4$pt under repeated deterministic evaluation ($n{=}3$; Nemotron start $0.408\!\pm\!0.013\to0.452\!\pm\!0.021$) and $+6.5$pt under same-day serving ($0.430\to0.495$). Training on an independently designed synthetic task yields a deterministic $+5.0$pt transfer with a $128$k context window. Nemotron-Super-120B checkpoint trajectories use single evaluations; paired serving checks support the reported gains.

\par\smallskip\noindent\textbf{Memory applications.} The selected xLAM checkpoint lies in a cluster of about $6$--$7$ similarly performing checkpoints; the Gemma application evaluates five data-mixture runs (App.~\ref{sec:caseB}).
}

\section{Related work}\label{sec:related}

\par\noindent\textbf{State selection for local training.}
PivotRL reused expert trajectories collected for supervised fine-tuning (SFT), profiled their states under a frozen reference policy, and retained low-mean states with nonzero verifier variance for group relative policy optimization (GRPO)~\cite{pivotrl2026}. Its verifier $r(s,a)$ rewarded locally acceptable actions rather than exact matches to demonstrations. Critical Step Optimization (CSO) identified candidate steps in failed policy trajectories with a process reward model, verified expert-proposed alternatives through policy continuations, and trained on the resulting step-level preference pairs~\cite{li2026cso}. \methodname{} starts from task-defined candidate model calls and local labels, then separates action-dependent variation from continuation noise to select a call for training.

Karino et al.~\cite{karino2020critical} identified critical states by action-wise expected-return variance, using learned Q-values with uniform action weighting to favor exploitation. We apply the same variance-of-conditional-means principle under the base policy, estimating occurrence-local label means by nested fixed-prefix sampling to select a call for local RL. App.~\ref{sec:recurrent} analyzes the conditional recurrence regime relevant here.

\paragraph{Turn-level rewards and credit assignment.}
MT-GRPO used per-turn rewards~\cite{zeng2025mtgrpo}, while process-supervision methods graded intermediate steps~\cite{uesato2022solving,lightman2024lets}. Group-in-Group Policy Optimization (GiGPO) grouped actions from recurring environment states to estimate step-level relative advantages alongside episode-level credit~\cite{feng2025gigpo}. Other approaches redistributed returns~\cite{arjona2019rudder,ren2022rrd}, conditioned credit on future events or counterfactuals~\cite{harutyunyan2019hca,mesnard2021counterfactual}, or used hindsight and fixed-history comparisons, as in TCPO~\cite{liao2026tcpo}. ArCHer, AgentPRM, and SWEET-RL learned value or reward models~\cite{zhou2024archer,xi2025agentprm,zhou2025sweetrl}. TRACE derived tool-boundary temporal-difference rewards~\cite{tao2026trace}, CAST obtained turn advantages from game solvers~\cite{wang2026cast}, and CrEST combined turn-level verified advantages with token-level modulation~\cite{wang2026crest}.

Executed-replay auditing estimated policy-conditional step contributions by resampling actions and continuing execution~\cite{zhang2026creditwithoutgroundtruth}; LOTAPO replaced a turn and its retrieval observation with a fixed placeholder to estimate its effect~\cite{zhu2026lotapo}. At the execution-interface level, Agent Lightning exposed agent executions as training transitions~\cite{luo2025agentlightning}; v1.0 formalized a harness-owned interaction loop in which the trainer observed large language model (LLM) request/response pairs~\cite{he2026agentlightningv1}.

\paragraph{Training objectives after state selection.}
RLSTA trained the final response with a reward anchored in the model's full-information single-turn likelihood~\cite{chen2026rlsta}, whereas \methodname{} diagnoses which call to train. For a selected call, SFT learns from target actions, direct preference optimization (DPO) from preference pairs~\cite{rafailov2023dpo}, and rejection-sampling fine-tuning from the highest-reward sample in a candidate batch~\cite{touvron2023llama2}. These choices concern how to train once a location is selected. Our targeted-SFT comparison (\S\ref{sec:caseA-predictor}) evaluates this post-selection training stage.

\section{\methodname{}}
\label{sec:recipe}

\methodname{} first selects which model call to train, then updates that call using a local reward. We call a concrete model call an \emph{occurrence} and its local reward an \emph{occurrence-local label}. \textbf{The selected occurrence, not its trajectory, is the training unit:} surrounding calls supply context or reward without receiving gradient.

A \emph{candidate phase} groups contexts with a common task structure, such as a needed tool being unavailable. A phase qualifies as a \emph{critical state} for a model and task configuration when its return-aligned label meets three conditions at every retained occurrence: the label mediates the relationship between action and benchmark return (action-sufficiency); some action's label mean exceeds the reference-policy mean (headroom); and action-conditioned label means vary under the base policy (trainability). Task structure supplies the phase, label, and sufficiency argument; reference- and base-policy rollouts measure headroom and trainability. Figure~\ref{fig:recipe} separates this workflow, tested in \S\ref{sec:caseA}, from task-specific label construction (App.~\ref{sec:caseB}).

\paragraph{A mixed-reward group does not imply trainability.}\label{sec:background}
GRPO~\cite{shao2024deepseekmath} forms a critic-free baseline from each prompt's completion-group rewards. DAPO-style dynamic sampling~\cite{yu2025dapo} filters out binary-correctness groups whose sampled outputs are all correct or all incorrect. By the standard law of total variance, the split $\operatorname{Var}(z\mid x)=\operatorname{Var}_{a}Q(x,a)+\mathbb E_a\!\operatorname{Var}(z\mid x,a)$, where \emph{Q} is the label mean given context and action, separates action-dependent label variance from downstream noise~\cite{karino2020critical}. DAPO's mixed-outcome condition reflects $\operatorname{Var}(z\mid x)>0$ at the sample-group level. Even identical reward marginals and headroom can conceal different action signals (App.~\ref{app:theory}, Figure~\ref{fig:diagnostic-marginals}).

\paragraph{A concrete example: clean refusals with mixed rewards.}\label{sec:noise-example}
In the no-think Gemma \missfunc{} pilot, a sampled mixed-reward prompt group makes this distinction visible. In every surviving rollout, the decision response is a clean refusal with no tool call, yet reward swings $0\!\leftrightarrow\!1$ with within-group std $0.477$. A mixed-outcome filter would retain this group, although its reward differences do not distinguish the observed decision behavior. App.~\ref{sec:caseA-nothink} documents the pilot's training setup and results.

\paragraph{Nested within-prefix sampling.}
Without updating model parameters, we estimate both terms by nested sampling on the model's own rollout distribution: sample a policy prefix, draw candidate actions at that fixed prefix, then hold each action fixed while resampling its reward-only continuation. Action means and within-action variation separate signal from continuation noise; App.~\ref{app:theory} derives the finite-sampling correction and bounds ranking errors.

Ground-truth teacher-forced prefixes can mask recovery-turn trainability by saturating the action distribution at the candidate occurrence. Pooling raw labels across model-generated prefixes mixes action variation with upstream randomness. After candidate occurrences are screened for action-sufficiency and headroom, the diagnostic ranks qualifying occurrences under a common label by $\arg\max_{\text{turn }t}\operatorname{Var}_{a_t}\mathbb E[z\mid x_t,a_t]$; App.~\ref{sec:caseA-nothink} shows that different configurations can select different turns. The repeat-call action-only label and the Gemma continuation-noise case are documented in Table~\ref{app:web-repeat} and App.~\ref{sec:caseA-nothink}.

Under a small KL budget, action-dependent variance governs the best unrestricted local-label improvement to leading order at a fixed prefix and continuation kernel (Theorem~\ref{thm:diagnostic:gain}).

\begin{figure}[H]\centering
\begin{tikzpicture}[
  font=\footnotesize,
  box/.style={rounded corners=3pt, draw=black!35, line width=0.5pt,
              text width=2.55cm, minimum height=2.8cm, inner sep=5pt, align=center},
  core/.style={box, draw=tolblue!65!black, line width=1.1pt},
  arr/.style={->,>=stealth, line width=1pt, draw=black!45},
  alab/.style={font=\scriptsize\itshape, text=black!55, fill=white, inner sep=1pt}]
  \node[box, fill=tolpurple!14] (consequence) at (0,0){\textbf{Candidate model calls}\\[4pt]{\scriptsize task-defined phases and rollout contexts}};
  \node[box, fill=tolteal!18]  (reward) at (3.55,0)  {\textbf{Local reward}\\[4pt]{\scriptsize action-sufficient\\label from:\\action, continuation,\\or proxy}};
  \node[core, fill=tolblue!12] (diagnosis) at (7.1,0) {{\color{tolblue!70!black}\scriptsize(training-free)}\\[1pt]\textbf{Select a call}\\[4pt]{\scriptsize reference-policy\\headroom;\\action-dependent\\label variance}};
  \node[core, fill=tolyellow!30] (isolation) at (10.65,0) {\textbf{Occurrence-local RL}\\[4pt]{\scriptsize gradient on\\selected call only;\\surrounding calls\\receive no gradient}};
  \draw[arr] (consequence) -- (reward);
  \draw[arr] (reward) -- (diagnosis);
  \draw[arr] (diagnosis) -- (isolation);
  \tikzset{groupbrace/.style={decorate,decoration={brace,amplitude=5pt,raise=3pt},draw=black!40,semithick},
            grouplabel/.style={midway,above=7pt,font=\footnotesize\itshape,text=black!55}}
  \draw[groupbrace]
    (consequence.north west) -- (consequence.north west -| reward.north east)
    node[grouplabel]{task-specific label design};
  \draw[groupbrace]
    (diagnosis.north west) -- (diagnosis.north west -| isolation.north east)
    node[grouplabel]{core workflow};
\end{tikzpicture}
\caption{Task structure supplies the candidate phase and occurrence-local label; policy rollouts then measure headroom and action-dependent variance before local training. The BFCL four-cell study in \S\ref{sec:caseA} tests this measured selection step in Gemma; App.~\ref{sec:caseB} presents a contrasting label construction in a memory application.}
\label{fig:recipe}
\end{figure}

\long\gdef\criticalstateformalism{%
\subsection{When a candidate phase is a critical state}
\label{sec:recipe:formal}
Fix a policy $\pi_\theta$ and terminal return $R(\tau)\in[-R_{\max},R_{\max}]$. A phase $\sigma$ is an equivalence class of contexts, such as missing-function defer/recover, rather than a single fixed context. After routing, an episode has $K\ge1$ retained occurrences, each with context $x_i$, action $a_i$, and score $g_i=\nabla_\theta\log\pi_\theta(a_i\mid x_i)$. Its occurrence-local segment includes a bounded reward-only continuation.

\begin{definition}[Critical state]\label{def:cs}
Relative to a policy $\pi_\theta$ and a reference $\pi_0$, $\sigma$ is a \emph{critical state} if it admits an occurrence-local label $z_i$, supplied by the environment or an oracle/proxy and measurable within at most $h$ turns ($h\ll$ the episode length). With $Q(x,a):=\mathbb E[z\mid x,a]$, every retained context satisfies
\[
\underbrace{R \perp a_i \mid (x_i,z_i)}_{\text{(i) action-sufficiency}},\qquad
\underbrace{\max_a Q(x,a)-\mathbb E_{a\sim\pi_0}Q(x,a)\ge \Delta_{\mathrm{head}}>0}_{\text{(ii) headroom}},\qquad
\underbrace{\operatorname{Var}_{a\sim\pi_\theta(\cdot\mid x)}Q(x,a)>0}_{\text{(iii) trainability}} .
\]
\end{definition}

The headroom margin is task-specific. Contexts failing either (ii) or (iii) are split or filtered before ranking. The bounded label window prevents the vacuous choice $z:=R$ for any state. Benchmark-directed training also requires a return-aligned label; Theorem~\ref{thm:rb} gives the positive-slope condition for per-prompt gradient alignment. For no-think Gemma, $\mathrm{no\_write}$ near $0.97$ leaves little headroom in this gate; the pilot's training comparison favors recovery (App.~\ref{sec:caseA-nothink}).

For the missing-function decision, the full label is $z=\mathrm{no\_write}(a_{\mathrm{dec}})\times\mathrm{consequence}(y_{\mathrm{rec}})$, denoted $z_{\mathrm{tool}}$ in Eq.~\eqref{eq:m2-mult}; $R$ remains the benchmark terminal return. This operational proxy checks the held call and the decision's process gate; the benchmark also checks extraneous recovery calls. Exact sufficiency is the theoretical condition on the whole label. Sampled recovery makes $z$ stochastic even after the decision action is fixed.

\subsection{From the diagnostic to local improvement}
\paragraph{Action-dependent variance measures local improvement potential.}
Fix a prefix $x$ and freeze the continuation kernel that supplies $z$. Write $p(a)=\pi_\theta(a\mid x)$, $Q(a)=\mathbb E[z\mid x,a]$, $q=\mathbb E_p Q$, and $V_{\mathrm{act}}=\operatorname{Var}_p Q$. The score is $g(a)=\nabla_\theta\log p(a)$, with $\mathbb E_p g=0$; vector variance denotes trace covariance.

\begin{proposition}[Continuation noise and the occurrence gradient]\label{prop:diagnostic:noise}
For a fixed baseline $b=b(x)$ and finite second moments,
\begin{align*}
\mathbb E[g(z-b)]&=\mathbb E_p[g(Q-q)]=:\mu,\\
\operatorname{Var}\!\big(g(z-b)\big)
&=\operatorname{Var}_p\!\big(g(Q-b)\big)
+\mathbb E_p\!\left[\|g\|^2\operatorname{Var}(z\mid x,a)\right].
\end{align*}
Thus $V_{\mathrm{act}}=0$ implies $\mu=0$, even when sampled labels vary.
\end{proposition}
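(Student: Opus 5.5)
The proof conditions on the sampled action $a$ and applies the tower property and the law of total variance. The joint law is $a\sim p$, then $z\sim$ the frozen continuation kernel given $(x,a)$. The score $g(a)$ is a deterministic function of $a$, and the baseline $b$ is constant given $x$. So every quantity factors through the pair $(a,z)$ with $a$ as the conditioning variable.

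\textbf{Mean identity.} Conditional on $a$, we have $\mathbb E[g(z-b)\mid a]=g(a)\big(Q(a)-b\big)$, because $g$ and $b$ are fixed once $a$ is. Taking the outer expectation over $p$ gives $\mathbb E_p[g(Q-b)]$. We then write $Q-b=(Q-q)+(q-b)$ and use the score identity $\mathbb E_p g=0$. The constant $(q-b)$ term vanishes, which leaves $\mu=\mathbb E_p[g(Q-q)]$. This step also shows that the mean is independent of the baseline $b$.

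\textbf{Variance identity.} We apply the law of total variance in its trace-covariance form, conditioning on $a$:
\[
\operatorname{Var}\big(g(z-b)\big)=\operatorname{Var}_p\big(\mathbb E[g(z-b)\mid a]\big)+\mathbb E_p\big[\operatorname{Var}(g(z-b)\mid a)\big].
\]
The first term is $\operatorname{Var}_p(g(Q-b))$ by the conditional mean computed above. For the second term, given $a$ the vector $g(a)$ is fixed, so $\operatorname{Var}(g(z-b)\mid a)=\|g(a)\|^2\operatorname{Var}(z\mid x,a)$. This holds because the trace covariance of a fixed vector times a scalar random variable is the squared norm times the scalar variance. Finite second moments justify each step, namely $\mathbb E_p[\|g\|^2 z^2]<\infty$, so all expectations are finite and the decomposition is valid.

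\textbf{Consequence.} If $V_{\mathrm{act}}=\operatorname{Var}_p Q=0$, then $Q(a)=q$ for $p$-almost every $a$. Hence $g(Q-q)=0$ almost surely and $\mu=0$. The second variance term, however, can remain positive whenever $\operatorname{Var}(z\mid x,a)>0$. So sampled labels can vary, and the gradient estimator can have nonzero variance, while its expectation is zero.

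The only delicate point is the vector-valued bookkeeping: interpreting $\operatorname{Var}$ as trace covariance and checking integrability. These are routine under the stated moment assumption, so I do not expect a substantive obstacle.
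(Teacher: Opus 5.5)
Your proposal is correct and follows essentially the same route as the paper's proof: condition on $a$ to get the conditional mean $g(Q-b)$ and conditional trace variance $\|g\|^2\operatorname{Var}(z\mid x,a)$, then apply total expectation and total variance together with $\mathbb E_p g=0$. Your remarks on baseline invariance of the mean and on integrability simply spell out steps the paper leaves implicit.
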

\begin{proof}
Condition on $a$ to get mean $g(Q-b)$ and variance $\|g\|^2\operatorname{Var}(z\mid x,a)$. Total expectation and variance, with $\mathbb E_p g=0$, give the identities.
\end{proof}

\par\smallskip\noindent
\begin{minipage}{\linewidth}
\centering
\begin{tikzpicture}[font=\small,y=0.85cm,
  mean cell/.style={draw=tolgrey!60,minimum width=0.68cm,minimum height=0.43cm,inner sep=0pt}]
\node[font=\small\bfseries] at (2.9,1.20) {Action-conditioned means $Q(a)$};
\foreach \x/\i in {1.4/1,2.4/2,3.4/3,4.4/4}
  \node at (\x,0.80) {$a_{\i}$};
\node at (5.8,0.80) {$V_{\mathrm{act}}$};
\node[anchor=east] at (0.75,0.37) {Candidate 1};
\node[anchor=east] at (0.75,-0.16) {Candidate 2};
\foreach \x/\shade/\value in {1.4/0/0,2.4/0/0,3.4/28/1,4.4/28/1}
  \node[mean cell,fill=tolblue!\shade] at (\x,0.37) {$\value$};
\foreach \x/\shade/\value in {1.4/0/0,2.4/14/{\tfrac12},3.4/14/{\tfrac12},4.4/28/1}
  \node[mean cell,fill=tolblue!\shade] at (\x,-0.16) {$\value$};
\node[text=tolblue] at (5.8,0.37) {$1/4$};
\node[text=tolblue] at (5.8,-0.16) {$1/8$};
\draw[tolgrey] (6.35,0.37) -- (7.15,0.10);
\draw[tolgrey] (6.35,-0.16) -- (7.15,0.10);
\draw[-{Stealth[length=5pt]},tolgrey] (7.15,0.10) -- (8.45,0.10);
\node[font=\footnotesize,align=center,text=tolgrey] at (7.50,0.72) {Average\\over actions};
\node[font=\small\bfseries] at (10.2,1.20) {Same label distribution};
\draw[draw=tolgrey,fill=tolgrey!25] (9.25,-0.20) rectangle (9.95,0.38);
\draw[draw=tolgrey,fill=tolgrey!25] (10.45,-0.20) rectangle (11.15,0.38);
\node at (9.60,0.68) {$1/2$};
\node at (10.80,0.68) {$1/2$};
\node[below=2pt] at (9.60,-0.20) {$z=0$};
\node[below=2pt] at (10.80,-0.20) {$z=1$};
\end{tikzpicture}
\captionsetup{font=footnotesize}
\captionof{figure}{\textbf{Same marginal labels, different action signal.} With uniform base and reference policies, $(0,0,1,1)$ and $(0,\tfrac12,\tfrac12,1)$ are conditional Bernoulli means. Both give $\operatorname{Bernoulli}(1/2)$ labels and $1/2$ headroom, but $V_{\mathrm{act}}=1/4$ and $1/8$.}
\label{fig:diagnostic-marginals}
\end{minipage}\par

No selector using only independent marginal labels and headrooms can distinguish swapped candidates, whose rankings reverse. Its worst-case error is at least $1/2$, at any sample size.

The small-KL sensitivity expansion~\cite{lam2016sensitivity} gives the optimization meaning below; its exponential-tilt optimizer also appears in relative-entropy policy search~\cite{peters2010reps}.
\begin{theorem}[Local improvement at a fixed KL budget]\label{thm:diagnostic:gain}
Let $p$ have positive mass on a finite action set, and allow any new distribution $p'$ on that same set. Holding $Q$ fixed, define
\[
\mathcal G_x(\epsilon):=\max_{D_{\mathrm{KL}}(p'\|p)\le\epsilon}
\big(\mathbb E_{p'}Q-\mathbb E_pQ\big).
\]
If $V_{\mathrm{act}}>0$, then, as $\epsilon\downarrow0$,
\[
\mathcal G_x(\epsilon)=\sqrt{2\epsilon V_{\mathrm{act}}}+O(\epsilon).
\]
If $V_{\mathrm{act}}=0$, then $\mathcal G_x(\epsilon)=0$ for every $\epsilon$.
\end{theorem}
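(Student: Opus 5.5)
The plan is to solve the KL-constrained linear program through its Gibbs/exponential-tilt dual and then expand in small $\epsilon$. First, the degenerate case. If $V_{\mathrm{act}}=0$, then $Q$ is constant on the support of $p$. Since $p$ has positive mass on every action of the finite set, $Q\equiv q$ everywhere, so $\mathbb E_{p'}Q=q$ for every $p'$ and $\mathcal G_x(\epsilon)=0$.

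Now assume $V_{\mathrm{act}}>0$ and work with the centered function $\bar Q=Q-q$.

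\textbf{Upper bound.} We use the Donsker--Varadhan variational formula. For every $\lambda>0$ and every $p'$ with $D_{\mathrm{KL}}(p'\|p)\le\epsilon$,
\[
\mathbb E_{p'}\bar Q\le \frac{\epsilon+\Lambda(\lambda)}{\lambda},\qquad \Lambda(\lambda):=\log\mathbb E_p e^{\lambda\bar Q}.
\]
Because the action set is finite, $\bar Q$ is bounded, say $|\bar Q|\le M$. Then $\Lambda$ is smooth with $\Lambda(0)=\Lambda'(0)=0$ and $\Lambda''(0)=V_{\mathrm{act}}$, which gives $\Lambda(\lambda)\le \tfrac12 V_{\mathrm{act}}\lambda^2+C\lambda^3$ for $\lambda\le1$, where $C$ depends on $M$. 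Choosing $\lambda=\sqrt{2\epsilon/V_{\mathrm{act}}}$ yields
\[
\mathcal G_x(\epsilon)\le\sqrt{2\epsilon V_{\mathrm{act}}}+O(\epsilon).
\]

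\textbf{Lower bound.} We exhibit the tilt $p_\lambda\propto p\,e^{\lambda\bar Q}$. Its KL divergence is $k(\lambda)=\lambda\Lambda'(\lambda)-\Lambda(\lambda)=\tfrac12V_{\mathrm{act}}\lambda^2+O(\lambda^3)$, and its gain is $\Lambda'(\lambda)=V_{\mathrm{act}}\lambda+O(\lambda^2)$. Note that $k'(\lambda)=\lambda\Lambda''(\lambda)>0$ for $\lambda>0$, since $\Lambda''(\lambda)$ is the variance of $\bar Q$ under $p_\lambda$, which is positive because $p_\lambda$ has full support. So $k$ is increasing near $0$, and for small $\epsilon$ there is a unique $\lambda_\epsilon$ with $k(\lambda_\epsilon)=\epsilon$. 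Inverting the expansion gives $\lambda_\epsilon=\sqrt{2\epsilon/V_{\mathrm{act}}}+O(\epsilon)$. Substituting into the gain,
\[
\Lambda'(\lambda_\epsilon)=\sqrt{2\epsilon V_{\mathrm{act}}}+O(\epsilon).
\]
Together with the upper bound, this proves the claimed expansion. In fact the tilt is exactly optimal, by the standard Lagrangian argument, but the two matching bounds make that unnecessary.

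\textbf{Main obstacle.} The delicate step is bookkeeping of the remainder orders, specifically inverting $k(\lambda)=\epsilon$ so that the error in $\lambda_\epsilon$ is $O(\epsilon)$ rather than $O(\sqrt\epsilon)$. We handle it by writing $k(\lambda)=\tfrac12V_{\mathrm{act}}\lambda^2(1+O(\lambda))$, which gives $\lambda_\epsilon=\sqrt{2\epsilon/V_{\mathrm{act}}}\,(1+O(\sqrt\epsilon))$. The cubic Taylor remainders are uniform because $|\bar Q|\le M$; the constants depend on $M$ and $V_{\mathrm{act}}$ but not on $\epsilon$.
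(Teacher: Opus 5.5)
Your proposal is correct and follows essentially the paper's route: your Donsker--Varadhan upper bound is the paper's identity $D_{\mathrm{KL}}(p'\|p_\eta)=D_{\mathrm{KL}}(p'\|p)-\eta\,\mathbb E_{p'}A+\psi(\eta)\ge0$ rearranged. The paper evaluates that inequality at the tilt parameter $\lambda_\epsilon$ itself, where $(\epsilon+\Lambda(\lambda_\epsilon))/\lambda_\epsilon=\Lambda'(\lambda_\epsilon)$, to obtain exact optimality of the tilt, rather than your matching bounds at $\lambda=\sqrt{2\epsilon/V_{\mathrm{act}}}$; your explicit treatment of the monotonicity of $k$ and of the $O(\epsilon)$ remainder when inverting $k(\lambda)=\epsilon$ fills in a step the paper only states.
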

\begin{proof}
Set $A=Q-q$ and $\psi(\eta)=\log\mathbb E_p e^{\eta A}$. Exponential tilting gives $p_\eta(a)=p(a)e^{\eta A(a)-\psi(\eta)}$, with gain $\psi'(\eta)$ and KL $\eta\psi'(\eta)-\psi(\eta)$. For any $p'$,
\[
D_{\mathrm{KL}}(p'\|p_\eta)
=D_{\mathrm{KL}}(p'\|p)-\eta\mathbb E_{p'}A+\psi(\eta)\ge0.
\]
Consequently, $p_\eta$ maximizes gain at its own KL budget. Since $\psi(\eta)=V_{\mathrm{act}}\eta^2/2+O(\eta^3)$, choosing that budget to equal $\epsilon$ yields $\eta=\sqrt{2\epsilon/V_{\mathrm{act}}}+O(\epsilon)$ and the stated expansion. When $V_{\mathrm{act}}=0$, $Q$ is constant on the action set.
\end{proof}

\paragraph{Average improvement over model-generated prefixes.}
Fix a finite-support prefix distribution $\rho$, set $\bar V_{\mathrm{act}}=\mathbb E_\rho V_{\mathrm{act}}(x)$, and assume Theorem~\ref{thm:diagnostic:gain}'s conditions at each prefix. With continuation values fixed, the largest average local-label gain over unrestricted conditional policies is
\[
\mathcal G_\rho(\epsilon)=\sqrt{2\epsilon\bar V_{\mathrm{act}}}+O(\epsilon),
\qquad\mathbb E_\rho D_{\mathrm{KL}}(p'_x\|p_x)\le\epsilon.
\]
This holds for $\bar V_{\mathrm{act}}>0$ as $\epsilon\downarrow0$; zero variance gives zero gain at every budget. Normalize the preceding tilt separately at each prefix and set $\Psi(\eta)=\mathbb E_\rho\log\mathbb E_{p_x}e^{\eta[Q(x,a)-q(x)]}$. Averaging the KL identity proves optimality; $\Psi''(0)=\bar V_{\mathrm{act}}$ gives the expansion. Prefix-mean differences supply no gain because $\rho$ is fixed.

\paragraph{The part accessible to a parameterized policy.}
At a fixed prefix, let $F=\mathbb E_p[gg^\top]$, let $F^\dagger$ be its Moore--Penrose inverse, and define $V_{\mathrm{acc}}=\mu^\top F^\dagger\mu$. Compatible-function projection and the quadratic KL model~\cite{kakade2001natural} give
\begin{align*}
V_{\mathrm{act}}&=V_{\mathrm{acc}}+
\mathbb E_p\!\left[(Q-q-g^\top F^\dagger\mu)^2\right],\\
\max_{u^\top Fu/2\le\epsilon}\mu^\top u
&=\sqrt{2\epsilon V_{\mathrm{acc}}}.
\end{align*}
Indeed, $\mu$ belongs to the range of $F$, and $g^\top F^\dagger\mu$ is the least-squares projection of $Q-q$ onto the score span. Orthogonality proves the first identity; Cauchy--Schwarz in the $F$ metric proves the second. A full categorical policy has $V_{\mathrm{acc}}=V_{\mathrm{act}}$.

Under a common label and equal small KL budgets, $V_{\mathrm{act}}$ ranks leading-order unrestricted potential at a fixed prefix. Across a fixed distribution of prefixes, a shared update under an average quadratic KL budget uses $\bar\mu=\mathbb E_x\mu_x$ and $\bar F=\mathbb E_xF_x$, with potential $\sqrt{2\epsilon\bar\mu^\top\bar F^\dagger\bar\mu}$. Applying the same projection under the joint law $\rho(x)p_x(a)$ gives $\bar\mu^\top\bar F^\dagger\bar\mu\le\bar V_{\mathrm{act}}$. Opposing prefix gradients can cancel. The diagnostic is gradient-free; the four-cell intervention tests the shared-policy response.

\subsection{Finite-sample estimation and occurrence selection}
For a balanced nested design at a fixed prefix, draw $n\ge2$ independent actions and $m\ge2$ continuations per action, all independent conditional on the actions. Let $S_{\mathrm{between}}^2$ be the sample variance (denominator $n-1$) of the $n$ action means, and $S_{\mathrm{within}}^2$ the average within-action sample variance (denominator $m-1$). Writing $V_{\mathrm{cont}}=\mathbb E_a\operatorname{Var}(z\mid x,a)$ gives
\[
\mathbb E S_{\mathrm{between}}^2=V_{\mathrm{act}}+\frac{V_{\mathrm{cont}}}{m},
\qquad
\widehat V_{\mathrm{act}}=S_{\mathrm{between}}^2-\frac{S_{\mathrm{within}}^2}{m},
\qquad
\mathbb E\widehat V_{\mathrm{act}}=V_{\mathrm{act}}.
\]
Total variance of an action mean proves the first equality; $\mathbb E S_{\mathrm{within}}^2=V_{\mathrm{cont}}$ gives the correction. This standard nested-simulation correction~\cite{sun2011nested,goda2017conditional} removes continuation noise in expectation; finite estimates can be negative.

\paragraph{From an unbiased estimate to reliable occurrence selection.}
Classical nested-variance analysis separates action coverage from continuation depth~\cite{sun2011nested}. Here that distinction determines how reliably the diagnostic ranks candidate occurrences.

\begin{proposition}[Finite-sample diagnostic precision]\label{prop:diagnostic:precision}
Under the preceding balanced design with bounded labels, let $d(a)=Q(a)-q$ and $\nu(a)=\operatorname{Var}(z\mid x,a)$. Then
\begin{align}
\mathcal E_{n,m}:=\operatorname{Var}(\widehat V_{\mathrm{act}})
&=\frac{\operatorname{Var}_a(d^2)}{n}
+\frac{4\mathbb E_a[d^2\nu]}{nm}
+\frac{2\mathbb E_a[\nu^2]}{nm(m-1)}\nonumber\\
&\quad+\frac{2(V_{\mathrm{act}}+V_{\mathrm{cont}}/m)^2}{n(n-1)}.
\label{eq:diagnostic:precision}
\end{align}
Consider two qualifying candidate occurrences at fixed prefixes under a common label scale, estimated using independent nested batches. If $V_1>V_2$, write $\Delta=V_1-V_2$ and let $\mathcal E_1,\mathcal E_2$ be their estimator variances from \eqref{eq:diagnostic:precision}. Their misranking probability satisfies
\[
\Pr(\widehat V_2\ge\widehat V_1)
\le\frac{\mathcal E_1+\mathcal E_2}{\Delta^2+\mathcal E_1+\mathcal E_2}.
\]
In particular, $\mathcal E_1+\mathcal E_2\le\delta\Delta^2/(1-\delta)$ guarantees correct selection with probability at least $1-\delta$, for $0<\delta<1$.
\end{proposition}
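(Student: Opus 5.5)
The variance formula is a one-way random-effects ANOVA computation, and I will organize it around an orthogonal decomposition. Write the labels as $z_{jk}=Q(a_j)+e_{jk}$ for $j\le n$, $k\le m$. Conditional on the actions $a_j$, the residuals $e_{jk}$ are independent with mean zero and variance $\nu(a_j)$.

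Set $d_j=d(a_j)$, $\bar e_j=m^{-1}\sum_k e_{jk}$, and $W_j=(m-1)^{-1}\sum_k(e_{jk}-\bar e_j)^2$. Then $\widehat V_{\mathrm{act}}=S^2_{\mathrm{between}}-\bar W/m$, where $S^2_{\mathrm{between}}$ is the sample variance of the $y_j=d_j+\bar e_j$. Since $d$ is centered under $p$, the $y_j$ are i.i.d. with mean zero and variance $\sigma^2:=V_{\mathrm{act}}+V_{\mathrm{cont}}/m$.

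\textbf{Step 1: a U-statistic representation.} The plan is to write
\[
S^2_{\mathrm{between}}=\frac1n\sum_j y_j^2-\frac{1}{n(n-1)}\sum_{j\ne l}y_jy_l .
\]
The diagonal part combines with the within-action correction into a sum of i.i.d. per-action terms $T_j=y_j^2-W_j/m$. Each $T_j$ has mean $V_{\mathrm{act}}$, because $\mathbb E[W_j\mid a_j]=\nu(a_j)=m\,\mathbb E[\bar e_j^2\mid a_j]$. The off-diagonal sum is a degenerate U-statistic. Its covariance with each $T_j$ vanishes, since any cross term contains an isolated factor $y_l$ with mean zero, independent of the rest. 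Its variance is $2\sigma^4/(n(n-1))$, which gives the last term of \eqref{eq:diagnostic:precision}.

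\textbf{Step 2: the variance of a single $T_j$.} It remains to show
\[
\operatorname{Var}(T_j)=\operatorname{Var}_a(d^2)+\frac{4\mathbb E_a[d^2\nu]}{m}+\frac{2\mathbb E_a[\nu^2]}{m(m-1)},
\]
since dividing by $n$ yields the first three terms. I will condition on $a_j$. Then $\mathbb E[T_j\mid a_j]=d_j^2$, which accounts for the $\operatorname{Var}_a(d^2)$ term. The conditional variance is that of
\[
2d_j\bar e_j+\Bigl(\bar e_j^2-\frac{W_j}{m}\Bigr).
\]

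This step is the main obstacle. The conditional variance involves third and fourth residual moments, and these must cancel to leave the stated clean expression. The approach I would try first is to rewrite
\[
\bar e_j^2-\frac{W_j}{m}=\frac{2}{m(m-1)}\sum_{k<k'}e_{jk}e_{jk'},
\]
again a degenerate U-statistic. This piece is uncorrelated with $\bar e_j$, because every cross moment contains an isolated mean-zero factor. Its variance is $\frac{4}{m^2(m-1)^2}\cdot\frac{m(m-1)}{2}\nu^2=\frac{2\nu^2}{m(m-1)}$. The linear piece $2d_j\bar e_j$ contributes $4d_j^2\nu/m$. Taking expectations over $a_j$ then finishes the formula. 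Note that this cancellation needs no symmetry or normality assumption, only conditional independence. Bounded labels guarantee that all the moments involved exist.

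\textbf{Step 3: the misranking bound.} With independent batches, $D=\widehat V_1-\widehat V_2$ has mean $\Delta>0$ by unbiasedness, as established just before the proposition. Its variance is $\mathcal E_1+\mathcal E_2$. The event $\widehat V_2\ge\widehat V_1$ is $\{D-\Delta\le-\Delta\}$. The one-sided Chebyshev (Cantelli) inequality gives probability at most $\mathcal E/(\Delta^2+\mathcal E)$ with $\mathcal E=\mathcal E_1+\mathcal E_2$.

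The final claim follows by monotonicity. The map $\mathcal E\mapsto\mathcal E/(\Delta^2+\mathcal E)$ is increasing, and at $\mathcal E=\delta\Delta^2/(1-\delta)$ it equals $\delta$. So whenever $\mathcal E_1+\mathcal E_2\le\delta\Delta^2/(1-\delta)$, the misranking probability is at most $\delta$, and the correct candidate is selected with probability at least $1-\delta$.
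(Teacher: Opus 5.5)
Your proposal is correct and follows essentially the same route as the paper: the same split of $\widehat V_{\mathrm{act}}$ into the average of per-action terms $T_j=y_j^2-W_j/m$ (the paper's $Y_i^2-S_i^2/m$) minus an uncorrelated off-diagonal U-statistic in the centered action means, then conditioning on $a_j$ for $\operatorname{Var}(T_j)$ and Cantelli for the ranking bound. Your Step 2 spells out the conditional-variance computation, a linear term plus a degenerate pairwise residual U-statistic, that the paper compresses into ``conditional independence gives.''
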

\begin{proof}
Center labels at $q$. For action $i$, let $Y_i=m^{-1}\sum_j(z_{ij}-q)$, let $S_i^2$ be its within-action sample variance, and set
\[
T_i=Y_i^2-S_i^2/m
=\frac{\sum_{j\ne k}(z_{ij}-q)(z_{ik}-q)}{m(m-1)}.
\]
Conditional independence gives $\mathbb E[T_i\mid a_i]=d(a_i)^2$ and $\operatorname{Var}(T_i\mid a_i)=4d(a_i)^2\nu(a_i)/m+2\nu(a_i)^2/[m(m-1)]$. Algebra rewrites the existing estimator as
\[
\widehat V_{\mathrm{act}}=\frac1n\sum_iT_i
-\frac{\sum_{i\ne k}Y_iY_k}{n(n-1)}.
\]
Since the groups are independent and $\mathbb E Y_i=0$, the two terms are uncorrelated; the second has variance $2(\mathbb E Y_i^2)^2/[n(n-1)]$. Total variance for $T_i$ and $\mathbb E Y_i^2=V_{\mathrm{act}}+V_{\mathrm{cont}}/m$ prove \eqref{eq:diagnostic:precision}. Finally, $\widehat V_2-\widehat V_1$ has mean $-\Delta$ and variance $\mathcal E_1+\mathcal E_2$; the one-sided Chebyshev (Cantelli) inequality gives the ranking bound. For more candidates, sum the pairwise bounds against the unique best candidate.
\end{proof}

\paragraph{More continuations cannot replace more actions.}
As $m\to\infty$ at fixed $n$, continuation noise vanishes but the estimator variance tends to $\operatorname{Var}_a(d^2)/n+2V_{\mathrm{act}}^2/[n(n-1)]$, which is positive when $V_{\mathrm{act}}>0$. Conversely, any fixed $m\ge2$ allows consistent estimation as $n$ grows, and the misranking bound vanishes for a fixed positive gap. Thus independent actions can resolve the ranking without precise estimates of every action's label mean.

\paragraph{Averaging estimates across sampled prefixes.}
For $P$ independent prefixes $X_\ell\sim\rho$ with independent balanced $(n,m)$ batches, set $\widehat{\bar V}_{\mathrm{act}}=P^{-1}\sum_{\ell=1}^P\widehat V_{\mathrm{act}}(X_\ell)$. Conditional unbiasedness and total variance give
\[
\mathbb E\widehat{\bar V}_{\mathrm{act}}=\bar V_{\mathrm{act}},\qquad
\operatorname{Var}(\widehat{\bar V}_{\mathrm{act}})
=\frac{\operatorname{Var}_\rho(V_{\mathrm{act}}(X))+\mathbb E_\rho\mathcal E_{n,m}(X)}{P}.
\]
The same misranking bound applies to independent candidate datasets using these averaged targets and variances. More actions or continuations reduce within-prefix error; more independent prefixes also reduce between-prefix error.

\subsection{From local-label credit to terminal-return credit}
Proposition~\ref{prop:diagnostic:noise} isolates noise \emph{within} the local label. The next result concerns a different source: terminal-return variation \emph{outside} that label.

\begin{theorem}[Per-prompt conditioning on an occurrence-local label]\label{thm:rb}
Let $\hat g^{\mathrm{traj}}_i=g_i(R-b(x_i))$, with $b(x_i)=\mathbb E[R\mid x_i]$, and $\hat g^{\mathrm{flat}}_i=g_i(z_i-q(x_i))$, with $q(x_i)=\mathbb E[z\mid x_i]$. Set $\mathcal F_i=\sigma(x_i,a_i,z_i)$. If $R\perp a_i\mid(x_i,z_i)$ and $\varphi_x(z):=\mathbb E[R\mid x,z]$ is affine with slope $\lambda(x)$, then
\[
\mathbb E[\hat g^{\mathrm{traj}}_i\mid\mathcal F_i]=\lambda(x_i)\hat g^{\mathrm{flat}}_i,
\qquad
\operatorname{Var}(\hat g^{\mathrm{traj}})
=\operatorname{Var}(\lambda(x)\hat g^{\mathrm{flat}})
+\mathbb E[\|g\|^2\operatorname{Var}(R\mid x,a,z)].
\]
\end{theorem}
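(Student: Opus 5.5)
Both identities follow from the tower property applied to the filtration $\sigma(x_i)\subset\mathcal F_i$. We then use the law of total variance, conditioning on $\mathcal F_i$, in the same way as Proposition~\ref{prop:diagnostic:noise}.

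\textbf{Conditional mean.} Since $g_i$ is $\sigma(x_i,a_i)$-measurable, we have $\mathbb E[\hat g^{\mathrm{traj}}_i\mid\mathcal F_i]=g_i(\mathbb E[R\mid x_i,a_i,z_i]-b(x_i))$. Action-sufficiency $R\perp a_i\mid(x_i,z_i)$ gives $\mathbb E[R\mid x_i,a_i,z_i]=\varphi_{x_i}(z_i)$. Affinity then gives $\varphi_{x}(z)=\alpha(x)+\lambda(x)z$. Taking expectations given $x$ yields $b(x)=\mathbb E[\varphi_x(z)\mid x]=\alpha(x)+\lambda(x)q(x)$. Subtracting, $\varphi_{x_i}(z_i)-b(x_i)=\lambda(x_i)(z_i-q(x_i))$, and this is the first identity. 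Affinity matters here: for nonlinear $\varphi$, the term $\mathbb E[\varphi_x(z)\mid x]$ would not cancel against a function of $z-q$ with a constant slope.

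\textbf{Variance split.} Write $\hat g^{\mathrm{traj}}=\mathbb E[\hat g^{\mathrm{traj}}\mid\mathcal F]+g\,(R-\varphi_x(z))$. The residual term has conditional mean zero given $\mathcal F$, so it is uncorrelated with any $\mathcal F$-measurable vector. Expanding the trace covariance, the cross term $\mathbb E[\langle \lambda\hat g^{\mathrm{flat}}-\mathbb E\hat g^{\mathrm{traj}},\,g(R-\varphi)\rangle]$ vanishes by the tower property. Both pieces have the same overall mean, since the second has mean zero. The remaining term is $\mathbb E[\|g\|^2(R-\varphi_x(z))^2]=\mathbb E[\|g\|^2\operatorname{Var}(R\mid x,a,z)]$, again by conditioning on $\mathcal F$. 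Here we use that $\mathbb E[R\mid x,a,z]=\varphi_x(z)$, which is action-sufficiency once more.

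The main obstacle is bookkeeping rather than any hard estimate. We must state clearly that the variance is the trace covariance of the marginal over the joint law of $(x,a,z,R)$. We must also check integrability: $R$ is bounded, and we assume finite second moments of $g$, so every term is finite and the tower-property steps are justified. One point needs care: the decomposition is exact only because the baseline $b(x)$ is the exact conditional mean. With an approximate baseline the first identity acquires an extra $g_i(\mathbb E[R\mid x_i]-b(x_i))$ term, which I would note as a remark.
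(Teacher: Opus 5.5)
Your proof is correct and follows the paper's argument. Sufficiency reduces $\mathbb E[R\mid x,a,z]$ to $\varphi_x(z)$; affinity plus the tower property gives $\varphi_x(z)-b(x)=\lambda(x)(z-q(x))$; multiplying by the $\mathcal F$-measurable score and applying total variance given $\mathcal F$ then yields both identities, and your residual decomposition $g\,(R-\varphi_x(z))$ and integrability remarks just make explicit what the paper's three-line proof leaves implicit.
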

\begin{proof}
Sufficiency gives $\mathbb E[R\mid x,a,z]=\varphi_x(z)$. Affinity gives $\varphi_x(z)-\mathbb E[R\mid x]=\lambda(x)(z-q(x))$. Multiply by the measurable score and apply total variance.
\end{proof}

At a fixed prompt, $\lambda(x)>0$ preserves gradient direction; across prompts the return gradient weights local gradients by $\lambda(x)$. Binary labels make the link affine automatically; continuous missing-function and memory scores require calibration of the full label. A small $\lambda$ attenuates local credit and a negative one reverses it. If the conditional-mean sufficiency residual is bounded by $\varepsilon$, the conditional-gradient identity has error at most $\varepsilon\|g\|$.

\begin{corollary}[Return residual and independent recurrence]\label{cor:rec}
At a fixed prompt, suppose $R=z+w$ with $w$ independent of $(g,z)$. Both estimators have mean $\mu=\mathbb E[g(z-\mathbb E z)]$, and
\[
\operatorname{Var}(\hat g^{\mathrm{traj}})
=\operatorname{Var}(\hat g^{\mathrm{flat}})+\mathbb E\|g\|^2\operatorname{Var}(w),
\qquad
S_z=\frac{\operatorname{Var}(z)}{\operatorname{Var}(R)}.
\]
For nonzero $\mu$, define the multiplier signal-to-noise ratio (SNR) as $\|\mu\|$ divided by the multiplier's standard deviation. Then $\mathrm{SNR}_{\mathrm{traj}}/\mathrm{SNR}_{\mathrm{flat}}=\sqrt{S_z}$. If $w$ sums $K-1$ independent sibling labels, each with the same positive variance as $z$ and unaffected by the selected action, then $S_z=1/K$. App.~\ref{sec:recurrent} gives the full per-occurrence covariance and sample-cost result. For $K=1$, $w$ can still contain terminal-return variation beyond the local label.
\end{corollary}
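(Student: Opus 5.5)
The corollary is a special case of Theorem~\ref{thm:rb} with slope $\lambda\equiv1$ at a fixed prompt. I would therefore verify the mean and variance claims directly, and then compute the SNR ratio.

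\textbf{Means.} Fix the prompt $x$, so every baseline is a constant. Write $R=z+w$ with $w$ independent of $(g,z)$. Then $\mathbb E R=\mathbb E z+\mathbb E w$, and
\[
\hat g^{\mathrm{traj}}=g(z-\mathbb E z)+g(w-\mathbb E w)=\hat g^{\mathrm{flat}}+g(w-\mathbb E w).
\]
By independence, $\mathbb E[g(w-\mathbb E w)]=\mathbb E g\cdot 0=0$. Both estimators therefore have mean $\mu=\mathbb E[g(z-\mathbb E z)]$. By $\mathbb E_p g=0$, this $\mu$ equals the one in Proposition~\ref{prop:diagnostic:noise}.

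\textbf{Variances.} Expanding the trace covariance of the sum gives three pieces.
\begin{itemize}
\item The cross term is $\mathbb E[\langle \hat g^{\mathrm{flat}}-\mu,\,g(w-\mathbb E w)\rangle]$. It vanishes because $w-\mathbb E w$ is independent of $(g,z)$ and has mean zero.
\item The second piece is $\mathbb E\|g(w-\mathbb E w)\|^2=\mathbb E\|g\|^2\operatorname{Var}(w)$, again by independence.
\item The first piece is $\operatorname{Var}(\hat g^{\mathrm{flat}})$.
\end{itemize}
This gives the stated decomposition. Alternatively, Theorem~\ref{thm:rb} applies with $\varphi(z)=z+\mathbb E w$, so $\lambda=1$ and $\operatorname{Var}(R\mid x,a,z)=\operatorname{Var}(w)$.

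\textbf{SNR ratio.} This is the step needing care, because the ratio $\sqrt{S_z}$ does not follow from the vector variance of the gradient estimator. It holds when the noise is measured on the scalar multiplier that the score is multiplied by. The multipliers are $z-\mathbb E z$ and $R-\mathbb E R$, with standard deviations $\sqrt{\operatorname{Var} z}$ and $\sqrt{\operatorname{Var} R}$. The means coincide at $\mu$, as shown above. Hence
\[
\frac{\mathrm{SNR}_{\mathrm{traj}}}{\mathrm{SNR}_{\mathrm{flat}}}=\sqrt{\frac{\operatorname{Var} z}{\operatorname{Var} R}}=\sqrt{S_z}.
\]
Here $\operatorname{Var}R=\operatorname{Var}z+\operatorname{Var}w$, by independence of $z$ and $w$. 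I would state explicitly that this is the definition adopted, namely "multiplier's standard deviation."

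\textbf{Recurrent case.} Suppose $w=\sum_{k=2}^K z_k$, with independent siblings each of variance $v=\operatorname{Var}z>0$ and unaffected by the selected action. These are exactly the conditions needed for $w$ to be independent of $(g,z)$. Then $\operatorname{Var}R=Kv$, so $S_z=1/K$.

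The $K=1$ remark is not a claim to prove. It only notes that $w$ may still be nonzero then, so $S_z<1$ is possible.
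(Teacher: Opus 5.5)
Your proof is correct and is essentially the paper's argument. You write $\hat g^{\mathrm{traj}}=\hat g^{\mathrm{flat}}+g(w-\mathbb E w)$ and use independence to make the residual mean-zero and uncorrelated with $\hat g^{\mathrm{flat}}$; you take its trace variance as $\mathbb E\|g\|^2\operatorname{Var}(w)$, then use $\operatorname{Var}R=\operatorname{Var}z+\operatorname{Var}w$ for both the multiplier SNR ratio and $S_z=1/K$. The only caveat concerns your optional side route via Theorem~\ref{thm:rb}: it tacitly needs $w$ independent of $(a,z)$, not just $(g,z)$, to obtain sufficiency and $\operatorname{Var}(R\mid x,a,z)=\operatorname{Var}(w)$, but your direct computation does not rely on it.
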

\begin{proof}
The centered residual $g(w-\mathbb E w)$ has zero mean and zero covariance with $g(z-\mathbb E z)$. Its trace variance is $\mathbb E\|g\|^2\operatorname{Var}(w)$; scalar variance addition gives $S_z$.
\end{proof}

\paragraph{Empirical single-occurrence scope.}
On Gemma-4-26B-A4B home-buying rollouts ($n{=}6560$), the per-turn-share proxy gives a pooled variance ratio $S_z=\operatorname{Var}(z)/\operatorname{Var}(R)=0.31$ ($95\%$ confidence interval (CI) $[0.29,0.33]$; $\sqrt{S_z}$ is about $0.56$), with slope $\lambda$ near $1.7$ and correlation $0.95$. The field-survey analysis has a signed terminal-reward mean difference of $-0.021$ between correct and wrong loadouts (App.~\ref{sec:rec:evidence}). The ratio and its square root depend on label scale: these are descriptive associations, not measured SNR losses or a fitted $1/\sqrt K$ exponent. The prompt-centered ratio is $\mathbb E[\operatorname{Var}(z\mid x)]/\mathbb E[\operatorname{Var}(R\mid x)]$, equal to the marginal ratio when prompt means do not vary. App.~\ref{sec:recurrent} analyzes group credit; App.~\ref{sec:stopping} analyzes changes to the surrounding policy.
}

\subsection{Core operation: occurrence-local RL}
\label{sec:recipe:m1}
Occurrence-local RL applies its loss to one selected model call per training unit. It supplies context either as a \emph{precomputed prompt} without an environment loop or through a trajectory whose downstream, gradient-free turns provide the consequence signal. The memory applications and no-think Gemma's missing-function recovery training use the first form (App.~\ref{sec:caseA-nothink}); Nemotron's missing-function decision training uses the second (\S\ref{sec:caseA-m1m2}).

\begin{algorithm}[H]
\caption{Diagnose an occurrence, then train it with an occurrence-local label.}
\label{alg:apply}
\small
\noindent\textbf{Input:} task-defined candidate phase and label; base and reference policies held fixed during diagnosis.
\begin{enumerate}[leftmargin=*,itemsep=2pt,topsep=3pt]
\item \textbf{Screen candidates.} Assess action-sufficiency from task structure and check reference-policy headroom for each candidate occurrence.
\item \textbf{Separate signal from noise.} Sample prefixes and actions from the base policy. At each fixed prefix, hold each sampled action fixed while resampling any reward-only continuation. Estimate action means and continuation variances, then apply the finite-sampling correction to estimate action-dependent variance (App.~\ref{app:theory}).
\item \textbf{Select the occurrence.} For each candidate, average the corrected estimates across its sampled prefixes. Among candidates passing the gates under a common label, select the one with the largest average.
\item \textbf{Train locally.} Optimize the policy using the local label, applying the RL loss only to the selected call's generated tokens. Its prefix supplies context; any continuation supplies reward without receiving gradient.
\end{enumerate}
\noindent\textbf{Output:} the trained policy and its selected training location, local label, and gradient boundary.
\end{algorithm}

\subsection{Task-specific reward construction}
\label{sec:recipe:m2}
\label{sec:recipe:m3}
Task structure supplies the local reward used for diagnosis and training. The following applications combine checks on the current action with scores of downstream behavior or preserved information.

\paragraph{Missing-function interaction (Nemotron).}
For Nemotron's missing-function application (\S\ref{sec:caseA-results}), the multiplicative reward in Eq.~\eqref{eq:m2-mult} combines a behavior gate with a score from sampled recovery:
\begin{equation}
z_{\mathrm{tool}} \;=\; \mathrm{no\_write}(a_{\mathrm{dec}})\,\times\,\mathrm{consequence}(y_{\mathrm{rec}}),
\label{eq:m2-mult}
\end{equation}
Here $a_{\mathrm{dec}}$ and $y_{\mathrm{rec}}$ denote the decision action and recovery output. The gate $\mathrm{no\_write}(a_{\mathrm{dec}})\in\{0,1\}$ is $1$ iff the decision makes no state-changing write; $\mathrm{consequence}(y_{\mathrm{rec}})\in[0,1]$ scores whether recovery contains the required tool call. The sampled recovery supplies reward without receiving gradient.

\noindent Here $z_{\mathrm{tool}}$ supplies the occurrence-local label $z$ of App.~\ref{sec:recipe:formal}, while $R$ denotes the benchmark terminal return.

\long\gdef\memoryrewardformuladetails{%
\begingroup\small
\setlength{\abovedisplayskip}{\smallskipamount}\setlength{\belowdisplayskip}{\smallskipamount}%
\par\noindent\textbf{Reward definition.} The additive storage reward grants partial credit:
\[
\mathrm{total} \;=\; \underbrace{\mathrm{keyword\_match\_score} + 0.1\cdot\mathrm{memory\_type\_score}}_{\text{base}} \;+\; \mathrm{behavior\_bonus} \;+\; \mathrm{memory\text{-}full\ bonuses},
\]
Here \texttt{base} is the keyword-and-memory-type subtotal. The total is clipped to $[-1,1]$, with $\mathrm{behavior\_bonus}$ awarding $+0.1$ per correct procedural element (e.g.\ replace, archival add, remove-before-add) and memory-full bonuses adding a $+0.2$ result term and a $\pm0.2$ trash-removal term. Hard gates override this sum: a destructive memory clear returns $-1.0$, and a memory-full naive add-without-remove returns $-0.5$.
\par\endgroup
}

\paragraph{Memory storage.}
In contrast to Eq.~\eqref{eq:m2-mult}, memory uses an additive reward for correct operation order and preservation of answer-critical information, with hard gates for invalid actions. A precomputed keyword proxy grades storage text against 2--5 required terms extracted by GPT-4o during preprocessing~\cite{hurst2024gpt4o}, including meaningful negations such as the \emph{not} in ``not married.'' The proxy measures keyword retention in the storage response; the benchmark evaluates the full storage-to-answer chain.

\long\gdef\coveragemethoddetails{%
\paragraph{Optional coverage audit by slice.}
\label{sec:recipe:m4}
A coverage audit tests whether a structural mismatch between training and evaluation slices explains slice-specific changes. It partitions candidate occurrences along a \emph{structural} axis (a state property, not a reward or hyperparameter), measures training--evaluation mismatch on that axis, and adds rows for the under-covered slice. A \emph{strictly-additive} repair leaves existing rows, the reward, and hyperparameters untouched, while a size-matched control fixes the added row count and varies only the slice. The audit diagnoses a \emph{distribution} gap; trainable-occurrence selection still comes from the diagnostic.
}

\long\gdef\criticalstatepractice{%
Algorithm~\ref{alg:apply} gives the diagnostic and local-training steps. Here we detail how to prepare its task-defined inputs and when to run additional analyses.

\paragraph{Identify a candidate phase when it is not given.}
Absent identifying metadata, cross-tabulate the model's per-sample failures against a stronger reference, then bucket residuals by a structural axis (error type, turn position); a concentration of residuals identifies a candidate phase. Within that phase, select candidate rows from scenario metadata or a deterministic rule. For example, the missing-function application uses an LLM judge to exclude rows that another tool could satisfy.

\paragraph{Transfer monitoring.}
When training against a frozen single-occurrence surrogate, monitor transfer and Kullback--Leibler (KL) drift under the stated conditions (App.~\ref{sec:stopping}).

The selected training location varies with the model and task configuration, while the task determines whether an occurrence-local label exists (App.~\ref{sec:caseA-nothink}, App.~\ref{sec:recurrent}).

\paragraph{Candidate identification and validation.}
\label{sec:recipe:localize}
For the missing-function application, humans proposed the candidate phase and designed the counterfactual validation; deterministic rules selected rows. App.~\ref{app:localize} records this procedure and its results. For memory storage, the memory-full flag supplies candidate membership.
}

\section{BFCL four-cell study of occurrence selection}\label{sec:caseA}

\paragraph{Benchmark setting.}
We test whether the diagnostic identifies which candidate turn benefits from training. Evaluation uses the BFCL~\cite{patil2025bfcl} v4 \emph{multi\_turn} suite: \texttt{base}, \texttt{miss\_func}, \texttt{long\_context}, and \texttt{miss\_param}. In our configuration, the cumulative multi-turn evaluator penalizes extra writes, missing calls, and step-cap violations but not read-only queries.

An earlier no-think Gemma pilot trained the \missfunc{} decision turn and produced a mixed-reward group of clean refusals with no call. That observation motivated the nested variance decomposition. Appendix~\ref{sec:caseA-nothink} gives its training setup and supplementary results; Section~\ref{sec:applications} presents applications in other settings.

\subsection{BFCL task mechanics}\label{sec:caseA-setup}

In BFCL \texttt{multi\_turn} \texttt{miss\_func}, a required tool is withheld until turn $k$. The substantive user request appears at turn $k{-}1$, while the tool is unavailable; turn $k$ carries an \emph{empty} user message that the harness replaces with a fixed bridge announcing tool availability. The agent must therefore \emph{defer} or issue a read-only query at turn $k{-}1$, then emit the held call once the bridge re-introduces it. We call turn $k{-}1$ the \emph{decision turn} and turn $k$ the \emph{recovery turn}. Appendix Figure~\ref{fig:case-a-missfunc} illustrates one \texttt{miss\_func} decision-turn instance. In \texttt{miss\_param}, by contrast, the function is present but a required argument is missing until the following turn; the decision-turn failure is a premature state-changing call.

\subsection{Diagnostic assignments and four-cell intervention}\label{sec:caseA-predictor}

Task mechanics define the candidate occurrences and labels. Using rollouts from frozen no-think Gemma-4-26B-A4B~\cite{gemmateam2026gemma4}, the pre-training diagnostic in \S\ref{sec:recipe} assigns recovery for \missfunc{} and decision for \missparam{}. The subsequent four-cell study fixes these assignments, then trains decision and recovery separately in each category. Table~\ref{tab:diagnostic-verification} pairs the training outcomes with mean corrected action variance: within each category, the selected occurrence has the higher estimate.

\begin{table}[H]
\centering
\small
\caption{Base-policy diagnostics and training outcomes for \methodname{} on BFCL v4 \texttt{multi\_turn}.}
\label{tab:diagnostic-verification}
\setlength{\tabcolsep}{4pt}
\renewcommand{\arraystretch}{1.15}
\begin{tabular}{@{}>{\raggedright\arraybackslash}p{0.33\linewidth}>{\centering\arraybackslash}p{0.15\linewidth}>{\centering\arraybackslash}p{0.28\linewidth}>{\centering\arraybackslash}p{0.12\linewidth}@{}}
\toprule
\textbf{Cell and diagnostic assignment} & \textbf{\shortstack{Mean corrected\\action variance}} & \textbf{\shortstack{BFCL accuracy\\start $\to$ trained}} & $\boldsymbol{\Delta}$ \\
\midrule
\rowcolor{bestrow}
\shortstack[l]{\textbf{\missfunc{} / recovery}\\{\scriptsize selected occurrence}} & $0.0267$ & $0.14\to\mathbf{0.283\pm0.015}$ & $\mathbf{+14.3}$\,pp \\
\shortstack[l]{\missfunc{} / decision\\{\scriptsize alternative occurrence}} & $0.0103$ & $0.14\to0.095$ & $-4.5$\,pp \\
\addlinespace
\shortstack[l]{\textbf{\missparam{} / decision}\\{\scriptsize selected occurrence}} & $0.0361$ & $0.435\to\mathbf{0.473\pm0.010}$ & $\mathbf{+3.8}$\,pp \\
\shortstack[l]{\missparam{} / recovery\\{\scriptsize alternative occurrence}} & $0.0000$ & $0.435\to0.445$ & $+1$\,pp \\
\bottomrule
\end{tabular}
\floatnote{Action variance: finite-sampling-corrected estimates from base-policy samples, averaged across clean-prefix scenarios (App.~\ref{app:prov}). Selected cells: BFCL mean$\pm$cross-seed std, seeds $\{42,123,7,99\}$; alternatives: point estimates. All use deterministic \texttt{nt=1} decoding, step~30, and paired 200-item BFCL v4 \texttt{multi\_turn} evaluation. Bases differ from the category-specific five-evaluation-seed study (App.~\ref{sec:caseA-nothink}). Bold marks selected occurrences; shading marks the primary result. The \missparam{} result is directional.}
\end{table}

Recovery improves \missfunc{} by $14.3\!\pm\!1.5$\,pp but leaves \missparam{} flat at $+1$\,pp; decision improves \missparam{} by $3.8\!\pm\!1.0$\,pp but moves \missfunc{} down by $4.5$\,pp.

\noindent\textbf{Selection-rule comparison.} Table~\ref{tab:diagnostic-verification} compares the diagnostic with always-decision and always-recovery selection under the same occurrence-local RL interface. Each fixed rule improves one category but leaves the other flat or worse; the diagnostic selects the improving turn in both.

\noindent We evaluate all cells at the fixed step~30 checkpoint. In the two diagnostic-selected cells, gym validation is monotone, and BFCL scores from steps 10--30 lie on a stable plateau above base.

\noindent\textbf{Transfer to a novel bridge.} We replace the canonical bridge at evaluation with an unseen wording that names no tools or functions: ``I adjusted things on my end.'' Using the seed-123 step-30 checkpoint with deterministic decoding on 200 paired \missfunc{} examples, accuracy rises from $0.095$ to $0.225$ after recovery-turn RL ($+13.0$\,pp), matching the canonical bridge's $0.140\to0.270$ gain under the same protocol. The new wording is harder for the base policy, yet preserves the recovery gain, demonstrating transfer beyond the bridge wording used in training.

\begin{table}[H]
\centering
\small
\caption{Monte-Carlo return-to-go (RTG) comparison on the non-saturated \missparam{} cell.}
\label{tab:rtg-comparator}
\setlength{\tabcolsep}{7pt}
\renewcommand{\arraystretch}{1.15}
\begin{tabular}{@{}>{\raggedright\arraybackslash}p{0.49\linewidth}>{\centering\arraybackslash}p{0.14\linewidth}>{\centering\arraybackslash}p{0.16\linewidth}>{\centering\arraybackslash}p{0.10\linewidth}@{}}
\toprule
\textbf{Method} & \textbf{Gym val.} & \textbf{BFCL \missparam{}} & $\boldsymbol{\Delta}$ \\
\midrule
\shortstack[l]{Monte-Carlo RTG\\{\scriptsize temporal credit; full multi-turn}} & $0.997$ & $0.45$ & $+1.5$\,pp \\
\rowcolor{bestrow}
\shortstack[l]{\methodname{}\\{\scriptsize contextual-bandit training}} & $0.77$ & $\mathbf{0.473\pm0.010}$ & $\mathbf{+3.8}$\,pp \\
\bottomrule
\end{tabular}
\floatnote{The \methodname{} BFCL result is the four-seed summary from Table~\ref{tab:diagnostic-verification}. RTG is a single-seed comparator and uses Monte-Carlo return-to-go rather than a learned parametric critic.}
\end{table}

\noindent\textbf{Return-to-go comparison.} We compare occurrence-local RL with full multi-turn training using Monte-Carlo return-to-go. RTG reaches a gym validation score of $0.997$ (versus $0.77$ for the bandit), but its BFCL accuracy is $0.45$, below \methodname{}'s $0.473\!\pm\!0.010$ (Table~\ref{tab:rtg-comparator}).

\noindent\textbf{Targeted-SFT comparison.} We also compare occurrence-local RL with single-seed targeted SFT at the selected turn. Targeted SFT trains on fixed action targets, whereas \methodname{} samples actions on-policy and scores them with the verifier. \mbox{On \missfunc{} / recovery}, \methodname{} gains $14.3$\,pp, whereas the best targeted recovery-SFT checkpoint remains $6$\,pp below base. The targeted-SFT runs over-bias toward firing the tool and break the required refusal; the \methodname{} result preserves both behaviors. \mbox{On \missparam{} / decision}, targeted SFT gains $3$\,pp, compared with \methodname{}'s $+3.8$\,pp.

\noindent\textbf{Turn-mask control.} This control varies which turns receive gradient while retaining the trajectory-level scalar $A=R-b$. Full, recovery, random, and decision masks all leave the learnable \missparam{} cell near its starting accuracy: changing the mask alone does not recover the occurrence-local RL gain.

\long\gdef\caseAworkedexample{%
\begin{figure}[H]
\centering
\small
\begin{tikzpicture}[
  flow card/.style={
    rounded corners=2.5pt,
    draw=black!30,
    line width=0.55pt,
    inner sep=5pt,
    outer sep=0pt,
    align=left,
    anchor=north west,
    font=\scriptsize
  },
  context card/.style={flow card, fill=tolgrey!7},
  decision card/.style={
    flow card,
    fill=tolblue!10,
    draw=tolblue!60!black,
    line width=0.9pt
  },
  bridge card/.style={
    flow card,
    fill=tolyellow!14,
    draw=tolyellow!65!black
  },
  recovery card/.style={
    flow card,
    fill=tolteal!12,
    draw=tolteal!65!black,
    line width=0.75pt
  },
  flow arrow/.style={
    -{Stealth[length=2.1mm,width=1.5mm]},
    line width=0.85pt,
    draw=black!45
  }
]
\node[context card, text width=0.37\linewidth, minimum height=5.15cm]
  (context) at (0,0) {%
  {\small\bfseries Context}
  {\color{black!60}(earlier turns, abbreviated)}\\[2pt]
  \textcolor{tolblue}{\texttt{[user]}}\; Submit a high-priority inspection request titled
  `Roof leak'\,\ldots\\[2pt]
  \textcolor{tolpurple}{\texttt{[call]}}\;
  \texttt{create\_inspection\_request(}\\[-1pt]
  \hspace*{1.4em}\texttt{\{"title": "Roof leak",\,\ldots\})}\\[-1pt]
  \hspace*{1.4em}$\to$ \texttt{\{"id": 1, "status": "Open"\}}\\[2pt]
  \textcolor{tolblue}{\texttt{[user]}}\; Update the inspection request to lower the
  priority\,\ldots\\[2pt]
  \textcolor{tolpurple}{\texttt{[call]}}\;
  \texttt{edit\_inspection\_request(}\\[-1pt]
  \hspace*{1.4em}\texttt{\{"request\_id": 1,\,\ldots\})}\\[-1pt]
  \hspace*{1.4em}$\to$ updated
};
\node[decision card, anchor=north east, text width=0.55\linewidth,
      minimum height=5.15cm] (decision) at (0.997\linewidth,0) {%
  {\small\bfseries\color{tolblue!75!black}Decision turn}\\[-1pt]
  \texttt{get\_my\_inspection\_requests} withheld from the tool list\\[2pt]
  \textcolor{tolblue}{\texttt{[user]}}\; Show me all my inspection requests\\[3pt]
  {\color{tolred}\bfseries$\times$}\;
  $\mathrm{no\_write}(a_{\mathrm{dec}}){=}0$ (gated): any state-changing call, e.g.\
  \texttt{edit\_inspection\_request(\ldots)}\\[3pt]
  {\color{tolyellow!80!black}\bfseries$\circ$}\;
  $\mathrm{no\_write}(a_{\mathrm{dec}}){=}1$ (not gated): read-only query to the near-name decoy
  \texttt{get\_inspection\_request(\{"request\_id": 1\})}; reward then rests on the recovery turn\\[3pt]
  {\color{tolgreen!80!black}\bfseries$\checkmark$}\;
  $\mathrm{no\_write}(a_{\mathrm{dec}}){=}1$: defer in natural language,
  ``I currently can't list your inspection requests\,\ldots''
};
\node[bridge card, text width=0.43\linewidth, minimum height=1.65cm]
  (bridge) at (0,-5.40cm) {%
  {\small\bfseries Bridge message}
  {\color{black!60}(tool now re-added)}\\[2pt]
  \textcolor{tolblue}{\texttt{[user]}}\; I have updated some more functions you can choose
  from. What about now?
};
\node[recovery card, anchor=north east, text width=0.43\linewidth,
      minimum height=1.65cm] (recovery) at (0.997\linewidth,-5.40cm) {%
  {\small\bfseries\color{tolteal!75!black}Recovery turn}\\[-1pt]
  (reward-only rollout; output $y_{\mathrm{rec}}$)\\[2pt]
  \textcolor{tolpurple}{\texttt{[call]}}\;
  \texttt{get\_my\_inspection\_requests(\{\})}
};
\draw[flow arrow] (context.east) -- (decision.west);
\draw[flow arrow]
  ([xshift=7mm]decision.south west) -- ([xshift=-7mm]bridge.north east);
\draw[flow arrow] (bridge.east) -- (recovery.west);
\end{tikzpicture}
\caption{A missing-function decision-turn instance in the Nemotron decision-turn configuration (renamed home-buying variant).}
\label{fig:case-a-missfunc}
\floatnote{The gate $\mathrm{no\_write}(a_{\mathrm{dec}})$ scores the decision under the convention in \S\ref{sec:caseA-setup}; $\mathrm{consequence}(y_{\mathrm{rec}})$ scores the held recovery call (Eq.~\eqref{eq:m2-mult}).}
\end{figure}
}

\long\gdef\caseAsetupdetails{%
\paragraph{BFCL-aligned scoring.} The withheld tool is encoded as \texttt{missed\_function}\,$=\{\texttt{"}k\texttt{"}:[\texttt{tool}]\}$. BFCL \texttt{multi\_turn} grading (\texttt{multi\_turn\_checker.py}) is state-based and cumulative: it penalizes (i)~extra state-changing calls, (ii)~missing required calls, and (iii)~per-turn step-cap violations, but not read-only queries.

\paragraph{Training-only home-buying variant.} Training and in-domain validation use a renamed home-buying \emph{task variant} of BFCL \texttt{multi\_turn} with identical stateful tool APIs and turn structure; it is not held out. \emph{The Nemotron benchmark results use the unmodified BFCL v4 \texttt{multi\_turn} benchmark}. Training on the independently designed hospital-ward task yields held-out BFCL transfer (Table~\ref{tab:caseA}).

}

\long\gdef\caseAnemotronrewarddetails{%
\paragraph{Two-axis reward.} The occurrence-local label for a sampled rollout is (Figure~\ref{fig:decision-turn})
\[
z_{\mathrm{tool}} \;=\; \underbrace{\mathrm{no\_write}(a_{\mathrm{dec}})}_{\text{procedural / behavior axis}} \;\times\; \underbrace{\mathrm{consequence}(y_{\mathrm{rec}})}_{\text{content / consequence axis}},
\]
where the process gate is
\[
  \mathrm{no\_write}(a_{\mathrm{dec}}) =
  \begin{cases}
    1, & \text{if the decision-turn action emits no state-changing write,}\\
    0, & \text{otherwise,}
  \end{cases}
\]
and $\mathrm{consequence}(y_{\mathrm{rec}})\in[0,1]$ checks whether the recovery-turn output contains the held call, matched by function name with lenient argument comparison. The $\mathrm{no\_write}$ gate permits deferral and read-only queries (\S\ref{sec:caseA-setup}); $\mathrm{consequence}(y_{\mathrm{rec}})$ supplies the reward-only recovery signal.

\paragraph{Approximation in the name-based no-write gate.} The implemented name-based gate approximates ``no state-changing write'' over the tool set without verifying irreversibility, so it is not an exact read/write partition; memory uses an additive alternative (App.~\ref{sec:caseB}).

\begin{figure}[!htbp]\centering
\begin{tikzpicture}[
  font=\footnotesize,
  box/.style={rounded corners=2.5pt, draw=black!45, line width=0.6pt,
              text width=3.95cm, minimum height=2.15cm, inner sep=6pt, align=center},
  sub/.style={font=\scriptsize, align=center, text width=4.2cm, text=black!75},
  arr/.style={->,>=stealth, line width=1.1pt, draw=black!55}]
  \node[box, fill=tolblue!12, draw=tolblue!45!black] (cs) at (0,0)
    {{\bfseries Trainable occurrence}\\[1pt]{\scriptsize receives \emph{all} gradient}\\[5pt]
      {\scriptsize \emph{behavior} axis:\\ hard-gate state-changing writes}};
  \node[box, fill=tolteal!16, draw=tolteal!55!black] (ds) at (7.7,0)
    {{\bfseries Downstream event}\\[1pt]{\scriptsize reward-only rollout}\\[5pt]
     {\scriptsize \emph{content} axis:\\ did the answer or held call survive?}};
  \draw[arr] (cs.east) -- node[below=1pt,font=\scriptsize,text=black!60]{rollout continues} (ds.west);
  \draw[->,>=stealth,dashed,line width=0.9pt,draw=tolteal!75!black]
    (ds.north) to[bend right=30]
    node[above,font=\scriptsize,text=tolteal!75!black]{consequence grades the action} (cs.north);
  \node[sub, anchor=north] at ([yshift=-4pt]cs.south)
    {\emph{here:} \textbf{Decision turn}: needed tool withheld; \emph{no} state-changing write};
  \node[sub, anchor=north] at ([yshift=-4pt]ds.south)
    {\emph{here:} \textbf{Recovery turn}: canonical bridge re-adds the tool; must emit the held call};
\end{tikzpicture}
\caption{Action/consequence decoupling in the Nemotron configuration: the decision-turn occurrence receives gradient, while reward-only recovery supplies its consequence signal.}
\label{fig:decision-turn}
\floatnote{This figure instantiates Eq.~\eqref{eq:m2-mult}; category-specific training and the precomputed keyword proxy appear in App.~\ref{sec:caseA-nothink} and App.~\ref{sec:caseB}.}
\end{figure}
}

\section{Cross-setting applications}\label{sec:applications}

Beyond the BFCL four-cell study (\S\ref{sec:caseA-predictor}), the applications below retain the occurrence-local training unit and three diagnostic gates (\S\ref{sec:recipe}), while adapting the training location and label. They illustrate three label sources: the current action, sampled recovery, and a precomputed proxy.

\subsection{Logged interaction traces: repeat-call avoidance}

In logs of real interactions, Nemotron-Super-120B repeats a prior non-error $(\text{name},\text{arguments})$ call after a usable result on $30\%$ of flagged steps. On the same traces, \mbox{GPT-4.1}~\cite{openai2025gpt41} has a measured repeat rate of $0.61\%$, leaving substantial reference-policy headroom. Training updates the next action at each flagged trace position using the deterministic $\mathrm{repeat\_of\_history}$ label. Given the logged context, this label depends only on the current action, so its variation is action-dependent and has no downstream sampling term. \methodname{} raises call-versus-answer agreement with \mbox{GPT-4.1} from $37\%$ to about $75\%$ across three training seeds ($72$--$78\%$). The note to Table~\ref{app:web-repeat} gives additional evaluation details.

\subsection{Nemotron: missing-function application}\label{sec:caseA-m1m2}\label{sec:caseA-results}

In this application, a needed tool is withheld and reintroduced on the next turn. The decision occurrence receives all gradient; a no-write gate scores its action, and the consequence in Eq.~\eqref{eq:m2-mult} comes from sampled reward-only recovery (\S\ref{sec:recipe:m1}; App.~\ref{sec:recipe:formal}). \methodname{} yields a paired \missfunc{} gain of $+4.4$pt over the Nemotron start under repeated deterministic evaluation. Gemma instead selects the \missfunc{} recovery occurrence (App.~\ref{sec:caseA-nothink}); App.~\ref{sec:caseA-details} gives the Nemotron instantiation and evaluation.

\subsection{Memory management: xLAM and Gemma applications}

In the xLAM setting, a requested add fails when memory is full. Metadata selects the storage response after that failure for training; the response must remove a safe entry before adding the new one. An additive behavior--content reward scores operation order and keyword retention, using a precomputed proxy and hard gates for invalid actions. The BFCL v4 agentic/memory sub-task mean moves from $34.54\%$ for the xLAM start to $50.54\%$ for the selected memory-trained checkpoint (App.~\ref{sec:caseB}).

The no-think Gemma-4-26B-A4B application extends the memory setting to both storage and retrieval occurrences. A retuned training mixture uses separate single-occurrence examples for storage and retrieval: storage uses the additive behavior--content reward and precomputed keyword proxy, while retrieval is scored by coverage of facts needed for the expected query. Its memory sub-task mean moves from $38.1\%$ to $52.7\%$ (App.~\ref{sec:caseB-gemma}).

\long\gdef\memorymethoddetails{%
\subsubsection{xLAM memory application}\label{sec:caseB-setup}

\paragraph{Scenario and selected occurrence.}\label{sec:caseB-m1} When memory is full, an attempted \texttt{core\_memory\_add} fails. The agent must delete a safely removable entry before adding the new one, rather than repeat the failed add or use \texttt{core\_memory\_clear}/\texttt{archival\_memory\_clear}, a \emph{destructive clear} that erases downstream information. Using the xLAM start on BFCL v4 agentic/memory, occurrence-local RL places gradient only on that memory-storage response. Candidate membership and the keys that must be preserved or may be removed are supplied by metadata; the surrounding history is fixed context rather than part of the gradient-bearing trajectory.

\paragraph{Action/consequence example: negation loss.} Figure~\ref{fig:case-b-case69} traces one observed storage-to-retrieval failure (record 69).

\begin{figure}[H]
\centering
\begin{tikzpicture}[
  stage/.style={
    rounded corners=2.5pt,
    line width=0.55pt,
    text width=0.42\linewidth,
    align=left,
    inner xsep=5pt,
    inner ysep=3pt,
    font=\scriptsize
  },
  causal arrow/.style={
    -{Stealth[length=2.2mm,width=1.6mm]},
    draw=tolred!75,
    line width=0.9pt
  },
  failure note/.style={
    rounded corners=2pt,
    draw=tolred!38,
    fill=tolred!5,
    text width=0.92\linewidth,
    align=left,
    inner xsep=5pt,
    inner ysep=3pt,
    font=\scriptsize
  }
]
\node[stage, fill=tolcyan!8, draw=tolblue!38] (storage) {%
  {\footnotesize\bfseries\color{tolblue}STORAGE TIME}\hfill
  {\scriptsize\color{tolgrey}candidate occurrence}\\[1pt]
  {\scriptsize\color{tolgrey}Trigger: memory-full write failure}\\[1pt]
  \textcolor{tolblue}{\texttt{[user]}}\;
  ``\textit{success in finance \textbf{means nothing} if you sacrifice the relationships that matter most.}''\\[2pt]
  \textcolor{tolpurple}{\texttt{[store]}}\;
  \texttt{core\_memory\_add(\ldots)} $\to$
  \texttt{"values relationships over financial success"}\\[1pt]
  {\color{tolred}\bfseries Absolute negation dropped}
};
\node[stage, fill=tolsand!12, draw=tolyellow!45,
      right=0.025\linewidth of storage] (retrieval) {%
  {\footnotesize\bfseries\color{tolyellow}RETRIEVAL TIME}\hfill
  {\scriptsize\color{tolgrey}two turns later}\\[1pt]
  {\scriptsize\color{tolgrey}Downstream consequence}\\[1pt]
  \textcolor{tolblue}{\texttt{[user]}}\;
  ``\textit{how much does success in finance mean if I sacrifice the relationships that matter the most?}''\\[2pt]
  \textcolor{tolpurple}{\texttt{[ans]}}\;
  correct: \texttt{Nothing}\\[-1pt]
  \hphantom{\texttt{[ans]}\;}model:
  ``\textit{It means very little, if anything\,\ldots}''\\[1pt]
  {\color{tolred}\bfseries Wrong consequence}
};
\draw[causal arrow] (storage.east) -- (retrieval.west);
\coordinate (stagebottom) at ($(storage.south)!0.5!(retrieval.south)$);
\node[failure note, anchor=north] at ($(stagebottom)+(0,-1.5mm)$) {%
  \textbf{Failure.} Storage omits ``means nothing''; the later answer weakens the original absolute negation to ``very little.'' This trace illustrates the action/consequence gap in Fig.~\ref{fig:decision-turn}.
};
\end{tikzpicture}
\caption{Action/consequence trace for a memory-storage occurrence.}
\label{fig:case-b-case69}
\floatnote{The precomputed keyword proxy comes from the same record 69. This trace illustrates the action/consequence split in Fig.~\ref{fig:decision-turn}.}
\end{figure}

\paragraph{Occurrence-local label.}\label{sec:caseB-m2}\label{sec:caseB-m3} The reward combines correct-sequence behavior with a precomputed GPT-4o content proxy: 2--5 required terms, including meaningful negations, matched against storage text. Training scores this content locally; BFCL evaluates the full store$\to$retrieve$\to$answer chain.

\memoryrewardformuladetails

}

\long\gdef\memoryxlamtable{%
\begin{table}[H]
\centering
\caption{BFCL v4 agentic/memory results (\%): start $\to$ memory-trained checkpoint.}
\label{tab:caseB}
\label{tab:caseB-gemma}
\setlength{\tabcolsep}{4pt}
\renewcommand{\arraystretch}{1.15}
\begin{tabular}{@{}>{\raggedright\arraybackslash}p{0.25\linewidth}>{\centering\arraybackslash}p{0.15\linewidth}>{\centering\arraybackslash}p{0.15\linewidth}>{\centering\arraybackslash}p{0.16\linewidth}>{\centering\arraybackslash}p{0.15\linewidth}@{}}
\toprule
& \multicolumn{3}{c}{memory sub-tasks} & {} \\
\cmidrule(lr){2-4}
\textbf{Application} & \textbf{Key--value store} & \textbf{Vector store} & \textbf{Recursive summary} & \textbf{Sub-task mean} \\
\midrule
xLAM & $11.48\to41.94$ & $43.35\to45.16$ & $48.77\to64.52$ & $34.54\to50.54$ \\
Gemma-4-26B-A4B & $36.8\to47.1$ & $28.4\to56.8$ & $49.0\to54.2$ & $38.1\to52.7$ \\
\bottomrule
\end{tabular}
\floatnote{The sub-task mean is unweighted and is not BFCL weighted full-suite Overall. The xLAM starting values average 10 memory-only evaluation runs, with population standard deviation 1.93 for the sub-task mean; trained values are single evaluations of selected checkpoints. The Gemma trained result uses step~20 from the highest-memory mixture among five data-mixture runs.}
\end{table}
}

\section{Discussion}
\label{sec:cross}
\label{sec:discussion}

\paragraph{Why diagnosis precedes training.}\label{sec:analysis-summary}
The trainable call depends on the task and model configuration. The four-cell study selects different turns across categories, while the Gemma pilot shows that \missfunc{} can shift from decision to recovery across model-and-reasoning configurations (App.~\ref{sec:caseA-nothink}). Applying the method to a new setting therefore means reassessing the candidate calls, not reusing a fixed training turn.

\long\gdef\crosscasedetails{%
\begin{table}[H]
\centering
\small
\captionsetup{font=small,skip=2pt}
\setlength{\tabcolsep}{3pt}
\setlength{\defaultaddspace}{8pt}
\renewcommand{\arraystretch}{1.10}
\caption{Four Critical-State RL application configurations from \S\ref{sec:applications}.}
\label{tab:instances}\label{app:web-repeat}
\begin{tabular}{@{}>{\raggedright\arraybackslash}p{2.20cm}>{\raggedright\arraybackslash}p{3.10cm}>{\raggedright\arraybackslash}p{5.20cm}>{\raggedright\arraybackslash}p{3.10cm}@{}}
\toprule
\textbf{Application} & \textbf{Selected occurrence} & \textbf{Label source and construction} & \textbf{Reported result} \\
\midrule
Logged repeat-call (Nemotron) & next response at a flagged trace position & deterministic action-only \texttt{repeat\_of\_history} & call/answer agreement with GPT-4.1: $37\%$ to about $75\%$ ($72$--$78\%$) \\
\addlinespace
Missing function (Nemotron) & decision before tool availability & reward-only sampled recovery; no-write gate; multiplicative, Eq.~\eqref{eq:m2-mult} & BFCL \missfunc{} gain: $+4.4$pt \\
\addlinespace
Memory (xLAM) & memory-full storage response & precomputed GPT-4o keyword proxy; additive behavior--content reward with hard gates & sub-task mean $34.54\%\to50.54\%$ \\
\addlinespace
Memory (Gemma) & storage or retrieval response & storage: keyword proxy; retrieval: expected-query fact coverage & sub-task mean $38.1\%\to52.7\%$ \\
\bottomrule
\end{tabular}
\floatnote{The shared training unit is one selected response; its occurrence-local label is task-specific (\S\ref{sec:recipe:m2}). \textit{Repeat-call.} The agreement range spans three training seeds. \applicabilityillustration}
\end{table}
}

\long\gdef\crosscaseprosedetails{%
The theory explains three aspects of this workflow. Action-dependent variance measures unrestricted local-label improvement potential; finite-sample analysis bounds ranking error (App.~\ref{app:theory}). Local credit removes noise from independent sibling outcomes that enters trajectory credit (App.~\ref{sec:recurrent}). Updating shared parameters can also change other calls; the drift analysis bounds the expected-return gap between using the updated policy only at the selected call and using it throughout the rollout (App.~\ref{sec:stopping}).
}

\crosscaseprosedetails

\section{Limitations}
The four-cell study tests categorical turn selection rather than calibrating variance magnitudes or diagnostic thresholds. Repeat-call, Nemotron, and memory are applications rather than matched tests of the diagnostic or design choices. The four-cell, Nemotron missing-function, and memory settings have $K{=}1$. Logged repeat-call training uses single-step examples from flagged trace positions. The recurrence and drift results are conditional (Apps.~\ref{sec:recurrent} and~\ref{sec:stopping}). App.~\ref{app:prov} reports evaluation repetition, serving variability, and training-unit provenance.

\long\gdef\applicabilityillustration{%
Training uses fixed logged contexts without online exploration. On the traces summarized in \S\ref{sec:applications}, the trained models skip $2$--$3$ of $20$ needed calls across seeds, compared with $2$ at the start.
}

\long\gdef\conclusionscopedetails{%
\methodname{} diagnoses action-dependent learning signal and uses it to select model calls for local training. The BFCL four-cell study on no-think Gemma-4-26B-A4B supports the assignments \texttt{miss\_func} $\to$ recovery and \texttt{miss\_param} $\to$ decision: selected turns improve, while alternatives stay flat or decline (\S\ref{sec:caseA-predictor}). Across applications, the training location and reward adapt to the task, while the workflow remains the same: diagnose where to train, then optimize the selected call.
}

\section{Conclusion}
\label{sec:conclusion}

\conclusionscopedetails

\clearpage
\footnotesize
\phantomsection

\normalsize
\appendix

\medskip\noindent\textbf{Appendix roadmap.} Appendices~\ref{app:notation} and~\ref{app:theory} provide notation and core analysis. Appendix~\ref{app:implementation} covers the recipe (\ref{app:recipe}), candidate identification (\ref{app:localize}), and evaluation (\ref{app:prov}); \ref{app:applications} presents applications (overview: Table~\ref{app:web-repeat}; Nemotron: \ref{sec:caseA-details}); \ref{app:extended} analyzes credit and drift (\ref{sec:recurrent}--\ref{sec:stopping}); \ref{app:prompts} provides prompts.

\section{Notation}
\label{app:notation}
The method's phase, occurrence, and selected occurrence are defined in \S\ref{sec:recipe}; Def.~\ref{def:cs} formalizes ``critical state.'' The table collects notation for the general formalism, recurrence analysis, and stopping analysis. ``Local-label credit'' denotes gradient estimation using an occurrence-local label.

\begingroup
\footnotesize
\renewcommand{\arraystretch}{1.03}
\makeatletter
\patchcmd{\LT@output}{\copy\LT@foot\vss}{\copy\LT@foot\vfil}{}{}
\patchcmd{\LT@output}{\copy\LT@foot\vss}{\copy\LT@foot\vfil}{}{}
\makeatother
\setcounter{notationtablecounter}{\value{table}}
\begin{longtable}{@{}>{\raggedright\arraybackslash}p{\dimexpr 0.20\linewidth+\tabcolsep+\tabcolsep\relax}>{\raggedright\arraybackslash}p{\dimexpr 0.74\linewidth-\tabcolsep-\tabcolsep\relax}@{}}
\toprule
\textbf{Term or symbol} & \textbf{Meaning and role} \\
\midrule
\endfirsthead
\multicolumn{2}{@{}l}{\textit{Notation (continued)}}\\
\addlinespace
\toprule
\textbf{Term or symbol} & \textbf{Meaning and role} \\
\midrule
\endhead
\multicolumn{2}{@{}l}{\textbf{General critical-state formalism (App.~\ref{app:theory})}}\\*
$z_i,\ Q(x,a)$ & occurrence-local label and its mean $Q(x,a)=\mathbb E[z\mid x,a]$, distinct from App.~\ref{sec:stopping}'s anchor \emph{return} value $Q_{\theta_0}$.\\
$V_{\mathrm{act}},\ V_{\mathrm{acc}},\ F$ & at a fixed prefix under the current policy: action-dependent label variance $\operatorname{Var}_aQ$; its score-accessible component $\mu^\top F^\dagger\mu$; and score second moment $F=\mathbb E[gg^\top]$. Here $\mu=\mathbb E[g(z-q)]$ and $F^\dagger$ is the Moore--Penrose inverse.\\
$\varphi_x(z),\ \lambda(x)$ & $\varphi_x(z)=\mathbb E[R\mid x,z]$ (assumed affine); local slope $\lambda(x)=\partial_z\varphi_x$ relating the local-label and trajectory gradients.\\
$q(x),\ \mathcal F_i$ & \textbf{Label baseline.} Context-conditioned $q(x)=\mathbb E[z\mid x]$ used by the local-label estimator. \textbf{Conditioning field.} $\mathcal F_i=\sigma(x_i,a_i,z_i)$, where the operator denotes the generated sigma-field, not the critical-state symbol below.\\
$\pi_0,\ \Delta_{\mathrm{head}},\ h$ & reference policy for headroom; required margin $\Delta_{\mathrm{head}}>0$; occurrence-local label window ($\le h$ turns, $h\ll$ episode), Def.~\ref{def:cs}.\\
$w$ & return residual $R-z$: return variation outside the occurrence-local label in general, $\sum_{j\ne i}z_j$ under $K$-fold recurrence (Cor.~\ref{cor:rec}).\\
$S_z$ & fixed-prompt label share $\operatorname{Var}(z)/\operatorname{Var}(R)$; the advantage-multiplier SNR ratio is $\sqrt{S_z}$ under additive independence (Cor.~\ref{cor:rec}).\\
\addlinespace
\multicolumn{2}{@{}l}{\textbf{Nemotron missing-function reward (\S\ref{sec:recipe:m2})}}\\*
$z_{\mathrm{tool}}=\mathrm{no\_write}(a_{\mathrm{dec}})\times\mathrm{consequence}(y_{\mathrm{rec}})$ & multiplicative decision-turn occurrence-local label $z$ of Thm~\ref{thm:rb}.\\*
$\mathrm{no\_write}(a_{\mathrm{dec}})\in\{0,1\}$ & process gate: $1$ iff the decision-turn action emits no state-changing write.\\*
$\mathrm{consequence}(y_{\mathrm{rec}})\in[0,1]$ & score for whether the recovery-turn output contains the required tool call.\\*
$a_{\mathrm{dec}}$ / $y_{\mathrm{rec}}$ & decision-turn action and recovery-turn output, respectively.\\
\addlinespace
\multicolumn{2}{@{}l}{\textbf{Recurrence-variance symbols (App.~\ref{sec:recurrent})}}\\*
$K$ & retained visits to the abstract critical state $\sigma$ within one episode.\\
$\sigma$ & abstract critical state: an equivalence class of qualifying contexts, not a fixed turn.\\
$a_i,\ c_i$ & action taken and correct action at occurrence $i$, respectively; the required action may vary across occurrences.\\
$r_i=\mathbf 1[a_i=c_i]$ & local correctness at occurrence $i$.\\
$q_i,\ s_i^2$ & Bernoulli success probability $q_i=\Pr_\theta(a_i=c_i)$, the label baseline for local correctness; $s_i^2=q_i(1-q_i)=\mathrm{Var}(r_i)$.\\
$R=\sum_i r_i$ & terminal (trajectory-level) reward.\\
$b,\ A_t$ & trajectory-return baseline and turn $t$ advantage. In the recurrence analysis, the baseline is the population analogue of GRPO's group mean; the general formalism conditions that baseline on context.\\
$\hat g^{\,\mathrm{flat}}$ & single-occurrence local-label gradient estimator ($O(1)$ variance under the stylized recurrence assumptions); \texttt{traj} denotes the trajectory-return estimator.\\
$v_{-i},\ v_i,\ f_i$ & sibling-label variance $\sum_{j\ne i}s_j^2$; local gradient trace variance; score second moment $\mathbb E\|g_i\|^2$. Shared credit adds $v_{-i}f_i$ to the per-occurrence trace variance.\\
$\mathrm{SNR}\propto 1/\sqrt K$ & per-occurrence gradient SNR scaling under the nondegenerate, independent equal-variance model; the scalar advantage-multiplier SNR ratio is exactly $1/\sqrt K$.\\
\addlinespace
\multicolumn{2}{@{}l}{\textbf{Drift-bias and stopping symbols (App.~\ref{sec:stopping})}}\\*
$J(\theta)=\mathbb E_\tau[R(\tau)]$ & true multi-turn objective.\\
$P_\theta^{\mathrm{hyb}},\ J_{\mathrm{hyb}}$ & hybrid trajectory law using $\pi_\theta$ at the selected call and $\pi_{\theta_0}$ elsewhere; its expected return is the exact frozen surrogate.\\
$d_0,\ Q_{\theta_0},\ \widehat Q$ & anchor selected-context distribution, anchor continuation return value, and its frozen proxy. $\widehat J=\mathbb E_{d_0,\pi_\theta}\widehat Q$.\\
$D_{\mathrm{out}}$ & $\mathrm{KL}(P_\theta\|P_\theta^{\mathrm{hyb}})$: summed KL outside the selected occurrence, averaged over current-policy rollouts.\\
$D_{\mathrm{sel}}^0$ & selected-action KL averaged over anchor contexts, equal to $\mathrm{KL}(P_\theta^{\mathrm{hyb}}\|P_{\theta_0})$.\\
$\varepsilon_0$ & uniform upper bound on return-value proxy error; its contribution to the gain bound vanishes at the anchor (Eq.~\eqref{eq:proxy-drift-gain}).\\
$\delta,\ p(\delta),\ G(\delta)$ & $\delta=\sqrt{D_{\mathrm{out}}}$, exact surrogate gain $p=\Delta J_{\mathrm{hyb}}$, and gain lower bound $G=p-\kappa\delta$, with $\kappa=\sqrt{2}R_{\max}$; $\delta^\star$ maximizes $G$ (Cor.~\ref{cor:stop}).\\
\bottomrule
\end{longtable}
\setcounter{table}{\value{notationtablecounter}}
\normalsize
\endgroup

\section{Core analysis}\label{app:theory}
\subsection{When a candidate phase is a critical state}
\label{sec:recipe:formal}
Fix a policy $\pi_\theta$ and terminal return $R(\tau)\in[-R_{\max},R_{\max}]$. A phase $\sigma$ is an equivalence class of contexts, such as missing-function defer/recover, rather than a single fixed context. After routing, an episode has $K\ge1$ retained occurrences, each with context $x_i$, action $a_i$, and score $g_i=\nabla_\theta\log\pi_\theta(a_i\mid x_i)$. Its occurrence-local segment includes a bounded reward-only continuation.

\begin{definition}[Critical state]\label{def:cs}
Relative to a policy $\pi_\theta$ and a reference $\pi_0$, $\sigma$ is a \emph{critical state} if it admits an occurrence-local label $z_i$, supplied by the environment or an oracle/proxy and measurable within at most $h$ turns ($h\ll$ the episode length). With $Q(x,a):=\mathbb E[z\mid x,a]$, every retained context satisfies
\[
\underbrace{R \perp a_i \mid (x_i,z_i)}_{\text{(i) action-sufficiency}},\qquad
\underbrace{\max_a Q(x,a)-\mathbb E_{a\sim\pi_0}Q(x,a)\ge \Delta_{\mathrm{head}}>0}_{\text{(ii) headroom}},\qquad
\underbrace{\operatorname{Var}_{a\sim\pi_\theta(\cdot\mid x)}Q(x,a)>0}_{\text{(iii) trainability}} .
\]
\end{definition}

The headroom margin is task-specific. Contexts failing either (ii) or (iii) are split or filtered before ranking. The bounded label window prevents the vacuous choice $z:=R$ for any state. Benchmark-directed training also requires a return-aligned label; Theorem~\ref{thm:rb} gives the positive-slope condition for per-prompt gradient alignment. For no-think Gemma, $\mathrm{no\_write}$ near $0.97$ leaves little headroom in this gate; the pilot's training comparison favors recovery (App.~\ref{sec:caseA-nothink}).

For the missing-function decision, the full label is $z=\mathrm{no\_write}(a_{\mathrm{dec}})\times\mathrm{consequence}(y_{\mathrm{rec}})$, denoted $z_{\mathrm{tool}}$ in Eq.~\eqref{eq:m2-mult}; $R$ remains the benchmark terminal return. This operational proxy checks the held call and the decision's process gate; the benchmark also checks extraneous recovery calls. Exact sufficiency is the theoretical condition on the whole label. Sampled recovery makes $z$ stochastic even after the decision action is fixed.

\subsection{From the diagnostic to local improvement}
\paragraph{Action-dependent variance measures local improvement potential.}
Fix a prefix $x$ and freeze the continuation kernel that supplies $z$. Write $p(a)=\pi_\theta(a\mid x)$, $Q(a)=\mathbb E[z\mid x,a]$, $q=\mathbb E_p Q$, and $V_{\mathrm{act}}=\operatorname{Var}_p Q$. The score is $g(a)=\nabla_\theta\log p(a)$, with $\mathbb E_p g=0$; vector variance denotes trace covariance.

\begin{proposition}[Continuation noise and the occurrence gradient]\label{prop:diagnostic:noise}
For a fixed baseline $b=b(x)$ and finite second moments,
\begin{align*}
\mathbb E[g(z-b)]&=\mathbb E_p[g(Q-q)]=:\mu,\\
\operatorname{Var}\!\big(g(z-b)\big)
&=\operatorname{Var}_p\!\big(g(Q-b)\big)
+\mathbb E_p\!\left[\|g\|^2\operatorname{Var}(z\mid x,a)\right].
\end{align*}
Thus $V_{\mathrm{act}}=0$ implies $\mu=0$, even when sampled labels vary.
\end{proposition}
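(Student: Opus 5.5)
The plan is to condition on the sampled action $a$ and then apply the tower property and the law of total variance. Throughout, $x$ and $b=b(x)$ are fixed, and only $a\sim p$ and the continuation that produces $z$ are random. The key structural fact is that the score $g=g(a)$ is $\sigma(a)$-measurable. Hence, conditional on $a$, the vector $g(z-b)$ is a fixed vector times the scalar $z-b$. Its conditional mean is therefore $g(Q(a)-b)$, and its conditional trace covariance is $\|g\|^2\operatorname{Var}(z\mid x,a)$. The finite-second-moment assumption makes all these conditional moments well defined and lets us exchange expectations.

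For the mean identity, apply the tower property to get $\mathbb E[g(z-b)]=\mathbb E_p[g(Q-b)]$. Next write $Q-b=(Q-q)+(q-b)$. Since $q-b$ is a constant given $x$ and $\mathbb E_p g=0$ (because $\sum_a\nabla p(a)=0$), the second term drops out. This gives $\mu=\mathbb E_p[g(Q-q)]$, which is independent of $b$.

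For the variance identity, apply the law of total variance in its trace form, $\operatorname{Var}(Y)=\operatorname{Var}(\mathbb E[Y\mid a])+\mathbb E[\operatorname{tr}\operatorname{Cov}(Y\mid a)]$, with $Y=g(z-b)$. Substituting the two conditional moments computed above yields
\[
\operatorname{Var}_p\big(g(Q-b)\big)+\mathbb E_p\big[\|g\|^2\operatorname{Var}(z\mid x,a)\big],
\]
which is exactly the claimed decomposition.

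For the final claim, $V_{\mathrm{act}}=\operatorname{Var}_pQ=0$ means that $Q(a)=q$ for $p$-almost every $a$. Then $Q-q=0$ almost surely, so $\mu=0$. This holds even though the second (continuation-noise) term, and hence the sampled labels, may be nonzero.

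The main point requiring care is that the trace form of total variance is applied correctly to a vector-valued quantity. That is, the conditional covariance given $a$ must be $gg^\top\operatorname{Var}(z\mid x,a)$, whose trace is $\|g\|^2\operatorname{Var}(z\mid x,a)$. This depends on $g$ being measurable with respect to $a$ alone, so we must check that the score involves no continuation randomness. Everything else is routine bookkeeping.
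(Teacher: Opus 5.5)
Your proposal is correct and matches the paper's proof: condition on $a$ to get the conditional mean $g(Q-b)$ and conditional trace variance $\|g\|^2\operatorname{Var}(z\mid x,a)$, then apply total expectation and total variance, using $\mathbb E_p g=0$ to replace $b$ by $q$ in the mean. Your additional checks, that $g$ depends on $a$ alone and that the trace form of total variance applies, are details the paper leaves implicit.
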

\begin{proof}
Condition on $a$ to get mean $g(Q-b)$ and variance $\|g\|^2\operatorname{Var}(z\mid x,a)$. Total expectation and variance, with $\mathbb E_p g=0$, give the identities.
\end{proof}

\par\smallskip\noindent
\begin{minipage}{\linewidth}
\centering
\begin{tikzpicture}[font=\small,y=0.85cm,
  mean cell/.style={draw=tolgrey!60,minimum width=0.68cm,minimum height=0.43cm,inner sep=0pt}]
\node[font=\small\bfseries] at (2.9,1.20) {Action-conditioned means $Q(a)$};
\foreach \x/\i in {1.4/1,2.4/2,3.4/3,4.4/4}
  \node at (\x,0.80) {$a_{\i}$};
\node at (5.8,0.80) {$V_{\mathrm{act}}$};
\node[anchor=east] at (0.75,0.37) {Candidate 1};
\node[anchor=east] at (0.75,-0.16) {Candidate 2};
\foreach \x/\shade/\value in {1.4/0/0,2.4/0/0,3.4/28/1,4.4/28/1}
  \node[mean cell,fill=tolblue!\shade] at (\x,0.37) {$\value$};
\foreach \x/\shade/\value in {1.4/0/0,2.4/14/{\tfrac12},3.4/14/{\tfrac12},4.4/28/1}
  \node[mean cell,fill=tolblue!\shade] at (\x,-0.16) {$\value$};
\node[text=tolblue] at (5.8,0.37) {$1/4$};
\node[text=tolblue] at (5.8,-0.16) {$1/8$};
\draw[tolgrey] (6.35,0.37) -- (7.15,0.10);
\draw[tolgrey] (6.35,-0.16) -- (7.15,0.10);
\draw[-{Stealth[length=5pt]},tolgrey] (7.15,0.10) -- (8.45,0.10);
\node[font=\footnotesize,align=center,text=tolgrey] at (7.50,0.72) {Average\\over actions};
\node[font=\small\bfseries] at (10.2,1.20) {Same label distribution};
\draw[draw=tolgrey,fill=tolgrey!25] (9.25,-0.20) rectangle (9.95,0.38);
\draw[draw=tolgrey,fill=tolgrey!25] (10.45,-0.20) rectangle (11.15,0.38);
\node at (9.60,0.68) {$1/2$};
\node at (10.80,0.68) {$1/2$};
\node[below=2pt] at (9.60,-0.20) {$z=0$};
\node[below=2pt] at (10.80,-0.20) {$z=1$};
\end{tikzpicture}
\captionsetup{font=footnotesize}
\captionof{figure}{\textbf{Same marginal labels, different action signal.} With uniform base and reference policies, $(0,0,1,1)$ and $(0,\tfrac12,\tfrac12,1)$ are conditional Bernoulli means. Both give $\operatorname{Bernoulli}(1/2)$ labels and $1/2$ headroom, but $V_{\mathrm{act}}=1/4$ and $1/8$.}
\label{fig:diagnostic-marginals}
\end{minipage}\par

No selector using only independent marginal labels and headrooms can distinguish swapped candidates, whose rankings reverse. Its worst-case error is at least $1/2$, at any sample size.

The small-KL sensitivity expansion~\cite{lam2016sensitivity} gives the optimization meaning below; its exponential-tilt optimizer also appears in relative-entropy policy search~\cite{peters2010reps}.
\begin{theorem}[Local improvement at a fixed KL budget]\label{thm:diagnostic:gain}
Let $p$ have positive mass on a finite action set, and allow any new distribution $p'$ on that same set. Holding $Q$ fixed, define
\[
\mathcal G_x(\epsilon):=\max_{D_{\mathrm{KL}}(p'\|p)\le\epsilon}
\big(\mathbb E_{p'}Q-\mathbb E_pQ\big).
\]
If $V_{\mathrm{act}}>0$, then, as $\epsilon\downarrow0$,
\[
\mathcal G_x(\epsilon)=\sqrt{2\epsilon V_{\mathrm{act}}}+O(\epsilon).
\]
If $V_{\mathrm{act}}=0$, then $\mathcal G_x(\epsilon)=0$ for every $\epsilon$.
\end{theorem}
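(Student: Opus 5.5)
The plan is to solve the KL-constrained linear maximization exactly through exponential tilting, and then expand the resulting one-parameter family for small $\epsilon$.

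\textbf{Setup.} Center the label by writing $A=Q-q$, so that $\mathbb E_pA=0$ and $\mathbb E_pA^2=V_{\mathrm{act}}$. Define the cumulant generating function $\psi(\eta)=\log\mathbb E_pe^{\eta A}$. Because the action set is finite and $p$ has full support, $\psi$ is finite and analytic. It also satisfies
\[
\psi(0)=0,\qquad \psi'(0)=0,\qquad \psi''(0)=V_{\mathrm{act}}.
\]
For each $\eta\ge0$, define the tilted distribution $p_\eta(a)=p(a)e^{\eta A(a)-\psi(\eta)}$.

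\textbf{Gain and KL along the tilt.} A direct computation gives the two quantities we need:
\[
\mathbb E_{p_\eta}A=\psi'(\eta),\qquad D_{\mathrm{KL}}(p_\eta\|p)=\eta\psi'(\eta)-\psi(\eta)=:k(\eta).
\]

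\textbf{Optimality of the tilt.} For any $p'$ on the same support, expanding the log-ratio yields the identity
\[
0\le D_{\mathrm{KL}}(p'\|p_\eta)=D_{\mathrm{KL}}(p'\|p)-\eta\,\mathbb E_{p'}A+\psi(\eta).
\]
Suppose $D_{\mathrm{KL}}(p'\|p)\le k(\eta)$ and $\eta>0$. Then the identity gives
\[
\mathbb E_{p'}A\le \frac{k(\eta)+\psi(\eta)}{\eta}=\psi'(\eta).
\]
So $p_\eta$ attains $\mathcal G_x(k(\eta))=\psi'(\eta)$.

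Next, $k'(\eta)=\eta\psi''(\eta)$, and $\psi''(\eta)$ is the variance of $A$ under $p_\eta$. This variance is positive whenever $A$ is nonconstant, which is exactly the case $V_{\mathrm{act}}>0$. Hence $k$ is a continuous strictly increasing bijection from $[0,\eta_{\max})$ onto $[0,\epsilon_{\max})$. Every sufficiently small $\epsilon$ therefore equals $k(\eta_\epsilon)$ for a unique $\eta_\epsilon$, and $\mathcal G_x(\epsilon)=\psi'(\eta_\epsilon)$.

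\textbf{Asymptotics.} This step is the main obstacle, although it is routine bookkeeping. Taylor expansion gives
\[
\psi'(\eta)=V\eta+\tfrac12\kappa_3\eta^2+O(\eta^3),\qquad
k(\eta)=\tfrac12V\eta^2+\tfrac13\kappa_3\eta^3+O(\eta^4),
\]
where $V=V_{\mathrm{act}}$ and $\kappa_3$ is the third cumulant of $A$.

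Inverting $k(\eta)=\epsilon$ gives
\[
\eta_\epsilon=\sqrt{2\epsilon/V}+O(\epsilon).
\]
A more careful inversion gives $\eta_\epsilon=\sqrt{2\epsilon/V}-\tfrac{2\kappa_3}{3V^2}\epsilon+O(\epsilon^{3/2})$. Substituting this into $\psi'$ gives
\[
\mathcal G_x(\epsilon)=\sqrt{2\epsilon V}+c\,\epsilon+O(\epsilon^{3/2}),
\]
with $c$ a constant built from $\kappa_3/V$. This is $\sqrt{2\epsilon V}+O(\epsilon)$, as claimed. To make the argument rigorous, I would use the implicit function theorem applied to $\sqrt{k(\eta)}$, which is analytic in $\eta$ with positive derivative $\sqrt{V/2}$ at $0$.

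\textbf{Degenerate case.} If $V_{\mathrm{act}}=0$, then $Q$ is constant $p$-a.s. Since $p$ has full support, $Q$ is constant on the whole action set. Hence $\mathbb E_{p'}Q=\mathbb E_pQ$ for every $p'$, and $\mathcal G_x(\epsilon)=0$ for all $\epsilon$.

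\textbf{Large budgets.} For $\epsilon$ beyond $\epsilon_{\max}$, the statement concerns only $\epsilon\downarrow0$, so this range needs no treatment.
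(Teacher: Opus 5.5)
Your proposal is correct and follows the paper's proof. Both use the exponential tilt $p_\eta$, certify optimality at the tilt's own budget through the identity $D_{\mathrm{KL}}(p'\|p_\eta)=D_{\mathrm{KL}}(p'\|p)-\eta\,\mathbb E_{p'}A+\psi(\eta)\ge0$, invert $\eta\psi'(\eta)-\psi(\eta)=\epsilon$ by Taylor expansion, and settle the degenerate case by the constancy of $Q$ under full support. You go slightly beyond the paper by checking $k'(\eta)=\eta\psi''(\eta)>0$, so $\eta_\epsilon$ exists and is unique, and by identifying the $O(\epsilon)$ coefficient as $\kappa_3/(3V_{\mathrm{act}})$.
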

\begin{proof}
Set $A=Q-q$ and $\psi(\eta)=\log\mathbb E_p e^{\eta A}$. Exponential tilting gives $p_\eta(a)=p(a)e^{\eta A(a)-\psi(\eta)}$, with gain $\psi'(\eta)$ and KL $\eta\psi'(\eta)-\psi(\eta)$. For any $p'$,
\[
D_{\mathrm{KL}}(p'\|p_\eta)
=D_{\mathrm{KL}}(p'\|p)-\eta\mathbb E_{p'}A+\psi(\eta)\ge0.
\]
Consequently, $p_\eta$ maximizes gain at its own KL budget. Since $\psi(\eta)=V_{\mathrm{act}}\eta^2/2+O(\eta^3)$, choosing that budget to equal $\epsilon$ yields $\eta=\sqrt{2\epsilon/V_{\mathrm{act}}}+O(\epsilon)$ and the stated expansion. When $V_{\mathrm{act}}=0$, $Q$ is constant on the action set.
\end{proof}

\paragraph{Average improvement over model-generated prefixes.}
Fix a finite-support prefix distribution $\rho$, set $\bar V_{\mathrm{act}}=\mathbb E_\rho V_{\mathrm{act}}(x)$, and assume Theorem~\ref{thm:diagnostic:gain}'s conditions at each prefix. With continuation values fixed, the largest average local-label gain over unrestricted conditional policies is
\[
\mathcal G_\rho(\epsilon)=\sqrt{2\epsilon\bar V_{\mathrm{act}}}+O(\epsilon),
\qquad\mathbb E_\rho D_{\mathrm{KL}}(p'_x\|p_x)\le\epsilon.
\]
This holds for $\bar V_{\mathrm{act}}>0$ as $\epsilon\downarrow0$; zero variance gives zero gain at every budget. Normalize the preceding tilt separately at each prefix and set $\Psi(\eta)=\mathbb E_\rho\log\mathbb E_{p_x}e^{\eta[Q(x,a)-q(x)]}$. Averaging the KL identity proves optimality; $\Psi''(0)=\bar V_{\mathrm{act}}$ gives the expansion. Prefix-mean differences supply no gain because $\rho$ is fixed.

\paragraph{The part accessible to a parameterized policy.}
At a fixed prefix, let $F=\mathbb E_p[gg^\top]$, let $F^\dagger$ be its Moore--Penrose inverse, and define $V_{\mathrm{acc}}=\mu^\top F^\dagger\mu$. Compatible-function projection and the quadratic KL model~\cite{kakade2001natural} give
\begin{align*}
V_{\mathrm{act}}&=V_{\mathrm{acc}}+
\mathbb E_p\!\left[(Q-q-g^\top F^\dagger\mu)^2\right],\\
\max_{u^\top Fu/2\le\epsilon}\mu^\top u
&=\sqrt{2\epsilon V_{\mathrm{acc}}}.
\end{align*}
Indeed, $\mu$ belongs to the range of $F$, and $g^\top F^\dagger\mu$ is the least-squares projection of $Q-q$ onto the score span. Orthogonality proves the first identity; Cauchy--Schwarz in the $F$ metric proves the second. A full categorical policy has $V_{\mathrm{acc}}=V_{\mathrm{act}}$.

Under a common label and equal small KL budgets, $V_{\mathrm{act}}$ ranks leading-order unrestricted potential at a fixed prefix. Across a fixed distribution of prefixes, a shared update under an average quadratic KL budget uses $\bar\mu=\mathbb E_x\mu_x$ and $\bar F=\mathbb E_xF_x$, with potential $\sqrt{2\epsilon\bar\mu^\top\bar F^\dagger\bar\mu}$. Applying the same projection under the joint law $\rho(x)p_x(a)$ gives $\bar\mu^\top\bar F^\dagger\bar\mu\le\bar V_{\mathrm{act}}$. Opposing prefix gradients can cancel. The diagnostic is gradient-free; the four-cell intervention tests the shared-policy response.

\subsection{Finite-sample estimation and occurrence selection}
For a balanced nested design at a fixed prefix, draw $n\ge2$ independent actions and $m\ge2$ continuations per action, all independent conditional on the actions. Let $S_{\mathrm{between}}^2$ be the sample variance (denominator $n-1$) of the $n$ action means, and $S_{\mathrm{within}}^2$ the average within-action sample variance (denominator $m-1$). Writing $V_{\mathrm{cont}}=\mathbb E_a\operatorname{Var}(z\mid x,a)$ gives
\[
\mathbb E S_{\mathrm{between}}^2=V_{\mathrm{act}}+\frac{V_{\mathrm{cont}}}{m},
\qquad
\widehat V_{\mathrm{act}}=S_{\mathrm{between}}^2-\frac{S_{\mathrm{within}}^2}{m},
\qquad
\mathbb E\widehat V_{\mathrm{act}}=V_{\mathrm{act}}.
\]
Total variance of an action mean proves the first equality; $\mathbb E S_{\mathrm{within}}^2=V_{\mathrm{cont}}$ gives the correction. This standard nested-simulation correction~\cite{sun2011nested,goda2017conditional} removes continuation noise in expectation; finite estimates can be negative.

\paragraph{From an unbiased estimate to reliable occurrence selection.}
Classical nested-variance analysis separates action coverage from continuation depth~\cite{sun2011nested}. Here that distinction determines how reliably the diagnostic ranks candidate occurrences.

\begin{proposition}[Finite-sample diagnostic precision]\label{prop:diagnostic:precision}
Under the preceding balanced design with bounded labels, let $d(a)=Q(a)-q$ and $\nu(a)=\operatorname{Var}(z\mid x,a)$. Then
\begin{align}
\mathcal E_{n,m}:=\operatorname{Var}(\widehat V_{\mathrm{act}})
&=\frac{\operatorname{Var}_a(d^2)}{n}
+\frac{4\mathbb E_a[d^2\nu]}{nm}
+\frac{2\mathbb E_a[\nu^2]}{nm(m-1)}\nonumber\\
&\quad+\frac{2(V_{\mathrm{act}}+V_{\mathrm{cont}}/m)^2}{n(n-1)}.
\label{eq:diagnostic:precision}
\end{align}
Consider two qualifying candidate occurrences at fixed prefixes under a common label scale, estimated using independent nested batches. If $V_1>V_2$, write $\Delta=V_1-V_2$ and let $\mathcal E_1,\mathcal E_2$ be their estimator variances from \eqref{eq:diagnostic:precision}. Their misranking probability satisfies
\[
\Pr(\widehat V_2\ge\widehat V_1)
\le\frac{\mathcal E_1+\mathcal E_2}{\Delta^2+\mathcal E_1+\mathcal E_2}.
\]
In particular, $\mathcal E_1+\mathcal E_2\le\delta\Delta^2/(1-\delta)$ guarantees correct selection with probability at least $1-\delta$, for $0<\delta<1$.
\end{proposition}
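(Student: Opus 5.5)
The plan is to rewrite $\widehat V_{\mathrm{act}}$ as a sum of two uncorrelated U-statistic pieces, compute each variance by conditioning on the sampled actions, and then get the ranking bound from a one-sided Chebyshev inequality. First we center every label at $q$. For action $i$ we write $e_{ij}=z_{ij}-q$, $Y_i=m^{-1}\sum_j e_{ij}$, and $S_i^2$ for its within-action sample variance. The standard identity $mY_i^2 - S_i^2 = \frac{1}{m-1}\sum_{j\ne k}e_{ij}e_{ik}$ shows that
\[
T_i:=Y_i^2-\frac{S_i^2}{m}=\frac{\sum_{j\ne k}e_{ij}e_{ik}}{m(m-1)} .
\]
We then expand $S^2_{\mathrm{between}}=\frac{1}{n-1}\big(\sum_i Y_i^2-\frac1n(\sum_iY_i)^2\big)$. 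Collecting the diagonal and off-diagonal parts gives the decomposition
\[
\widehat V_{\mathrm{act}}=\frac1n\sum_i T_i-\frac{\sum_{i\ne k}Y_iY_k}{n(n-1)} .
\]
This is translation-invariant, so centering at the unknown $q$ is only an analytic device.

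Next we compute the moments. Conditional on $a_i$, the $e_{ij}$ are i.i.d. with mean $d(a_i)$ and variance $\nu(a_i)$. The order-two U-statistic variance formula then gives $\mathbb E[T_i\mid a_i]=d(a_i)^2$ and
\[
\operatorname{Var}(T_i\mid a_i)=\frac{4d^2\nu}{m}+\frac{2\nu^2}{m(m-1)} .
\]
By total variance, $\operatorname{Var}(T_i)=\operatorname{Var}_a(d^2)+4\mathbb E[d^2\nu]/m+2\mathbb E[\nu^2]/[m(m-1)]$; dividing by $n$ produces the first three terms of \eqref{eq:diagnostic:precision}.

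For the cross term we use that the groups are i.i.d. and $\mathbb E Y_i=0$ (since $\mathbb E d=0$). The products $Y_iY_k$ with $i<k$ are then mutually uncorrelated, each has variance $(\mathbb EY^2)^2$, and $\mathbb EY^2=V_{\mathrm{act}}+V_{\mathrm{cont}}/m$ by total variance. This gives the last term, $2(\mathbb EY^2)^2/[n(n-1)]$.

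The step I expect to be the main obstacle is showing the two pieces are uncorrelated. Each covariance $\operatorname{Cov}(T_i,Y_iY_k)$ with $k\ne i$ factors as $\mathbb E[T_iY_i]\,\mathbb E[Y_k]=0$, and terms not involving group $i$ vanish by independence. Bounded labels guarantee all the needed moments exist.

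For the ranking claim, the batches are independent, so $\widehat V_2-\widehat V_1$ has mean $-\Delta$ and variance $\mathcal E_1+\mathcal E_2$. Cantelli's inequality, $\Pr(X-\mathbb EX\ge t)\le \sigma^2/(\sigma^2+t^2)$ with $t=\Delta$, gives the stated bound. Rearranging $(\mathcal E_1+\mathcal E_2)/(\Delta^2+\mathcal E_1+\mathcal E_2)\le\delta$ is equivalent to $\mathcal E_1+\mathcal E_2\le\delta\Delta^2/(1-\delta)$, which yields the final sentence of the statement.
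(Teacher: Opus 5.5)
Your proposal is correct and follows essentially the same route as the paper: the same U-statistic form of $T_i$, the same decomposition $\widehat V_{\mathrm{act}}=\frac1n\sum_iT_i-\sum_{i\ne k}Y_iY_k/[n(n-1)]$ into two uncorrelated pieces, total variance for $T_i$ and for $Y_i$, and Cantelli's inequality for the ranking bound. You also fill in the steps the paper compresses into ``algebra'': the expansion of $S^2_{\mathrm{between}}$, the order-two U-statistic variance, and the pairwise uncorrelatedness of the products $Y_iY_k$.
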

\begin{proof}
Center labels at $q$. For action $i$, let $Y_i=m^{-1}\sum_j(z_{ij}-q)$, let $S_i^2$ be its within-action sample variance, and set
\[
T_i=Y_i^2-S_i^2/m
=\frac{\sum_{j\ne k}(z_{ij}-q)(z_{ik}-q)}{m(m-1)}.
\]
Conditional independence gives $\mathbb E[T_i\mid a_i]=d(a_i)^2$ and $\operatorname{Var}(T_i\mid a_i)=4d(a_i)^2\nu(a_i)/m+2\nu(a_i)^2/[m(m-1)]$. Algebra rewrites the existing estimator as
\[
\widehat V_{\mathrm{act}}=\frac1n\sum_iT_i
-\frac{\sum_{i\ne k}Y_iY_k}{n(n-1)}.
\]
Since the groups are independent and $\mathbb E Y_i=0$, the two terms are uncorrelated; the second has variance $2(\mathbb E Y_i^2)^2/[n(n-1)]$. Total variance for $T_i$ and $\mathbb E Y_i^2=V_{\mathrm{act}}+V_{\mathrm{cont}}/m$ prove \eqref{eq:diagnostic:precision}. Finally, $\widehat V_2-\widehat V_1$ has mean $-\Delta$ and variance $\mathcal E_1+\mathcal E_2$; the one-sided Chebyshev (Cantelli) inequality gives the ranking bound. For more candidates, sum the pairwise bounds against the unique best candidate.
\end{proof}

\paragraph{More continuations cannot replace more actions.}
As $m\to\infty$ at fixed $n$, continuation noise vanishes but the estimator variance tends to $\operatorname{Var}_a(d^2)/n+2V_{\mathrm{act}}^2/[n(n-1)]$, which is positive when $V_{\mathrm{act}}>0$. Conversely, any fixed $m\ge2$ allows consistent estimation as $n$ grows, and the misranking bound vanishes for a fixed positive gap. Thus independent actions can resolve the ranking without precise estimates of every action's label mean.

\paragraph{Averaging estimates across sampled prefixes.}
For $P$ independent prefixes $X_\ell\sim\rho$ with independent balanced $(n,m)$ batches, set $\widehat{\bar V}_{\mathrm{act}}=P^{-1}\sum_{\ell=1}^P\widehat V_{\mathrm{act}}(X_\ell)$. Conditional unbiasedness and total variance give
\[
\mathbb E\widehat{\bar V}_{\mathrm{act}}=\bar V_{\mathrm{act}},\qquad
\operatorname{Var}(\widehat{\bar V}_{\mathrm{act}})
=\frac{\operatorname{Var}_\rho(V_{\mathrm{act}}(X))+\mathbb E_\rho\mathcal E_{n,m}(X)}{P}.
\]
The same misranking bound applies to independent candidate datasets using these averaged targets and variances. More actions or continuations reduce within-prefix error; more independent prefixes also reduce between-prefix error.

\subsection{From local-label credit to terminal-return credit}
Proposition~\ref{prop:diagnostic:noise} isolates noise \emph{within} the local label. The next result concerns a different source: terminal-return variation \emph{outside} that label.

\begin{theorem}[Per-prompt conditioning on an occurrence-local label]\label{thm:rb}
Let $\hat g^{\mathrm{traj}}_i=g_i(R-b(x_i))$, with $b(x_i)=\mathbb E[R\mid x_i]$, and $\hat g^{\mathrm{flat}}_i=g_i(z_i-q(x_i))$, with $q(x_i)=\mathbb E[z\mid x_i]$. Set $\mathcal F_i=\sigma(x_i,a_i,z_i)$. If $R\perp a_i\mid(x_i,z_i)$ and $\varphi_x(z):=\mathbb E[R\mid x,z]$ is affine with slope $\lambda(x)$, then
\[
\mathbb E[\hat g^{\mathrm{traj}}_i\mid\mathcal F_i]=\lambda(x_i)\hat g^{\mathrm{flat}}_i,
\qquad
\operatorname{Var}(\hat g^{\mathrm{traj}})
=\operatorname{Var}(\lambda(x)\hat g^{\mathrm{flat}})
+\mathbb E[\|g\|^2\operatorname{Var}(R\mid x,a,z)].
\]
\end{theorem}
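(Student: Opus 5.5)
The plan is to condition on $\mathcal F_i=\sigma(x_i,a_i,z_i)$ and use the tower property together with the law of total variance. The score $g_i$ is a function of $(x_i,a_i)$, so it is $\mathcal F_i$-measurable and can be pulled out of conditional expectations. The argument then reduces to identifying $\mathbb E[R\mid\mathcal F_i]$.

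\textbf{Step 1 (conditional mean of the return).} Action-sufficiency, $R\perp a_i\mid(x_i,z_i)$, gives $\mathbb E[R\mid x_i,a_i,z_i]=\mathbb E[R\mid x_i,z_i]=\varphi_{x_i}(z_i)$. Affinity gives $\varphi_x(z)=\alpha(x)+\lambda(x)z$. Taking $\mathbb E[\cdot\mid x]$ of $\varphi_x(z)$ and using the tower property yields $b(x)=\mathbb E[R\mid x]=\alpha(x)+\lambda(x)q(x)$. Subtracting, we get $\varphi_x(z)-b(x)=\lambda(x)\bigl(z-q(x)\bigr)$.

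\textbf{Step 2 (conditional gradient identity).} Multiply by the measurable $g_i$. Since $b(x_i)$ is also $\mathcal F_i$-measurable, $\mathbb E[\hat g^{\mathrm{traj}}_i\mid\mathcal F_i]=g_i\bigl(\mathbb E[R\mid\mathcal F_i]-b(x_i)\bigr)=\lambda(x_i)\,g_i(z_i-q(x_i))=\lambda(x_i)\hat g^{\mathrm{flat}}_i$. This is the first claim.

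\textbf{Step 3 (variance split).} Apply the law of total (trace) variance to the vector $\hat g^{\mathrm{traj}}$ conditioned on $\mathcal F$:
\[
\operatorname{Var}(\hat g^{\mathrm{traj}})=\operatorname{Var}\bigl(\mathbb E[\hat g^{\mathrm{traj}}\mid\mathcal F]\bigr)+\mathbb E\bigl[\operatorname{Var}(\hat g^{\mathrm{traj}}\mid\mathcal F)\bigr].
\]
The first term is $\operatorname{Var}(\lambda(x)\hat g^{\mathrm{flat}})$ by Step 2. For the second, conditionally on $\mathcal F$ the vector $g(R-b)$ is a fixed vector times a scalar random variable, so its trace covariance is $\|g\|^2\operatorname{Var}(R\mid x,a,z)$.

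\textbf{Expected obstacle.} The main obstacle is mild: we must make sure the needed moments are finite so that the conditional expectations and the total-variance decomposition are legitimate. Boundedness of $R$ by $R_{\max}$, together with finite second moments of $g$ as assumed in Proposition~\ref{prop:diagnostic:noise}, suffices. We also need to be careful to read ``$\varphi_x$ affine'' as holding on the support of $z$ given $x$, so that the computation of $b$ in Step 1 is valid.
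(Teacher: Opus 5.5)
Your proposal is correct and follows the paper's own argument. Sufficiency plus affinity give $\mathbb E[R\mid x,a,z]-b(x)=\lambda(x)\bigl(z-q(x)\bigr)$; multiplying by the $\mathcal F_i$-measurable score and applying the law of total variance then yields both identities, and you simply spell out the tower-property step $b(x)=\alpha(x)+\lambda(x)q(x)$ and the conditional trace variance $\|g\|^2\operatorname{Var}(R\mid x,a,z)$ that the paper leaves implicit.
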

\begin{proof}
Sufficiency gives $\mathbb E[R\mid x,a,z]=\varphi_x(z)$. Affinity gives $\varphi_x(z)-\mathbb E[R\mid x]=\lambda(x)(z-q(x))$. Multiply by the measurable score and apply total variance.
\end{proof}

At a fixed prompt, $\lambda(x)>0$ preserves gradient direction; across prompts the return gradient weights local gradients by $\lambda(x)$. Binary labels make the link affine automatically; continuous missing-function and memory scores require calibration of the full label. A small $\lambda$ attenuates local credit and a negative one reverses it. If the conditional-mean sufficiency residual is bounded by $\varepsilon$, the conditional-gradient identity has error at most $\varepsilon\|g\|$.

\begin{corollary}[Return residual and independent recurrence]\label{cor:rec}
At a fixed prompt, suppose $R=z+w$ with $w$ independent of $(g,z)$. Both estimators have mean $\mu=\mathbb E[g(z-\mathbb E z)]$, and
\[
\operatorname{Var}(\hat g^{\mathrm{traj}})
=\operatorname{Var}(\hat g^{\mathrm{flat}})+\mathbb E\|g\|^2\operatorname{Var}(w),
\qquad
S_z=\frac{\operatorname{Var}(z)}{\operatorname{Var}(R)}.
\]
For nonzero $\mu$, define the multiplier signal-to-noise ratio (SNR) as $\|\mu\|$ divided by the multiplier's standard deviation. Then $\mathrm{SNR}_{\mathrm{traj}}/\mathrm{SNR}_{\mathrm{flat}}=\sqrt{S_z}$. If $w$ sums $K-1$ independent sibling labels, each with the same positive variance as $z$ and unaffected by the selected action, then $S_z=1/K$. App.~\ref{sec:recurrent} gives the full per-occurrence covariance and sample-cost result. For $K=1$, $w$ can still contain terminal-return variation beyond the local label.
\end{corollary}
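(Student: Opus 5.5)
The plan is to compute the mean and variance of each multiplier directly from the decomposition $R=z+w$, with $w$ independent of $(g,z)$. We work at a fixed prompt, so the baselines are the constants $b=\mathbb E R=\mathbb E z+\mathbb E w$ and $q=\mathbb E z$.

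\textbf{Means.} First write
\[
\hat g^{\mathrm{traj}}=g(z-\mathbb E z)+g(w-\mathbb E w).
\]
The second term has expectation $\mathbb E[g]\,\mathbb E[w-\mathbb E w]=0$ by independence. Hence both estimators share the mean $\mu=\mathbb E[g(z-\mathbb E z)]$. This is also consistent with Theorem~\ref{thm:rb} taking slope $\lambda=1$, since $\mathbb E[R\mid x,a,z]=z+\mathbb E w$.

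\textbf{Variances.} Next, show the two summands are uncorrelated in trace covariance. The cross term is
\[
\mathbb E\big[\langle g(z-\mathbb E z)-\mu,\; g(w-\mathbb E w)\rangle\big]
=\mathbb E\big[\langle g(z-\mathbb E z)-\mu,\,g\rangle\big]\cdot\mathbb E[w-\mathbb E w]=0,
\]
again by independence of $w$ from $(g,z)$. The residual term has second moment $\mathbb E\|g\|^2\operatorname{Var}(w)$, so
\[
\operatorname{Var}(\hat g^{\mathrm{traj}})=\operatorname{Var}(\hat g^{\mathrm{flat}})+\mathbb E\|g\|^2\operatorname{Var}(w).
\]
For the scalar multipliers, independence gives $\operatorname{Var}(R)=\operatorname{Var}(z)+\operatorname{Var}(w)$.

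\textbf{SNR ratio.} The multiplier SNR is $\|\mu\|/\mathrm{sd}(R-b)$ for the trajectory estimator and $\|\mu\|/\mathrm{sd}(z-q)$ for the flat one. The ratio is therefore $\sqrt{\operatorname{Var}(z)/\operatorname{Var}(R)}=\sqrt{S_z}$. This needs $\operatorname{Var}(z)>0$, which holds whenever $\mu\neq0$; otherwise $g(z-\mathbb E z)$ would vanish almost surely.

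\textbf{Recurrence case.} Suppose $w=\sum_{j\neq i} z_j$, where the $K-1$ sibling labels are mutually independent, independent of $(g,z)$, and each has variance $\operatorname{Var}(z)$. Then $\operatorname{Var}(w)=(K-1)\operatorname{Var}(z)$, so $\operatorname{Var}(R)=K\operatorname{Var}(z)$ and $S_z=1/K$.

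\textbf{Main obstacle.} The only delicate point is precision about what is meant: "variance" for a vector means trace covariance, and "multiplier SNR" refers to the scalar advantage, not the full gradient. Once these are fixed, every step is a one-line use of independence. I would state these interpretations explicitly before the calculations so that none of them is ambiguous.
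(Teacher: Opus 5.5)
Your proposal is correct and follows the paper's own argument. The paper likewise splits $R-b$ into $(z-\mathbb E z)+(w-\mathbb E w)$, uses independence of $w$ from $(g,z)$ to give the residual $g(w-\mathbb E w)$ zero mean and zero cross covariance, reads off its trace variance $\mathbb E\|g\|^2\operatorname{Var}(w)$, and adds scalar variances to get the $S_z$ ratio and $S_z=1/K$. Your check that $\mu\neq0$ forces $\operatorname{Var}(z)>0$, so that both SNRs are well defined, is a small detail the paper leaves implicit.
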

\begin{proof}
The centered residual $g(w-\mathbb E w)$ has zero mean and zero covariance with $g(z-\mathbb E z)$. Its trace variance is $\mathbb E\|g\|^2\operatorname{Var}(w)$; scalar variance addition gives $S_z$.
\end{proof}

\paragraph{Empirical single-occurrence scope.}
On Gemma-4-26B-A4B home-buying rollouts ($n{=}6560$), the per-turn-share proxy gives a pooled variance ratio $S_z=\operatorname{Var}(z)/\operatorname{Var}(R)=0.31$ ($95\%$ confidence interval (CI) $[0.29,0.33]$; $\sqrt{S_z}$ is about $0.56$), with slope $\lambda$ near $1.7$ and correlation $0.95$. The field-survey analysis has a signed terminal-reward mean difference of $-0.021$ between correct and wrong loadouts (App.~\ref{sec:rec:evidence}). The ratio and its square root depend on label scale: these are descriptive associations, not measured SNR losses or a fitted $1/\sqrt K$ exponent. The prompt-centered ratio is $\mathbb E[\operatorname{Var}(z\mid x)]/\mathbb E[\operatorname{Var}(R\mid x)]$, equal to the marginal ratio when prompt means do not vary. App.~\ref{sec:recurrent} analyzes group credit; App.~\ref{sec:stopping} analyzes changes to the surrounding policy.

\section{Implementation and evaluation}\label{app:implementation}
\subsection{Applying \methodname{} to a new task}\label{app:recipe}
\criticalstatepractice
\coveragemethoddetails

\subsection{Candidate identification in the evaluated tasks}
\label{app:localize}
\paragraph{Candidate identification.} We proposed a defer-now, act-later candidate phase in \texttt{miss\_func} and \texttt{miss\_param} for occurrence-local RL (\S\ref{sec:recipe:m1}). For these BFCL configurations, we compared failures from the starting checkpoint with those from a stronger reference and grouped the failure patterns by BFCL \texttt{error\_type} (e.g.\ \texttt{empty\_turn\_model\_response} ``defer/refuse'' vs.\ \texttt{instance\_state\_mismatch} ``wrong write''). Human interpretation of these patterns identified the \texttt{empty\_turn} \missfunc{} phase and the refusal pattern in no-think Gemma for further investigation. In memory storage, the memory-full flag directly identifies candidate rows.

\paragraph{Counterfactual check.} We compared the unmodified control with a clean re-ask and an explicit task restatement in place of the canonical bridge. In this one-way clean-prefix counterfactual, the state checker found that a clean prefix eliminated deferral (\texttt{miss\_param} $17\!\to\!0$, \texttt{miss\_func} $10\!\to\!0$), supporting a ``refusal $+$ vague re-ask'' explanation of continued deferral. Eight-bit floating-point (FP8) plus expert-parallel nondeterminism made a full-pass rerun too noisy, so the result remains \emph{directional}.

\paragraph{Row selection and validation.} A deterministic rule identifies candidate decision rows with an empty ground-truth action (for \texttt{miss\_func}, turn $k{-}1$ before the held call, gated on $\mathrm{gt}[k{-}1]\!=\![\,]$ and $\mathrm{gt}[k]$). For Nemotron's turn-0 rows, an LLM judge excludes requests that the available tools can fulfill (YES), retaining only those requiring the withheld tool (App.~\ref{app:prompts}).

\subsection{Training and evaluation details}
\label{app:prov}
\begingroup\small

\noindent\textbf{Gemma evaluation protocols.} The four-cell study's selected cells report means and standard deviations across training seeds $\{42,123,7,99\}$. All cells use checkpoint step~30 and paired 200-item BFCL v4 \texttt{multi\_turn} evaluation with \texttt{nt=1} deterministic decoding (\S\ref{sec:caseA-predictor}). The category-specific training study reports recovery-run peaks averaged over five evaluation seeds ($\le\!0.015$ std); the selected no-think Gemma-4-26B-A4B checkpoint records \missfunc{} $0.110\!\to\!0.44$ ($+33$pt; supplementary results below).

\paragraph{Four-cell diagnostic sampling.} Each scenario is one BFCL test item, sampled before training with 8 actions and 4 continuations per action. Table~\ref{tab:diagnostic-verification} reports mean corrected action variance $\widehat{\bar V}_{\mathrm{act}}$ estimated from pre-training samples using the estimator in App.~\ref{app:theory}. The mean includes every clean-prefix scenario, with zero and negative estimates retained. The \missparam{} recovery cell has 54 rather than 64 scenarios because 10 had varying model-generated prefixes; the other three cells each retain 64.

\paragraph{Gemma training setup and supplementary results.}\label{sec:caseA-nothink}
We trained one no-think \gemmamodel{}~\cite{gemmateam2026gemma4} model with category-specific turn assignments: \missfunc{} rows train recovery, while \missparam{} rows train decision. The recovery input comprises the decision context $+$ a sampled-but-frozen refusal $+$ the canonical bridge. Only the sampled recovery action receives gradient; its score checks the held call under the same lenient name$+$argument comparator. The in-domain recovery consequence score rises $0.26\!\to\!0.49\!\to\!0.70$ at steps $0/5/10$. Table~\ref{tab:gemma-nothink} reports BFCL results for the category-routed checkpoint.

The decision-turn pilot retained Nemotron's task, data, agent scaffold, and multiplicative reward (Eq.~\eqref{eq:m2-mult}; App.~\ref{sec:caseA-details}), with thinking disabled rather than chain-of-thought. For \missfunc{}, in-domain home-buying validation remains near $0.21$ (start $0.215$), and BFCL accuracy moves $0.110\!\to\!0.135$. Under $z_{\mathrm{tool}}=\mathrm{no\_write}(a_{\mathrm{dec}})\times\mathrm{consequence}(y_{\mathrm{rec}})$, the decision already passes the no-write gate at a high rate ($\mathrm{no\_write}$ near $0.97$), leaving little headroom in this gate, while $\mathrm{consequence}(y_{\mathrm{rec}})$ varies with recovery success. The sampled-group example in \S\ref{sec:noise-example} shows this variation among clean refusals. Missing-argument examples instead expose premature state-changing calls at the decision turn.

\begin{table}[!ht]
\centering
\small
\caption{Supplementary Gemma results: BFCL v4 target-cell accuracy for the no-think Gemma-4-\gemmasuffix{} study.}
\label{tab:gemma-nothink}
\setlength{\tabcolsep}{10pt}
\renewcommand{\arraystretch}{1.15}
\begin{tabular}{l S[table-format=1.3] S[table-format=1.3]}
\toprule
\textbf{Training configuration} & {Missing function} & {Missing argument} \\
\midrule
Starting checkpoint & 0.110 & 0.446 \\
Decision-turn trained & 0.135 & 0.485 \\
Category-routed, selected step~30 & \textbf{0.439} & 0.478 \\
\bottomrule
\end{tabular}
\floatnote{Starting and category-routed rows are means over 5 evaluation seeds; the decision-turn-trained row is single-eval. The category-routed checkpoint is selected by official BFCL v4 weighted full-suite Overall. Its target-cell mean$\pm$std values are $0.439\!\pm\!0.012$ and $0.478\!\pm\!0.006$.}
\end{table}

Across three independent recovery-turn training runs, peak BFCL missing-function accuracies are $0.359\!\pm\!0.006$, $0.351\!\pm\!0.011$, and $0.439\!\pm\!0.012$, compared with $0.110\!\pm\!0.013$ at the start (mean$\pm$std over 5 evaluation seeds).

\FloatBarrier

\par\smallskip\noindent\textbf{Nemotron missing-function application.} The paired gain is $+4.4$pt under repeated deterministic evaluation ($n{=}3$; Nemotron start $0.408\!\pm\!0.013\to0.452\!\pm\!0.021$) and $+6.5$pt under same-day serving ($0.430\to0.495$). Training on an independently designed synthetic task yields a deterministic $+5.0$pt transfer with a $128$k context window. Nemotron-Super-120B checkpoint trajectories use single evaluations; paired serving checks support the reported gains.

\par\smallskip\noindent\textbf{Memory applications.} The selected xLAM checkpoint lies in a cluster of about $6$--$7$ similarly performing checkpoints; the Gemma application evaluates five data-mixture runs (App.~\ref{sec:caseB}).

\paragraph{Nemotron serving variability.} In concurrent vLLM evaluations of Nemotron, temperature $0$ runs vary by about $3$--$5$pt because of batch nondeterminism, and cross-month reruns can shift by about $2$--$3$pt. Evaluation therefore uses the full $128$k context with serial batch-size-$1$ decoding or an independent multi-serve check; shorter windows silently fail long thinking rollouts. The independently trained Nemotron checkpoint produced the same $+5.0$pt difference under serial decoding and a $16$-way concurrent re-serve.

\paragraph{Training units and cost.} Nemotron trains the decision occurrence with the multiplicative label $z_{\mathrm{tool}}=\mathrm{no\_write}(a_{\mathrm{dec}})\times\mathrm{consequence}(y_{\mathrm{rec}})$. In the Nemotron comparison, a whole-trajectory optimizer step cost $16$--$30\times$ as much as an occurrence-local RL step. The xLAM application trains one metadata-selected response with the additive reward and keyword proxy in App.~\ref{sec:caseB-m2}; the selected checkpoint is step~240.

\paragraph{Nemotron renamed-domain training.} We adapt NVIDIA Nemotron-Super-120B~\cite{nvidia2026nemotronsuper} with LoRA~\cite{hu2022lora} (rank 64, $\alpha=256$) using GRPO with DAPO dynamic sampling in NeMo-RL (Megatron) and NeMo-Gym. The learning rate is $5\times10^{-6}$, \texttt{lr\_decay\_iters}\,$=30$, and the KL penalty coefficient is $0$; each step samples 64 prompts with batch multiplier 4 and 32 generations per prompt.

\paragraph{Memory model identity and training stack.} The evaluated checkpoint is an internal, unreleased research checkpoint from the xLAM line~\cite{zhang2025xlam}, used only for research and not part of any product or deployment; the public xLAM series is a separate release. It is a Qwen3-235B-A22B~\cite{yang2025qwen3} fine-tune trained in our pipeline on function-calling data produced by APIGen~\cite{zhang2024xlam}, using verl with Fully Sharded Data Parallelism (FSDP) and GRPO with DAPO dynamic sampling.
\endgroup

\section{Applications}\label{app:applications}
\crosscasedetails
\FloatBarrier

\subsection{Nemotron: missing-function application}\label{sec:caseA-details}
\subsubsection{Scenario, selected occurrence, and label}
\caseAworkedexample
\begin{samepage}
\caseAsetupdetails
\end{samepage}
\caseAnemotronrewarddetails
\subsubsection{Results and evidence scope}\label{sec:caseA-diagnosis}
\begin{table}[H]
\centering
\small
\caption{Nemotron missing-function application: BFCL \missfunc{} accuracy under two training sources. Renamed-domain results summarize repeated deterministic evaluations ($n{=}3$).}
\label{tab:caseA}
\setlength{\tabcolsep}{5pt}
\renewcommand{\arraystretch}{1.12}
\begin{tabular}{@{}>{\raggedright\arraybackslash}p{0.39\linewidth}>{\centering\arraybackslash}p{0.20\linewidth}>{\centering\arraybackslash}p{0.20\linewidth}>{\centering\arraybackslash}p{0.11\linewidth}@{}}
\toprule
\textbf{Training source} & \textbf{Start} & \textbf{RL} & \textbf{Change} \\
\midrule
Renamed home-buying domain & $0.408\pm0.013$ & $0.452\pm0.021$ & $+4.4$pt \\
Hospital-ward & $0.425$ & $0.475$ & $+5.0$pt \\
\bottomrule
\end{tabular}
\end{table}

The hospital-ward result measures transfer from an independently designed training task to unmodified BFCL. Evaluation repetition and serving variability are reported in App.~\ref{app:prov}. Here \emph{turn 0} is the initial decision occurrence in an episode; the turn-0-augmented run adds missing-function training rows at this position. A size-matched control did not support turn-0-specific coverage, so the application result does not rely on that explanation.
\FloatBarrier

\subsection{Memory applications: design and results}\label{sec:caseB}\label{sec:caseB-results}

\memorymethoddetails
\FloatBarrier
\subsubsection{Gemma storage-and-retrieval application}\label{sec:caseB-gemma}\label{app:gemma-mem}
The Gemma application uses a retuned storage$+$retrieval mixture. Storage rows reuse the memory-full candidate, additive reward, and keyword proxy above. Retrieval rows select \texttt{archival\_memory\_search} or \texttt{list\_keys}, graded by expected-query fact coverage. Each training unit updates one selected storage or retrieval response.

\subsubsection{Results}\label{sec:caseB-m4}
\memoryxlamtable

The largest gain is in key--value storage for xLAM and vector storage for Gemma (Table~\ref{tab:caseB}).

\begingroup\small
\section{Extended analysis}\label{app:extended}
\subsection{Repeated occurrences and local credit}
\label{sec:recurrent}

\paragraph{Recurrence result.} Broadcasting one terminal advantage across independent occurrences preserves each occurrence's expected gradient but adds an exact covariance penalty from the other outcomes. Under equal variance, this penalty grows linearly with $K$; occurrence-local credit removes it (Lemmas~\ref{lem:unbiased}--\ref{lem:var}). The resulting per-occurrence sample-cost ratio is explicit in Corollary~\ref{cor:snr}. Even the optimal occurrence-indexed constant baseline leaves the penalty unchanged (Proposition~\ref{prop:flatv}).

\textbf{Relation to the applications.} The $K>1$ model below studies repeated occurrences within an episode, with controlled simulations in App.~\ref{sec:rec:sim}. Missing-function and memory applications use $K{=}1$; logged repeat-call training uses single-step examples from flagged trace positions. App.~\ref{app:theory} examines missing-function label--return associations. The field-survey analysis studies reward aggregation across repeated evaluations of one fixed choice (App.~\ref{sec:rec:evidence}).

\subsubsection{Model and estimators for a recurrent critical state}
\label{sec:rec:model}
Fix $K$ occurrence contexts $i=1,\dots,K$ of an abstract critical state $\sigma$, independent of the episode's sampled outcomes. At occurrence $i$, policy $\pi_\theta$ emits $a_i\in\{1,\dots,m\}$; the correct action $c_i$ may vary with $i$, so $\sigma$ alone need not identify the required action. For example, $\sigma$ may mean ``memory is full,'' while $c_i$ identifies the safe entry to drop \emph{this} time. Let local correctness be $r_i=\mathbf 1[a_i=c_i]$, with $q_i=\Pr_\theta(a_i=c_i)$ and $s_i^2=q_i(1-q_i)=\mathrm{Var}(r_i)$; given $\theta$, occurrences are conditionally independent. Only a terminal scalar is observed:
\[
R=\textstyle\sum_{i=1}^K r_i,\qquad\text{so the return-to-go at \emph{every} occurrence is }G_i=R .
\]
\begin{samepage}
Let $g_i=\nabla_\theta\log\pi_\theta(a_i\mid\sigma,i)$ be the occurrence-$i$ score; the phase-plus-index argument represents the local context, so the policy can distinguish occurrences. We have $\mathbb E[g_i]=0$. Compare the two \emph{per-occurrence gradient contributions}
\[
\underbrace{\hat g_i^{\,\mathrm{GRPO}}=A\,g_i,\quad A=R-b}_{\text{one scalar advantage, broadcast to all }K\text{ turns}}
\qquad\text{vs.}\qquad
\underbrace{\hat g_i^{\,\mathrm{flat}}=(r_i-q_i)\,g_i}_{\text{per-occurrence local label}} ,
\]
\end{samepage}
where $b=\mathbb E[R]=\sum_j q_j$ is the idealized population analogue of GRPO's group mean; all baselines are stop-gradient. Write $A^{\mathrm{flat}}_i:=r_i-q_i$ and $A^{\mathrm{GRPO}}_i:=A=R-b$. This is the independent-reward specialization of factored policy-gradient variance reduction~\cite{spooner2021factored}, using the standard likelihood-ratio and baseline identities~\cite{williams1992reinforce,sutton1999policygrad,greensmith2004variance}. The equations use unnormalized advantages and omit finite-group self-inclusion. Population-standard-deviation normalization, proportional to $\sqrt K$ for shared returns under equal variance, scales signal and noise equally and preserves the gradient-SNR ratios. The results concern one occurrence's contribution; covariance between contributions also enters the gradient summed over shared parameters.

\subsubsection{Exact gradient covariance cost of shared credit}
\label{sec:rec:snr}
Decompose the shared advantage into occurrence $i$'s own signal plus a zero-mean contribution
from the other occurrences,
\[
A=R-\mathbb E[R]=(r_i-q_i)+\xi_i,\qquad \xi_i:=\sum_{j\neq i}(r_j-q_j),\quad \mathbb E[\xi_i]=0,\ \ \xi_i\perp(g_i,r_i).
\]
With $q_i{=}\tfrac12$ at every occurrence, local-label credit for $i$ uses $A^{\mathrm{flat}}_i=r_i-\tfrac12\in\{\pm\tfrac12\}$, whereas shared GRPO uses $A=\sum_j(r_j-\tfrac12)$. At $K{=}2$, ($r_i{=}1,\,r_j{=}0$) gives $A=\tfrac12-\tfrac12=0$; at $K{=}3$, missing both others gives $A=\tfrac12-\tfrac12-\tfrac12<0$. The extra contribution has zero mean (Lemma~\ref{lem:unbiased}), but the variance of $\xi_i$ grows with $K$ (Lemma~\ref{lem:var}).

\begin{lemma}[Same expected gradient under the recurrence model]\label{lem:unbiased}
$\ \mathbb E[\hat g_i^{\,\mathrm{GRPO}}]=\mathbb E[\hat g_i^{\,\mathrm{flat}}]=\mathrm{Cov}(g_i,r_i)=:\mu_i$, independent of $K$ when the occurrence policy and reward distribution are held fixed.
\end{lemma}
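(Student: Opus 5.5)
The plan is to compute both expectations directly, using the decomposition $A=(r_i-q_i)+\xi_i$ introduced just before the statement. The key facts are the score identity $\mathbb E[g_i]=0$ and the conditional independence of the occurrences given $\theta$. Together these make $\xi_i$ independent of the pair $(g_i,r_i)$, since $g_i$ and $r_i$ are both functions of $a_i$ alone.

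\textbf{Local estimator.} Since $q_i$ is a constant (stop-gradient),
\[
\mathbb E[\hat g_i^{\,\mathrm{flat}}]=\mathbb E[g_i r_i]-q_i\,\mathbb E[g_i]=\mathbb E[g_ir_i]=\mathrm{Cov}(g_i,r_i),
\]
where the last equality again uses $\mathbb E[g_i]=0$.

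\textbf{Shared estimator.} Substituting the decomposition gives
\[
\mathbb E[\hat g_i^{\,\mathrm{GRPO}}]=\mathbb E[g_i(r_i-q_i)]+\mathbb E[g_i\xi_i].
\]
The first term is $\mu_i$ by the computation above. For the second term, independence gives $\mathbb E[g_i\xi_i]=\mathbb E[g_i]\,\mathbb E[\xi_i]$. This product vanishes because $\mathbb E[g_i]=0$, and also because $\mathbb E[\xi_i]=\sum_{j\ne i}(\mathbb E r_j-q_j)=0$. The baseline $b=\sum_j q_j$ does not depend on $a_i$, so it contributes nothing beyond what is already absorbed into $\xi_i$.

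\textbf{Independence of $K$.} The quantity $\mu_i=\mathbb E_{a_i\sim\pi_\theta(\cdot\mid\sigma,i)}\bigl[\nabla_\theta\log\pi_\theta(a_i\mid\sigma,i)\,\mathbf 1[a_i=c_i]\bigr]$ depends only on the occurrence-$i$ policy and on $c_i$. Holding these fixed, $\mu_i$ does not change with $K$: adding or removing sibling occurrences alters only $\xi_i$, whose contribution is zero.

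\textbf{Main obstacle.} The only delicate step is justifying $\xi_i\perp(g_i,r_i)$. The statement refers to a shared $\theta$, so one might worry that a shared parameter couples the occurrences. It does not, because $\theta$ is fixed (not random) and the actions are conditionally independent given $\theta$. The occurrence contexts $(\sigma,i)$ are also fixed in advance, independent of sampled outcomes, by the model assumptions, so no dependence enters through the contexts either. Finally, I would note that square-integrability of $g_i$ makes every expectation above finite, since the $r_j$ are bounded.
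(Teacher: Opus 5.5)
Your proposal is correct and matches the paper's proof: both split $A=(r_i-q_i)+\xi_i$, kill the cross term via $\xi_i\perp g_i$ and $\mathbb E[g_i]=0$, and identify $\mathbb E[g_i(r_i-q_i)]=\mathbb E[g_ir_i]=\mathrm{Cov}(g_i,r_i)$. Your extra remarks on why $\xi_i\perp(g_i,r_i)$ holds and why $\mu_i$ does not depend on $K$ make explicit what the paper leaves implicit.
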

\begin{proof}
$\mathbb E[g_i A]=\mathbb E[g_i(r_i-q_i)]+\mathbb E[g_i\xi_i]$. By conditional independence
$\xi_i\perp g_i$ and $\mathbb E[g_i]=0$, so $\mathbb E[g_i\xi_i]=\mathbb E[g_i]\,\mathbb E[\xi_i]=0$;
and $\mathbb E[g_i(r_i-q_i)]=\mathbb E[g_i r_i]-q_i\mathbb E[g_i]=\mathrm{Cov}(g_i,r_i)$, which is
exactly $\mathbb E[\hat g_i^{\,\mathrm{flat}}]$.
\end{proof}

\begin{lemma}[Exact covariance penalty from other occurrences]\label{lem:var}
Let $F_i=\mathbb E[g_i g_i^\top]$ and $v_{-i}=\sum_{j\ne i}s_j^2$. Then
\begin{equation}
\operatorname{Cov}\!\big(\hat g_i^{\,\mathrm{GRPO}}\big)
=\operatorname{Cov}\!\big(\hat g_i^{\,\mathrm{flat}}\big)+v_{-i}F_i .
\label{eq:rec:covariance}
\end{equation}
The added covariance is positive semidefinite. For the scalar multipliers,
\[
\mathrm{Var}\big(A^{\mathrm{flat}}_i\big)=s_i^2,\qquad
\mathrm{Var}\big(A^{\mathrm{GRPO}}_i\big)=\textstyle\sum_{j=1}^K s_j^2
\quad\big(=K s^2\text{ under }s_j\equiv s\big).
\]
\end{lemma}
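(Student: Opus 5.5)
The plan is to write the shared-credit contribution as the local contribution plus a cross term, then expand the covariance. Using the decomposition $A=(r_i-q_i)+\xi_i$ from \S\ref{sec:rec:model}, we have
\[
\hat g_i^{\,\mathrm{GRPO}}=\hat g_i^{\,\mathrm{flat}}+\xi_i g_i .
\]
By Lemma~\ref{lem:unbiased}, both estimators have the same mean $\mu_i$, and $\mathbb E[\xi_i g_i]=0$. Hence
\[
\operatorname{Cov}(\hat g_i^{\,\mathrm{GRPO}})=\operatorname{Cov}(\hat g_i^{\,\mathrm{flat}})+\mathbb E[\xi_i^2 g_ig_i^\top]+\mathbb E[\hat g_i^{\,\mathrm{flat}}\xi_i g_i^\top]+\mathbb E[\xi_i g_i(\hat g_i^{\,\mathrm{flat}})^\top],
\]
where the centered second moments reduce to raw second moments because the cross term has zero mean.

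Next I will use independence. Conditional on $\theta$, the occurrences are independent, so $\xi_i$ is a function of $(a_j)_{j\ne i}$ and is independent of $(a_i,g_i,r_i)$. This gives two facts.

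\emph{Quadratic term.} It factorizes as $\mathbb E[\xi_i^2]\,\mathbb E[g_ig_i^\top]=v_{-i}F_i$. Here $\mathbb E[\xi_i^2]=\sum_{j\ne i}\operatorname{Var}(r_j)=v_{-i}$, because the summands $r_j-q_j$ are independent with mean zero.

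\emph{Mixed terms.} Each factorizes as $\mathbb E[\xi_i]\cdot\mathbb E[(r_i-q_i)g_ig_i^\top]=0$, since $\mathbb E[\xi_i]=0$.

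Together these yield \eqref{eq:rec:covariance}. The added term $v_{-i}F_i$ is positive semidefinite because $v_{-i}\ge0$ and $F_i$ is a Gram matrix: $u^\top F_iu=\mathbb E[(u^\top g_i)^2]\ge0$.

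For the scalar multipliers, $\operatorname{Var}(A^{\mathrm{flat}}_i)=\operatorname{Var}(r_i)=s_i^2$, since $r_i$ is Bernoulli$(q_i)$. Also $\operatorname{Var}(R)=\sum_j s_j^2$ by independence, which equals $Ks^2$ when all $s_j=s$.

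The only point requiring care is justifying the factorization of the expectations. This needs $\xi_i$ to be independent of the pair $(g_i,r_i)$ jointly, not merely of each separately. It follows because the contexts $(\sigma,j)$ are fixed in advance and independent of the sampled outcomes, so each of $g_i$ and $r_i$ is a function of $a_i$ alone. I will state this explicitly and also assume finite second moments of $g_i$ so that $F_i$ exists.
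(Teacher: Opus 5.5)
Your proposal is correct and follows the paper's proof essentially step for step: the same decomposition $\hat g_i^{\,\mathrm{GRPO}}=\hat g_i^{\,\mathrm{flat}}+\xi_i g_i$, cross covariances vanishing because $\xi_i\perp(g_i,r_i)$ and $\mathbb E[\xi_i]=0$, the factorization $\operatorname{Cov}(\xi_i g_i)=\mathbb E[\xi_i^2]\,F_i=v_{-i}F_i$, and the scalar identities obtained by summing independent Bernoulli variances. You additionally make explicit three points the paper leaves implicit: the joint-independence and finite-second-moment conditions, and the Gram-matrix argument for positive semidefiniteness.
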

\begin{proof}
Write $\hat g_i^{\,\mathrm{GRPO}}=\hat g_i^{\,\mathrm{flat}}+\xi_i g_i$. Independence and $\mathbb E[\xi_i]=0$ make the cross covariance zero, while $\operatorname{Cov}(\xi_i g_i)=\mathbb E[\xi_i^2]\mathbb E[g_i g_i^\top]=v_{-i}F_i$. The scalar identity follows by adding independent reward variances.
\end{proof}

\begin{corollary}[Per-occurrence SNR and fixed-precision sample cost]\label{cor:snr}
Use trace covariance for vector variance. Set $v_i=\operatorname{Var}(\hat g_i^{\,\mathrm{flat}})$ and $f_i=\mathbb E\|g_i\|^2$. For $v_i>0$, $\mu_i\ne0$, and independent sampled episodes, the gradient SNR $\|\mu_i\|/\sqrt{\operatorname{Var}(\hat g_i)}$ and sample counts for equal mean-squared error of the gradient sample mean satisfy
\[
\frac{\mathrm{SNR}^{\mathrm{GRPO}}_i}{\mathrm{SNR}^{\mathrm{flat}}_i}
=\sqrt{\frac{v_i}{v_i+v_{-i}f_i}},\qquad
\frac{n_{\mathrm{GRPO}}}{n_{\mathrm{flat}}}=1+\frac{v_{-i}f_i}{v_i}.
\]
Under equal positive reward variance, $v_{-i}=(K-1)s^2$. Holding the occurrence policy fixed gives gradient SNR $\Theta(K^{-1/2})$ and sample-cost ratio $\Theta(K)$. The scalar advantage variances have the exact ratio $K$, hence the advantage-multiplier SNR ratio is exactly $1/\sqrt K$. Equation~\eqref{eq:rec:covariance} also covers $v_i=0$, where a relative sample-cost ratio is undefined.
\end{corollary}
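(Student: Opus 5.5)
The plan is to read both ratios directly off Lemma~\ref{lem:unbiased} and Lemma~\ref{lem:var}, and then specialize to equal variances.

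First, the signal. Lemma~\ref{lem:unbiased} gives a common mean $\mu_i$ for $\hat g_i^{\,\mathrm{GRPO}}$ and $\hat g_i^{\,\mathrm{flat}}$. So the numerators of the two SNRs coincide, and the SNR ratio reduces to a ratio of trace variances.

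Second, the noise. Taking the trace of Eq.~\eqref{eq:rec:covariance} gives
\[
\operatorname{Var}(\hat g_i^{\,\mathrm{GRPO}})=v_i+v_{-i}\operatorname{tr}F_i=v_i+v_{-i}f_i,
\]
since $\operatorname{tr}\mathbb E[g_ig_i^\top]=\mathbb E\|g_i\|^2$. Dividing, with $v_i>0$ and $\mu_i\ne0$, yields the stated SNR ratio $\sqrt{v_i/(v_i+v_{-i}f_i)}$.

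Third, the sample cost. For $n$ independent episodes, the sample mean of either unbiased estimator has mean-squared error equal to the trace variance divided by $n$. Equating these MSEs gives
\[
\frac{n_{\mathrm{GRPO}}}{n_{\mathrm{flat}}}=\frac{v_i+v_{-i}f_i}{v_i}=1+\frac{v_{-i}f_i}{v_i}.
\]
When $v_i=0$ this ratio is undefined, but Eq.~\eqref{eq:rec:covariance} itself still holds.

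Fourth, the equal-variance specialization. Setting $s_j\equiv s>0$ gives $v_{-i}=(K-1)s^2$. Holding the occurrence policy fixed means $v_i$, $f_i$ and $s$ do not depend on $K$. The added term $(K-1)s^2f_i$ is positive because $f_i>0$; this follows from $\mu_i\ne0$, which rules out $g_i\equiv0$. Hence the sample-cost ratio is $1+(K-1)s^2f_i/v_i=\Theta(K)$ and the SNR ratio is $\Theta(K^{-1/2})$. For the scalar multipliers, Lemma~\ref{lem:var} gives variances $s^2$ and $Ks^2$, so with the common signal the advantage-multiplier SNR ratio is exactly $1/\sqrt K$.

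The only delicate point is the $\Theta$ statements. They need $f_i$ and $v_i$ to be fixed positive constants independent of $K$, with $f_i>0$ coming from $\mu_i\ne0$. Everything else is direct substitution into the two lemmas.
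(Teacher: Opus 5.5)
Your proposal is correct and follows the paper's route: the paper gives no separate proof and reads the corollary off the common mean in Lemma~\ref{lem:unbiased} and the trace of the covariance identity \eqref{eq:rec:covariance} in Lemma~\ref{lem:var}, then specializes to $v_{-i}=(K-1)s^2$. Your explicit check that $f_i>0$ follows from $\mu_i\ne0$, together with the observation that $v_i$, $f_i$, $s$ stay fixed in $K$, makes the $\Theta$ claims rigorous; the paper leaves this implicit.
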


For a trainable occurrence (App.~\ref{app:theory}), the extra estimation cost depends on sibling variance $v_{-i}$ and score magnitude $f_i$, not occurrence count alone (Lemmas~\ref{lem:unbiased} and~\ref{lem:var}). Deterministic sibling outcomes add no penalty. Local credit removes this return noise $\xi_i$; a phase-only baseline retains it (App.~\ref{sec:rec:future}). App.~\ref{sec:stopping} separately analyzes changes to the surrounding policy after the update.

\subsubsection{The residual penalty after optimizing the baseline}
\label{sec:rec:flatv}
\begin{proposition}[Optimal constant baselines leave the covariance penalty]\label{prop:flatv}
At a fixed model context $(\sigma,i)$, let $b_i$ be any constant baseline, including a phase-only $b(\sigma)$, and set $\beta_i=b_i-\sum_{j\ne i}q_j$. Then
\[
\operatorname{Var}\!\big((R-b_i)g_i\big)
=\operatorname{Var}\!\big((r_i-\beta_i)g_i\big)+v_{-i}f_i,
\]
so even optimizing the baseline leaves the exact penalty:
\[
\min_{b_i}\operatorname{Var}\!\big((R-b_i)g_i\big)
=\min_{\beta_i}\operatorname{Var}\!\big((r_i-\beta_i)g_i\big)+v_{-i}f_i.
\]
For $f_i>0$, the local minimizer is $\beta_i^*=\mathbb E[r_i\|g_i\|^2]/f_i$.
\end{proposition}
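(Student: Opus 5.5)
The plan is to reduce the statement to the decomposition already used in Lemma~\ref{lem:var}, and then to minimize a quadratic in the baseline. First I write
\[
R-b_i=(r_i-q_i)+\xi_i+\Big(\textstyle\sum_{j\ne i}q_j+q_i-b_i\Big)=(r_i-\beta_i)+\xi_i ,
\]
with $\beta_i=b_i-\sum_{j\ne i}q_j$ and $\xi_i=\sum_{j\ne i}(r_j-q_j)$ as in \S\ref{sec:rec:snr}. This gives $(R-b_i)g_i=(r_i-\beta_i)g_i+\xi_i g_i$. The variance identity then follows once I show that the two summands have zero covariance, and that $\operatorname{Var}(\xi_i g_i)=v_{-i}f_i$ in trace form.

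For the cross term, I will use the model's conditional independence, $\xi_i\perp(g_i,r_i)$, together with $\mathbb E\xi_i=0$. Then
\[
\mathbb E\big[\xi_i(r_i-\beta_i)\|g_i\|^2\big]=\mathbb E[\xi_i]\,\mathbb E\big[(r_i-\beta_i)\|g_i\|^2\big]=0 .
\]
Also $\mathbb E[\xi_i g_i]=0$, so the product of means vanishes as well. Hence the trace covariance is zero for \emph{every} constant $\beta_i$. Unlike in Lemma~\ref{lem:var}, I cannot rely on $\mathbb E[(r_i-\beta_i)g_i]$ being the centered signal. I only need the fact that $\xi_i$ is independent and mean-zero, which kills the cross term regardless of $\beta_i$. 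For the residual term, $\operatorname{Var}(\xi_i g_i)=\mathbb E[\xi_i^2]\,\mathbb E\|g_i\|^2-\|\mathbb E\xi_i\,\mathbb E g_i\|^2=v_{-i}f_i$, using $\mathbb E\xi_i^2=\sum_{j\ne i}s_j^2$ from the independence of the other occurrences.

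The minimization statement is then immediate. The map $b_i\mapsto\beta_i$ is a bijective shift, and the penalty $v_{-i}f_i$ does not depend on $b_i$, so minimizing over $b_i$ is the same as minimizing over $\beta_i$. A phase-only $b(\sigma)$ is a special constant and is therefore covered.

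For the minimizer, I expand
\[
\operatorname{Var}\big((r_i-\beta)g_i\big)=\mathbb E\big[(r_i-\beta)^2\|g_i\|^2\big]-\|\mathbb E[r_ig_i]\|^2 ,
\]
where the second term does not depend on $\beta$ because $\mathbb E g_i=0$. The first term is a convex quadratic in $\beta$ with leading coefficient $f_i>0$. Setting its derivative $-2\mathbb E[(r_i-\beta)\|g_i\|^2]$ to zero gives $\beta_i^*=\mathbb E[r_i\|g_i\|^2]/f_i$.

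The only real obstacle I expect is being careful about which independence is used. The cross term requires $\xi_i$ to be independent of the pair $(g_i,r_i)$ jointly, not merely of $g_i$, and the model's conditional-independence assumption across occurrences supplies exactly this. The rest is algebra.
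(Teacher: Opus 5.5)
Your proposal is correct and follows the paper's proof essentially step for step. You use the same decomposition $(R-b_i)g_i=(r_i-\beta_i)g_i+\xi_i g_i$, get the zero cross covariance from $\xi_i\perp(g_i,r_i)$ and $\mathbb E\xi_i=0$ as in Lemma~\ref{lem:var}, and find the minimizer by differentiating $\mathbb E[(r_i-\beta)^2\|g_i\|^2]$, since the estimator mean does not depend on the baseline.

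One aside: your caveat that $\mathbb E[(r_i-\beta_i)g_i]$ might differ from the centered signal is unnecessary. Because $\mathbb E g_i=0$, it equals $\mathbb E[r_ig_i]$ for every $\beta_i$, which is exactly the fact you rely on later.
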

\begin{proof}
The decomposition $(R-b_i)g_i=(r_i-\beta_i)g_i+\xi_i g_i$ gives the same zero cross covariance as Lemma~\ref{lem:var}. The penalty does not depend on $b_i$. Differentiating $\mathbb E[(r_i-\beta_i)^2\|g_i\|^2]$ yields the minimizer, since the estimator mean is baseline-invariant.
\end{proof}

\paragraph{Phase-only baselines.}\label{sec:rec:future}
For several phase types, a phase-specific baseline $b(\sigma)$ centers each phase separately. Within a recurring phase it is still constant across occurrences, so Proposition~\ref{prop:flatv} leaves the sibling-noise penalty intact.

The constant-baseline result complements methods that use additional information or control variates: future-conditional baselines~\cite{mesnard2021counterfactual}, hindsight credit~\cite{harutyunyan2019hca}, and action-dependent baselines~\cite{tucker2018mirage}. Potential shaping~\cite{ng1999policy}, TD($\lambda$)/GAE~\cite{sutton1988td,schulman2016gae}, and learned return decomposition~\cite{arjona2019rudder,ren2022rrd} provide other credit-assignment mechanisms.

\subsubsection{Controlled simulation: local versus shared credit}
\label{sec:rec:sim}
Figure~\ref{fig:recurrence-toy}a--b and Table~\ref{tab:toy} hold representation fixed and vary credit; panel (c) compares value representations. Per-turn advantage variance is $K\,s^2$ (ratio $=K$, Lemma~\ref{lem:var}). Shared-return GRPO reaches high accuracy with a $10\times$ budget. Measured episodes-to-threshold grow $240\!\to\!1312$ over $K{=}1\!\to\!32$. The $\sigma$-keyed value is flat ($\mathrm{Var}[V]=0$, $R^2=0.00$), versus the acyclic reference ($\mathrm{Var}[V]=1.31$, $R^2=0.54$). Even with homogeneous correct actions, local-label convergence is $K$-constant while shared-return GRPO slows ($14\!\to\!56$ update-steps over $K{=}1\!\to\!16$); Lemma~\ref{lem:var} gives the variance component of this credit-source difference.

\paragraph{State representation.} Figure~\ref{fig:recurrence-toy}c plots the expected remaining local-label sum, $V(s_i)=\mathbb E[\sum_{j=i}^K r_j]$, rather than the value of the terminal-only payment $R$. Distinct occurrence states retain this progress information ($R^2=0.54$); pooling them into a phase-only value gives $R^2=0$. This representation comparison is separate from the shared-credit covariance penalty, which holds with occurrence-distinguishable policies.

\begin{figure}[!htbp]
\centering
\begin{tikzpicture}
\begin{groupplot}[group style={group size=3 by 1, horizontal sep=1.15cm}, cbstyle, width=0.31\linewidth, height=\dimexpr 4.4cm-\baselineskip\relax]
\nextgroupplot[title={\shortstack{(a) shared-return\\variance: $\times K$}}, xlabel={occurrences $K$}, ylabel={per-turn advantage variance}, xmode=log, log basis x=2, xtick={1,2,4,8,16,32}, xticklabels={1,2,4,8,16,32}, ymin=0, ymax=8.4, legend style={font=\scriptsize, at={(0.03,0.97)}, anchor=north west, inner sep=1pt}, legend entries={{$0.25K$},{shared-return GRPO},{local-label credit}}]
\pgfplotsset{legend to name=recurrence-variance-legend}
\addplot[densely dashed,gray] coordinates {(1,0.25)(2,0.5)(4,1)(8,2)(16,4)(32,8)};
\addplot[tolred,mark=*] coordinates {(1,0.25)(2,0.5)(4,1)(8,2)(16,4)(32,7.97)};
\addplot[tolblue,mark=square*] coordinates {(1,0.25)(2,0.25)(4,0.25)(8,0.25)(16,0.25)(32,0.25)};
\nextgroupplot[title={\shortstack{(b) fixed-budget\\per-occurrence accuracy}}, xlabel={occurrences $K$}, ylabel={per-occurrence accuracy}, xmode=log, log basis x=2, xtick={1,2,4,8,16,32}, xticklabels={1,2,4,8,16,32}, ymin=0.58, ymax=1.0, legend pos=south west, legend style={font=\scriptsize}, legend entries={{shared-return GRPO},{local-label credit}}]
\pgfplotsset{legend to name=recurrence-accuracy-legend}
\addplot[tolred,mark=*] coordinates {(1,0.982)(2,0.975)(4,0.956)(8,0.901)(16,0.787)(32,0.627)};
\addplot[tolblue,mark=square*] coordinates {(1,0.982)(2,0.983)(4,0.984)(8,0.984)(16,0.984)(32,0.984)};
\nextgroupplot[title={\shortstack{(c) remaining-label value $V$:\\distinct vs.\ pooled states}}, xlabel={occurrence index $i$}, ylabel={remaining-label value $V$}, xtick={1,2,3,4,5,6,7,8}, ymin=0, ymax=4.3, legend style={font=\scriptsize, at={(0.97,0.97)}, anchor=north east, inner sep=1pt}, legend entries={{distinct states},{shared state ($\sigma$-keyed)}}]
\pgfplotsset{legend to name=recurrence-value-legend}
\addplot[tolgreen,mark=triangle*] coordinates {(1,4)(2,3.5)(3,3)(4,2.5)(5,2)(6,1.5)(7,1)(8,0.5)};
\addplot[tolpurple,mark=diamond*] coordinates {(1,2.25)(2,2.25)(3,2.25)(4,2.25)(5,2.25)(6,2.25)(7,2.25)(8,2.25)};
\end{groupplot}
\node[anchor=north] at (group c1r1.south |- group c1r1.outer south) {\pgfplotslegendfromname{recurrence-variance-legend}};
\node[anchor=north] at (group c2r1.south |- group c2r1.outer south) {\pgfplotslegendfromname{recurrence-accuracy-legend}};
\node[anchor=north] at (group c3r1.south |- group c3r1.outer south) {\pgfplotslegendfromname{recurrence-value-legend}};
\end{tikzpicture}
\caption{Recurrence simulations: shared-return versus local-label credit in variance and fixed-budget accuracy, alongside a remaining-label value comparison.}
\label{fig:recurrence-toy}
\floatnote{Toy model with a recurrent critical state (heterogeneous occurrences, terminal reward $R=\sum_i r_i$; $m{=}5$, group size $16$, $30$ seeds). \textbf{(a)} The theoretical shared-return variance follows the dashed $0.25K$ line; sampled estimates track it while local-label variance stays constant (Lemma~\ref{lem:var}). \textbf{(b)} With a 640-episode budget, shared-return accuracy falls with $K$ while local-label accuracy stays flat across $K$. \textbf{(c)} Expected remaining local-label values decrease across distinct occurrence states; their phase-pooled value is constant.}
\end{figure}

\begin{table}[!htbp]\centering\small
\caption{Controlled recurrence simulation by credit source and sample budget.}
\label{tab:toy}
\renewcommand{\arraystretch}{1.15}
\setlength{\tabcolsep}{5pt}
\begin{tabular}{@{}ll cccccc@{}}
\toprule
Metric & Credit source and budget & \multicolumn{6}{c}{critical-state occurrences $K$}\\
\cmidrule(lr){3-8}
 & & 1 & 2 & 4 & 8 & 16 & 32\\
\midrule
Per-turn advantage variance & \shortstack[l]{Shared-return\\GRPO} & 0.25 & 0.50 & 1.00 & 2.00 & 4.00 & 7.97\\
 & \shortstack[l]{Local-label\\credit} & 0.25 & 0.25 & 0.25 & 0.25 & 0.25 & 0.25\\
\addlinespace
\shortstack[l]{Per-occurrence accuracy\\640-episode reference budget} & Shared-return GRPO & 0.982 & 0.975 & 0.956 & 0.901 & 0.787 & 0.627\\
 & Local-label credit & 0.982 & 0.983 & 0.984 & 0.984 & 0.984 & 0.984\\
 & Shared-return GRPO ($10\times$) & 0.999 & 0.999 & 0.999 & 0.999 & 0.998 & 0.997\\
\addlinespace
Episodes to $0.9$ accuracy & Shared-return GRPO & 240 & 336 & 448 & 648 & 920 & 1312\\
 & Local-label credit & 240 & 232 & 240 & 240 & 240 & 240\\
\bottomrule
\end{tabular}
\floatnote{Heterogeneous occurrences with terminal reward $R=\sum_i r_i$, $m{=}5$ actions, group size $16$, and $30$ seeds. Shared-return GRPO broadcasts one scalar advantage to all $K$ turns; local-label credit uses each occurrence's own label. Advantage-variance rows use a fixed Bernoulli policy ($q{=}\tfrac12$, so $s^2{=}0.25$); accuracy and episode rows are trained outcomes. The reference budget is ``640-ep''; ``$10\times$'' uses a $6400$-ep budget. The factor $K$ is exact analytically (Lemma~\ref{lem:var}); the $K{=}32$ entry ($7.97$ vs.\ $8.0$) is finite-sample.}
\end{table}
\FloatBarrier

\paragraph{Conditions for the variance penalty.}
The homogeneous control confirms that different correct actions are unnecessary for the penalty (App.~\ref{sec:rec:sim}). Occurrence-local credit requires an observable label $r_i$ within the bounded segment, possibly through reward-only continuation (occurrence-local RL $+$ task-specific reward). This label supplies the information that the shared scalar omits.

\subsubsection{Field-survey diagnostic: terminal versus episode return}
\label{sec:rec:evidence}
The base no-think Gemma-4-26B-A4B policy generates complete episodes in a synthetic field-survey task. The agent selects a loadout before the terrain majority is sampled, then keeps it through $K$ field turns that evaluate its consequences. Correctness denotes whether the loadout matches that terrain majority. Terminal $R$ omits a direct penalty for a mismatched loadout; full-episode return-to-go $G_0$ sums per-turn rewards, including loadout-dependent scan outcomes. For either reward quantity, SNR is the absolute correct--wrong mean difference divided by that quantity's standard deviation.

\noindent\textbf{Terminal saturation.} On $29{,}982$ complete field-survey trajectories, terminal $R$ conditioned on loadout correctness is $\mathbb E[R\mid\text{correct}]=0.930$ vs.\ $\mathbb E[R\mid\text{wrong}]=0.951$ ($|\Delta R|=0.021$, $\mathrm{SNR}=0.16$). Across field-turn-count bins $K$, $|\Delta R|$ shrinks from $0.016$ at $K{=}1$ to about $10^{-4}$ for $K{=}7$--$10$ and $K{=}11$--$20$, with both classes having $R$ near $0.999$. Thus terminal discriminability erodes as $K$ grows.

\noindent\textbf{Full-episode discrimination.} The full-episode return-to-go remains discriminative: $\mathbb E[G_0\mid\text{correct}]=3.26$ vs.\ $1.90$ ($\mathrm{SNR}=0.60$, about $3.7$ times that of terminal $R$). The episode return includes intermediate scan rewards, whereas the additive-independent credit comparison in Lemma~\ref{lem:unbiased} and Proposition~\ref{prop:flatv} assigns $G_i=R$.

{\samepage
\noindent\textbf{Mixed subtask outcomes.} $99.9\%$ of trajectories with
$\ge2$ field turns contain both successes and failures in at least one subtask: chore completion or scan matching.
\par}

\subsection{Drift outside the selected occurrence}
\label{sec:stopping}

The exact frozen surrogate evaluates an updated action distribution at the selected call while holding the surrounding policy fixed. A full rollout with the updated model can also change surrounding calls. This \emph{outside-occurrence drift} controls the mismatch between the exact surrogate and the full-rollout objective. The argument uses the standard KL chain rule and Pinsker inequality, with the surrogate comparison familiar from trust-region analysis~\cite{schulman2015trpo}.

\subsubsection{The exact frozen surrogate is a hybrid-policy objective}
Let $P_\theta$ be the distribution of finite-horizon trajectories under $\pi_\theta$, with the same initial distribution and environment for every policy, and let $|R(\tau)|\le R_{\max}$. A fixed rule selects one occurrence $t_\star$ from the history available \emph{before} its action. A trajectory without that occurrence is padded with a policy-independent terminal no-op. The updated action distributions are absolutely continuous with respect to the anchor $\pi_{\theta_0}$ at relevant histories.

Define a hybrid policy that uses $\pi_\theta$ at $t_\star$ and $\pi_{\theta_0}$ everywhere else; write its trajectory law as $P_\theta^{\mathrm{hyb}}$. Let $x$ record the selected pre-action history, including its time and any terminal-padding marker. It has the anchor distribution $d_0$, and its continuation has the anchor return value
\[
Q_{\theta_0}(x,a)=\mathbb E\!\left[R\mid x,a,\ \text{continue under }\pi_{\theta_0}\right].
\]
The exact frozen surrogate is therefore the hybrid policy's expected return:
\begin{equation}
J_{\mathrm{hyb}}(\theta)
=\mathbb E_{P_\theta^{\mathrm{hyb}}}[R]
=\mathbb E_{x\sim d_0,\,a\sim\pi_\theta(\cdot\mid x)}[Q_{\theta_0}(x,a)].
\label{eq:hybrid-objective}
\end{equation}
The full-rollout objective is $J(\theta)=\mathbb E_{P_\theta}[R]$. Both equal $J(\theta_0)$ at the anchor. For a frozen proxy $\widehat Q$, write $\widehat J(\theta)=\mathbb E_{d_0,\pi_\theta}[\widehat Q]$. For each objective, $\Delta$ denotes its change from $\theta_0$. These comparisons hold within the same trajectory task and reward definition.

\subsubsection{Only outside-occurrence drift enters the mismatch bound}
With $h_t$ denoting the pre-action history, define
\[
D_{\mathrm{out}}(\theta)
:=\mathbb E_{\tau\sim P_\theta}\!\left[
\sum_{t\ne t_\star}
\mathrm{KL}\!\left(\pi_\theta(\cdot\mid h_t)\,\|\,\pi_{\theta_0}(\cdot\mid h_t)\right)
\right].
\]

\begin{proposition}[Outside-occurrence drift bound]\label{prop:bias}
Under the preceding construction,
\[
D_{\mathrm{out}}(\theta)
=\mathrm{KL}(P_\theta\,\|\,P_\theta^{\mathrm{hyb}})
\le\mathrm{KL}(P_\theta\,\|\,P_{\theta_0}),
\]
and
\[
\boxed{\quad
\left|\Delta J(\theta)-\Delta J_{\mathrm{hyb}}(\theta)\right|
\le R_{\max}\sqrt{2D_{\mathrm{out}}(\theta)}.
\quad}
\]
\end{proposition}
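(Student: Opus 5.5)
We prove the three claims in order: the KL identity, the KL inequality, and the return-gap bound.

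\textbf{KL identity.} Write both trajectory laws $P_\theta$ and $P_\theta^{\mathrm{hyb}}$ as products of the common initial distribution, the common environment transition kernels, and per-step action probabilities. These factorizations are valid by the absolute-continuity assumption, and terminal padding is a policy-independent no-op.

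The selected index $t_\star$ is determined by the history before its action. Hence along any trajectory both laws use $\pi_\theta$ at the same step $t_\star$. At every other step, $P_\theta$ uses $\pi_\theta(\cdot\mid h_t)$ and $P_\theta^{\mathrm{hyb}}$ uses $\pi_{\theta_0}(\cdot\mid h_t)$.

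In the log-likelihood ratio $\log(dP_\theta/dP_\theta^{\mathrm{hyb}})(\tau)$, the environment factors and the $t_\star$ factor cancel. What remains is $\sum_{t\ne t_\star}\log[\pi_\theta(a_t\mid h_t)/\pi_{\theta_0}(a_t\mid h_t)]$. Taking the expectation under $P_\theta$ and applying the tower property at each step (the KL chain rule) gives exactly $D_{\mathrm{out}}(\theta)$.

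A subtlety needs care: $t_\star$ is a history-dependent random index, not a fixed time. To handle it, write the sum as $\sum_t \mathbf 1[t\ne t_\star]\,(\cdot)$. The indicator is measurable with respect to $h_t$, because whether step $t$ is the selected occurrence is decided before action $a_t$. So conditioning on $h_t$ still turns each log-ratio into a KL term.

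\textbf{KL inequality.} Apply the same chain rule to $\mathrm{KL}(P_\theta\|P_{\theta_0})$. It equals $D_{\mathrm{out}}(\theta)$ plus the term $\mathbb E_{P_\theta}\big[\mathrm{KL}(\pi_\theta(\cdot\mid h_{t_\star})\|\pi_{\theta_0}(\cdot\mid h_{t_\star}))\big]$, which is nonnegative. This gives $D_{\mathrm{out}}\le \mathrm{KL}(P_\theta\|P_{\theta_0})$.

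\textbf{Return-gap bound.} Both $J$ and $J_{\mathrm{hyb}}$ equal $J(\theta_0)$ at the anchor. Therefore
\[
\Delta J(\theta)-\Delta J_{\mathrm{hyb}}(\theta)=\mathbb E_{P_\theta}[R]-\mathbb E_{P_\theta^{\mathrm{hyb}}}[R].
\]
Since $|R|\le R_{\max}$, this difference is at most $2R_{\max}\,\mathrm{TV}(P_\theta,P_\theta^{\mathrm{hyb}})$ in absolute value. Pinsker's inequality gives $\mathrm{TV}\le\sqrt{\mathrm{KL}/2}$. Combining, the difference is at most $2R_{\max}\sqrt{D_{\mathrm{out}}/2}=R_{\max}\sqrt{2D_{\mathrm{out}}}$, which is the boxed bound.

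The second equality in Eq.~\eqref{eq:hybrid-objective} is not needed for this bound. Only the definition $J_{\mathrm{hyb}}=\mathbb E_{P^{\mathrm{hyb}}_\theta}[R]$ is used.

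\textbf{Main obstacle.} The delicate step is the chain-rule identity in the first part, specifically the measurability of the random selected index and the handling of padded trajectories that lack the occurrence. I would handle both by checking that $\{t=t_\star\}\in\sigma(h_t)$ and that padding contributes no policy factors. Once that holds, the rest is standard.
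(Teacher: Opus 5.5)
Your proposal is correct and follows essentially the same route as the paper. You cancel the environment factors and the selected-action factor in the density ratio, and apply the KL chain rule (with the extra nonnegative selected-action term for the comparison to $P_{\theta_0}$). You then bound the return gap by $2R_{\max}\mathrm{TV}$ and Pinsker, using that both objectives agree at the anchor; your explicit check that $\{t=t_\star\}$ is $h_t$-measurable makes precise a step the paper leaves implicit.
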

\begin{proof}
In the trajectory-density ratio $P_\theta/P_\theta^{\mathrm{hyb}}$, the environment factors and selected-action factor cancel. The KL chain rule therefore leaves exactly the conditional KL terms outside $t_\star$. The full KL to the anchor additionally includes the nonnegative selected-action term. Boundedness and Pinsker give
\[
|J(\theta)-J_{\mathrm{hyb}}(\theta)|
\le 2R_{\max}\,\mathrm{TV}(P_\theta,P_\theta^{\mathrm{hyb}})
\le R_{\max}\sqrt{2D_{\mathrm{out}}(\theta)}.
\]
The anchor values agree, so the same bound holds for their gains.
\end{proof}

The square-root order is tight: changing an unselected Bernoulli action from probability $1/2$ to $1/2+u$, with return $\pm R_{\max}$, gives mismatch $2R_{\max}|u|$ and $D_{\mathrm{out}}=2u^2+O(u^4)$.

\paragraph{Behavioral isolation.}
If behavior changes only at the selected occurrence, $D_{\mathrm{out}}=0$ and the exact frozen objective equals full-rollout return, regardless of the selected-action change. Gradient masking alone does not enforce this: shared parameters can change other calls.

\paragraph{When a local update improves the full rollout.}
The score projection in App.~\ref{app:theory} sharpens this comparison near the anchor. Consider a smooth policy on a finite trajectory tree with positive action probabilities. All expectations below use $P_{\theta_0}$. Write $g_t=\nabla_\theta\log\pi_\theta(a_t\mid h_t)|_{\theta_0}$, $s=g_{t_\star}$, and $o=\sum_{t\ne t_\star}g_t$, with zero score for the terminal no-op. Define
\[
\mu_{\mathrm{sel}}=\mathbb E[s(R-\mathbb ER)],\qquad
F_{\mathrm{sel}}=\mathbb E[ss^\top],\qquad
F_{\mathrm{out}}=\mathbb E[oo^\top],\qquad
V_{\mathrm{sel}}^R=\mu_{\mathrm{sel}}^\top F_{\mathrm{sel}}^\dagger\mu_{\mathrm{sel}}.
\]
Here $\mu_{\mathrm{sel}}=\nabla J_{\mathrm{hyb}}(\theta_0)$ and $V_{\mathrm{sel}}^R$ is the return variance explained by the selected score, using the same projection as App.~\ref{app:theory} with terminal return in place of the local label.
\begin{proposition}[Selected-return signal and outside-call drift]\label{prop:local-transfer}
For every parameter direction $u$,
\[
\nabla J(\theta_0)^\top u
\ge \mu_{\mathrm{sel}}^\top u
-\sqrt{\bigl(\operatorname{Var}(R)-V_{\mathrm{sel}}^R\bigr)\,u^\top F_{\mathrm{out}}u}.
\]
A strictly positive right-hand side guarantees improved return for all sufficiently small positive steps along $u$.
\end{proposition}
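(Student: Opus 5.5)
The plan is to write the full-rollout policy gradient at the anchor as the selected-call part plus an outside-call part. Then I would bound the outside part by Cauchy--Schwarz after subtracting from $R$ its projection onto the selected score, as in App.~\ref{app:theory}.

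\textbf{Gradient split.} On a finite trajectory tree with positive probabilities, the likelihood-ratio identity gives $\nabla J(\theta_0)=\mathbb E[(s+o)(R-\mathbb ER)]$, since the environment factors carry no $\theta$-dependence. This yields
\[
\nabla J(\theta_0)^\top u=\mu_{\mathrm{sel}}^\top u+\mathbb E\bigl[(o^\top u)(R-\mathbb ER)\bigr].
\]
The terminal no-op has zero score, and $s=\sum_t\mathbf 1[t=t_\star]g_t$. The indicator $\mathbf 1[t=t_\star]$ is a function of $h_t$, because the selection rule uses only the pre-action history. It remains to show
\[
\mathbb E[(o^\top u)(R-\mathbb ER)]\ge-\sqrt{(\operatorname{Var}(R)-V^R_{\mathrm{sel}})\,u^\top F_{\mathrm{out}}u}.
\]

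\textbf{Orthogonality of scores at different times.} This is the key structural step. For $t'<t$, the factor $g_{t'}$ and both indicators are $h_t$-measurable, and $\mathbb E[g_t\mid h_t]=0$. Hence $\mathbb E[\mathbf 1_{t}g_t\,(\mathbf 1'_{t'}g_{t'})^\top]=0$, where each $\mathbf 1$ is either $\mathbf 1[\cdot=t_\star]$ or its complement. The same tower argument applies in the other time order. At equal times, one indicator in $s$ and the complementary one in $o$ cannot both equal one. Therefore $\mathbb E[s\,o^\top]=0$. I expect this to be the main obstacle: it requires the random selection time $t_\star$ to behave like a stopping time, which is exactly why the construction selects from pre-action history. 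I would write out the tower-property argument carefully here.

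\textbf{Projection and Cauchy--Schwarz.} Define the residual $e=R-\mathbb ER-s^\top F_{\mathrm{sel}}^\dagger\mu_{\mathrm{sel}}$. As in the compatible-function projection of App.~\ref{app:theory}, $\mu_{\mathrm{sel}}$ lies in the range of $F_{\mathrm{sel}}$. Indeed, if $F_{\mathrm{sel}}v=0$, then $s^\top v=0$ almost surely, so $v^\top\mu_{\mathrm{sel}}=0$. Consequently $s^\top F_{\mathrm{sel}}^\dagger\mu_{\mathrm{sel}}$ is the least-squares projection of $R-\mathbb ER$ onto the score span, and orthogonality gives
\[
\mathbb E e^2=\operatorname{Var}(R)-\mu_{\mathrm{sel}}^\top F_{\mathrm{sel}}^\dagger\mu_{\mathrm{sel}}=\operatorname{Var}(R)-V^R_{\mathrm{sel}}.
\]
By the orthogonality step, $\mathbb E[(o^\top u)\,s^\top F_{\mathrm{sel}}^\dagger\mu_{\mathrm{sel}}]=0$, so $\mathbb E[(o^\top u)(R-\mathbb ER)]=\mathbb E[(o^\top u)e]$. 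Cauchy--Schwarz then gives
\[
\bigl|\mathbb E[(o^\top u)e]\bigr|\le\sqrt{\mathbb E e^2\;\mathbb E(o^\top u)^2}=\sqrt{(\operatorname{Var}(R)-V^R_{\mathrm{sel}})\,u^\top F_{\mathrm{out}}u}.
\]
Substituting this into the gradient split proves the displayed inequality.

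\textbf{Conclusion.} If the right-hand side is strictly positive, the directional derivative $\nabla J(\theta_0)^\top u$ is positive. Since $J$ is smooth on the finite tree, $J(\theta_0+\eta u)>J(\theta_0)$ for all sufficiently small $\eta>0$.
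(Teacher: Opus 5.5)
Your proposal is correct and follows the paper's proof essentially step for step: the policy-gradient split $\nabla J(\theta_0)=\mu_{\mathrm{sel}}+\mathbb E[o(R-\mathbb ER)]$, then $\mathbb E[so^\top]=0$ from predictable selection and conditionally zero-mean scores, then projection of the centered return onto $s$ to get a residual $e$ with $\mathbb E e^2=\operatorname{Var}(R)-V_{\mathrm{sel}}^R$ and $\mathbb E[oe]=\mathbb E[oR]$, and finally Cauchy--Schwarz. Your explicit tower-property argument for $\mathbb E[so^\top]=0$ and your range check for $\mu_{\mathrm{sel}}$ fill in details that the paper states in a single line.
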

\begin{proof}
Predictable selection and zero-mean conditional scores give $\mathbb E[so^\top]=0$ and $F_{\mathrm{out}}=\mathbb E\sum_{t\ne t_\star}g_tg_t^\top$. Project centered return onto $s$: the residual $e=R-\mathbb ER-s^\top F_{\mathrm{sel}}^\dagger\mu_{\mathrm{sel}}$ has $\mathbb E e^2=\operatorname{Var}(R)-V_{\mathrm{sel}}^R$ and $\mathbb E[oe]=\mathbb E[oR]$. The policy-gradient identity~\cite{williams1992reinforce} gives $\nabla J(\theta_0)=\mu_{\mathrm{sel}}+\mathbb E[oe]$; Cauchy--Schwarz proves the bound.
\end{proof}
The same matrix controls infinitesimal drift: $D_{\mathrm{out}}(\theta_0+t u)=t^2u^\top F_{\mathrm{out}}u/2+o(t^2)$. Only residual return variation contributes to the first-order return effect of outside-call changes. If $V_{\mathrm{sel}}^R=\operatorname{Var}(R)$, that effect vanishes even when $F_{\mathrm{out}}$ is large. Label-based credit connects to $\mu_{\mathrm{sel}}$ through the calibrated link in Theorem~\ref{thm:rb}.

\paragraph{Proxy error and selected-action drift.}
Suppose $\|\widehat Q-Q_{\theta_0}\|_\infty\le\varepsilon_0$ and define the selected-action KL under \emph{anchor contexts}
\[
D_{\mathrm{sel}}^0(\theta)
:=\mathbb E_{x\sim d_0}\mathrm{KL}(\pi_\theta(\cdot\mid x)\,\|\,\pi_{\theta_0}(\cdot\mid x))
=\mathrm{KL}(P_\theta^{\mathrm{hyb}}\,\|\,P_{\theta_0}).
\]
Then
\begin{equation}
\Delta J(\theta)\ge\Delta\widehat J(\theta)
-R_{\max}\sqrt{2D_{\mathrm{out}}(\theta)}
-2\varepsilon_0\min\!\left\{1,\sqrt{D_{\mathrm{sel}}^0(\theta)/2}\right\}.
\label{eq:proxy-drift-gain}
\end{equation}
Indeed, for $e=\widehat Q-Q_{\theta_0}$, the difference of proxy and exact surrogate gains is $\mathbb E_{d_0}\int(\pi_\theta-\pi_{\theta_0})e\,da$. Its magnitude is at most $2\varepsilon_0\mathbb E_{d_0}\mathrm{TV}(\pi_\theta,\pi_{\theta_0})$; Pinsker and Jensen give \eqref{eq:proxy-drift-gain}. Both error terms vanish at the anchor. The two KL terms use different context distributions: $D_{\mathrm{out}}$ follows current rollouts, whereas $D_{\mathrm{sel}}^0$ follows frozen contexts. Here $\varepsilon_0$ concerns a return-value proxy; the label-to-return scale $\lambda(x)$ in Theorem~\ref{thm:rb} is a separate quantity.

\subsubsection{Optimizing the drift-adjusted gain bound}
For the exact surrogate, parameterize an interval of increasing outside-occurrence drift by $\delta=\sqrt{D_{\mathrm{out}}}$, and let $p(\delta)=\Delta J_{\mathrm{hyb}}$. Proposition~\ref{prop:bias} gives the gain lower bound $G(\delta)=p(\delta)-\kappa\delta$, where $\kappa=\sqrt{2}R_{\max}$.

\begin{corollary}[Optimum of the drift-adjusted lower bound]\label{cor:stop}
If $p$ is continuously differentiable, increasing, and strictly concave on $[0,\delta_{\max}]$, with $p'(0)>\kappa>p'(\delta_{\max})$, then $G$ has a unique interior maximizer satisfying
\[
p'(\delta^\star)=\kappa.
\]
The surrogate continues increasing beyond $\delta^\star$, while its drift-adjusted lower bound decreases.
\end{corollary}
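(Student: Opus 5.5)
The argument is elementary one-variable calculus applied to $G(\delta)=p(\delta)-\kappa\delta$ on $[0,\delta_{\max}]$. The bound itself comes from Proposition~\ref{prop:bias}: since $\Delta J\ge \Delta J_{\mathrm{hyb}}-R_{\max}\sqrt{2D_{\mathrm{out}}}=p(\delta)-\kappa\delta$ with $\kappa=\sqrt2R_{\max}$ and $\delta=\sqrt{D_{\mathrm{out}}}$, $G$ is the drift-adjusted lower bound. Only the hypotheses on $p$ are needed.

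\textbf{Existence of an interior critical point.} Because $p$ is continuously differentiable, $G'(\delta)=p'(\delta)-\kappa$ is continuous on $[0,\delta_{\max}]$. The boundary conditions give $G'(0)=p'(0)-\kappa>0$ and $G'(\delta_{\max})=p'(\delta_{\max})-\kappa<0$. The intermediate value theorem then yields some $\delta^\star\in(0,\delta_{\max})$ with $p'(\delta^\star)=\kappa$.

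\textbf{Uniqueness and maximality.} Strict concavity of a differentiable $p$ makes $p'$ strictly decreasing. Hence $G'$ is strictly decreasing, so it vanishes at exactly one point $\delta^\star$. Moreover $G'>0$ on $[0,\delta^\star)$ and $G'<0$ on $(\delta^\star,\delta_{\max}]$. By the mean value theorem, $G$ is strictly increasing before $\delta^\star$ and strictly decreasing after it. Therefore $\delta^\star$ is the unique global maximizer of $G$ on the closed interval, and it lies in the interior. One could also argue via concavity of $G$, since a linear function minus nothing preserves strict concavity, and a stationary interior point of a strictly concave function is its unique maximizer.

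\textbf{Final clause.} On $(\delta^\star,\delta_{\max}]$, $p$ is still increasing by hypothesis, so the exact surrogate gain $\Delta J_{\mathrm{hyb}}$ keeps rising. Meanwhile $G'<0$ there, so the lower bound strictly decreases.

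The only delicate point is the uniqueness step, which needs strict monotonicity of $p'$ rather than mere concavity. This holds because a strictly concave $C^1$ function has a strictly decreasing derivative: if $p'(a)=p'(b)$ for $a<b$, monotonicity of $p'$ would force $p'$ to be constant on $[a,b]$, making $p$ affine there and contradicting strict concavity. With that settled, the rest is routine.
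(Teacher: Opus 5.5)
Your proof is correct and matches the paper's own argument, which is just the sign change of the strictly decreasing derivative $G'=p'-\kappa$: the intermediate value theorem gives existence, and strict monotonicity of $p'$ gives uniqueness and maximality. You fill in details the paper leaves implicit, including why strict concavity of a $C^1$ function makes its derivative strictly decreasing.
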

This follows from the sign change of the strictly decreasing derivative $G'=p'-\kappa$.

\paragraph{Measurement and checkpoint selection.}
Estimating $D_{\mathrm{out}}$ requires current-policy full rollouts and summed policy KL outside the selected occurrence. A token-averaged KL on training responses is not that quantity. Evaluate the target multi-turn metric on held-out data for checkpoint selection; where the required KL and proxy-error quantities are available, \eqref{eq:proxy-drift-gain} additionally gives a drift-adjusted gain criterion. Refreshing contexts and continuation values defines a new anchor for the same comparison.

\subsubsection{A stylized stopping example}
The closed-form toy trains scalar $\theta$ on surrogate $\widehat J(\theta)=1-e^{-\theta}$ and sets $J(\theta)=\widehat J(\theta)-B(\theta)$, with $B(\theta)=c\,\theta$ and $\sqrt{\mathrm{KL}}\propto(\theta-\theta_0)$. This stipulated linear penalty illustrates marginal crossing; it is not a fitted outside-occurrence KL envelope. Training sees only $\widehat J$, although the true peak is $\theta^\star=-\ln c$ ($p'=B'$).

At $c=0.15$, $\widehat J$ rises $0\!\to\!0.847$ at step~10$\to\!0.969$ at step~60, whereas true $J$ peaks at $0.565$ at step~10 ($\theta=1.874$, near $\theta^\star=-\ln 0.15=1.897$) and falls to $0.449$ by step~60 (KL $0.040\!\to\!0.135$). With patience-4, evaluation noise $\epsilon_{\mathrm{eval}}\!=\!0.02$, and 400 seeds, true-metric stopping averages step~$9.2$ (KL~$0.036$, reward $0.560$); surrogate stopping averages step~$19.0$ (KL~$0.064$, reward $0.547$). The true peak, $0.565$, is about $26\%$ higher than the step~60 reward, $0.449$.

Raising $c$ from $0.08\!\to\!0.45$ moves $\theta^\star\!:2.53\!\to\!0.80$ (peak step $22\!\to\!2$), following the same marginal-crossing calculation as Cor.~\ref{cor:stop}. Figure~\ref{fig:stopping}a shows the turnover in the fixed-parameter toy.

\subsubsection{Empirical trajectories: transfer turnover and aggregate saturation}\label{sec:caseA-kl}
\paragraph{Transfer turnover (Nemotron-Super-120B, turn-0-augmented run).} The highest observed single-eval BFCL \texttt{miss\_func} transfer point is step~15 ($0.545$, generation-KL about $0.0058$). On the $2450$-sample home-buying set, decision-turn validation instead rises $0.602\!\to\!0.718$ to its sampled maximum at step~35. Later transfer checkpoints remain below the step-15 observation as KL generally rises (Figure~\ref{fig:stopping}b). The observed transfer peak therefore precedes the in-domain maximum, illustrating why the two metrics lead to different checkpoint choices.

\paragraph{Aggregate saturation (no-think Gemma-4-26B-A4B category-routed run).} This run is separate from the four-cell study. In-domain validation continues to improve during \texttt{miss\_func} training (App.~\ref{sec:caseA-nothink}), but official BFCL weighted full-suite \emph{Overall} saturates at $59.2/60.2/59.6/59.4/60.1\%$ over steps $25/30/35/40/45$. Training beyond roughly step~30 brings no aggregate gain, while the untargeted \texttt{base} and \texttt{long\_context} categories fall $0.72\!\to\!0.68$ and $0.61\!\to\!0.53$, respectively, over steps $30\!\to\!45$. The narrower three-category \texttt{multi\_turn} mean therefore turns over (about $0.593$ at step~25, falling to $0.556$ by step~45). Because per-step generation-KL is tiny/noisy at learning rate~$10^{-6}$, the in-domain accuracy climb itself ($0.40\!\to\!0.80$) is the clearer measure of training progress.

\begin{figure}[!htbp]
\centering
\begin{tikzpicture}
\begin{groupplot}[group style={group size=2 by 1, horizontal sep=1.7cm}, cbstyle, width=0.48\linewidth, height=5cm]
\nextgroupplot[title={(a) toy: surrogate vs.\ true objective}, xlabel={training step}, ylabel={objective}, ymin=0, ymax=1.05, legend style={at={(current axis.north west)},yshift=\dimexpr\bigskipamount+\baselineskip\relax,anchor=south west}, legend entries={{surrogate objective $\widehat J$},{true objective $J$}}]
\addplot[tolred,mark=*] coordinates {(0,0)(5,0.74)(10,0.847)(15,0.89)(20,0.91)(30,0.94)(40,0.95)(60,0.969)};
\addplot[tolblue,mark=square*] coordinates {(0,0)(5,0.54)(10,0.565)(15,0.56)(20,0.545)(30,0.52)(40,0.505)(60,0.449)};
\draw[densely dotted,black] (axis cs:10,0) -- (axis cs:10,1.0)
  node[pos=0.68,left=2pt,font=\scriptsize,fill=white,inner sep=1pt]{true peak};
\nextgroupplot[title={(b) Nemotron: validation vs.\ transfer}, xlabel={training step}, ylabel={accuracy}, ymin=0.43, ymax=0.74, legend style={at={(0.5,1.22)},anchor=south, legend columns=1}, legend entries={{in-domain validation},{BFCL missing-function transfer (\texttt{miss\_func})}}]
\addplot[tolred,mark=*] coordinates {(5,0.615)(10,0.66)(15,0.683)(20,0.688)(25,0.708)(30,0.715)(35,0.718)(37,0.708)};
\addplot[tolblue,mark=square*] coordinates {(5,0.445)(10,0.49)(15,0.545)(20,0.48)(25,0.51)(30,0.495)(35,0.47)(37,0.535)};
\draw[densely dotted,black] (axis cs:15,0.43) -- (axis cs:15,0.74) node[pos=0.72,right,font=\scriptsize,align=left]{highest observed\\single-eval point};
\end{groupplot}
\end{tikzpicture}
\caption{Checkpoint-selection trajectories in the stylized toy and Nemotron training run.}
\label{fig:stopping}
\floatnote{\textbf{(a)} In the stylized toy, $\widehat J$ rises monotonically while $J$ turns over. \textbf{(b)} The highest observed transfer point in the Nemotron turn-0-augmented run precedes the in-domain validation maximum. These curves illustrate checkpoint-selection behavior; they do not estimate $D_{\mathrm{out}}$ or calibrate the bound.}
\end{figure}

\FloatBarrier

\endgroup

\section{Prompts and canonical strings}
\label{app:prompts}
\begingroup\small
\fvset{fontsize=\scriptsize}

\textbf{Prompt inventory.} The Nemotron-Super-120B and xLAM blocks below are verbatim, including \texttt{\{placeholder\}} tokens.

\subsection{Nemotron tool-use prompts}

\paragraph{Recovery bridge prompt (tool re-availability signal).}
At the decision turn $k{-}1$ the required tool is withheld; at the recovery turn $k$,
training and evaluation use BFCL's canonical tool-reavailability bridge,
\texttt{DEFAULT\_USER\_PROMPT\_FOR\_ADDITIONAL\_FUNCTION\_FC}.

\par\medskip\noindent\begin{minipage}{\linewidth}
\paragraph{Tool-availability judge (turn-0 rows).}
The judge excludes turn-0 requests labeled YES, which indicates that the available
tools can fulfill the request, and retains requests judged to require the withheld tool.
\begin{Verbatim}
_JUDGE_SYSTEM = """You are a careful tool-availability judge for a multi-turn function-calling assistant.

Decide whether the user's request can be reasonably fulfilled by ANY combination of the listed tools. Be strict - answer YES only if a clearly applicable tool exists.

Reply with exactly one word: YES or NO."""

        prompt = [
            {"role": "system", "content": _JUDGE_SYSTEM},
            {"role": "user", "content": (
                f"User request:\n  {user_msg}\n\n"
                f"Available tools:\n{_tools_summary(tools)}\n\n"
                f"Can the user's request be fulfilled? Answer YES or NO."
            )},
        ]
\end{Verbatim}
\end{minipage}\par

\par\medskip\noindent\begin{minipage}{\linewidth}
\paragraph{Pre-bridge assistant messages.}
The \texttt{refusal\_pre\_inject} variants defer or ask about tools at a decision turn
with no ground-truth call. The \texttt{clarification} variants ask for missing parameters.
\begin{Verbatim}[fontsize=\scriptsize]
PRIOR_ASSISTANT_VARIANTS = {
    "refusal_pre_inject": [
        "I don't have a tool that can do that right now - could you enable a function that handles it, or should I keep looking for another way?",
        "None of my currently available tools can do that. Would you like to make a suitable tool available, or shall I try to work around it differently?",
        "I'm not able to do that with the tools I have available. Could you add a tool for it, or should I explore an alternative approach?",
    ],
    "clarification": [
        "I want to make sure I get this right - could you tell me the specific value you'd like me to use?",
        "Could you give me a bit more detail on that so I can fill in the right information?",
        "To do this accurately, which exact value should I use for that?",
    ],
}
\end{Verbatim}
\end{minipage}\par

\subsection{xLAM memory-storage prompts}

\par\medskip\noindent\begin{minipage}{\linewidth}
\paragraph{\texorpdfstring{Shared memory-system instruction (training$\leftrightarrow$evaluation)}{Shared memory-system instruction (training and evaluation)}.}
\begin{Verbatim}[fontsize=\scriptsize]
BACKEND_INSTRUCTION = (
    "You have access to an advanced memory system, consisting of two memory "
    "types 'Core Memory' and 'Archival Memory'. Both type of memory is "
    "persistent across multiple conversations with the user, and can be "
    "accessed in a later interactions. You should actively manage your memory "
    "data to keep track of important information, ensure that it is up-to-date "
    "and easy to retrieve to provide personalized responses to the user later."
    "\n\n"
    "The Core memory is limited in size, but always visible to you in context. "
    "The Archival Memory has a much larger capacity, but will be held outside "
    "of your immediate context due to its size."
)

def build_unified_system_message(memory_state: str) -> str:
    return f"{BACKEND_INSTRUCTION}\n\n{memory_state}"
\end{Verbatim}
\end{minipage}\par

\par\medskip\noindent\begin{minipage}{\linewidth}
\paragraph{Shared memory-full error messages (training$=$evaluation).}
The training data and BFCL evaluator use the same error JSON for failed memory-add calls (training$\leftrightarrow$evaluation).

\noindent
\makeatletter
\begin{minipage}[t]{\dimexpr\linewidth/\tw@-\columnsep\relax}
\begin{Verbatim}[fontsize=\scriptsize]
# Training data (memory-full trigger):
    messages.append({
        "role": "tool",
        "content": '{"error": "Core memory is full. Please clear some entries."}',
        "name": "core_memory_add",
        "tool_call_id": f"call_{idx}_1",
        "tool_calls": None
    })
    ...
        "content": '{"error": "Long term memory is full. Please clear some entries."}',
        "name": "archival_memory_add",

\end{Verbatim}
\end{minipage}\hfill
\begin{minipage}[t]{\dimexpr\linewidth/\tw@-\columnsep\relax}
\begin{Verbatim}[fontsize=\scriptsize]

# BFCL eval checker:
        if len(self.core_memory) >= MAX_CORE_MEMORY_SIZE:
            return {"error": "Core memory is full. Please clear some entries."}
        ...
            return {"error": "Long term memory is full. Please clear some entries."}
\end{Verbatim}
\end{minipage}
\makeatother
\end{minipage}\par

\par\medskip\noindent\begin{minipage}{\linewidth}
\paragraph{Keyword-extraction prompt.}
GPT-4o extracts 2--5 required terms offline into \texttt{extra\_info.keywords}.
The prompt asks it to retain meaningful negations; RL scores storage text by deterministic exact matching.
The system prompt and five examples follow.
\begin{Verbatim}
SYSTEM_PROMPT = """You are a keyword extraction assistant. Given a ground truth value for a memory entry, extract 2-5 key factual terms or phrases that MUST be present in any correct version of this entry.
\end{Verbatim}
\begin{Verbatim}

Focus on:
- Proper nouns (names, places)
- Numbers, dates, times
- Key actions or states (e.g., "high cholesterol", "not married")
- Essential descriptors that distinguish this entry
\end{Verbatim}
\begin{Verbatim}[fontsize=\scriptsize]

Rules:
- Return ONLY a JSON array of lowercase strings
- Each keyword should be 1-3 words
- For very short values (1-2 words), return the value itself as the only keyword
- Avoid generic words like "user", "the", "is", "a"
- Include negations when they are meaningful (e.g., "not" in "not married")"""

FEW_SHOT_EXAMPLES = [
    {
        "value": "blue",
        "keywords": '["blue"]',
    },
    {
        "value": "Blood work 2025-11: cholesterol high (user reported). No numeric values provided. Recommend lifestyle changes and follow-up with PCP.",
        "keywords": '["blood work", "2025-11", "cholesterol high", "lifestyle changes", "pcp"]',
    },
    {
        "value": "User likes art, particularly the impressionists and pointillism. However, user says they are not a good artist themselves.",
        "keywords": '["art", "impressionists", "pointillism", "not a good artist"]',
    },
    {
        "value": "Router disconnects every 20 minutes; first reported by user.",
        "keywords": '["router", "disconnects", "20 minutes"]',
    },
    {
        "value": "User describes themselves as imaginative.",
        "keywords": '["imaginative"]',
    },
]
\end{Verbatim}
\end{minipage}

\endgroup
\end{document}